\definecolor{mydarkblue}{rgb}{0,0.08,0.45}
\definecolor{DarkBlue}{rgb}{0,0,0.55}
\definecolor{mygreen}{rgb}{0,0.7,0.7}
\DeclareMathAlphabet{\mathsfit}{\encodingdefault}{\sfdefault}{m}{sl}
\SetMathAlphabet{\mathsfit}{bold}{\encodingdefault}{\sfdefault}{bx}{n}
\def\gA{{\mathcal{A}}}
\def\gD{{\mathcal{D}}}
\def\gF{{\mathcal{F}}}
\def\gM{{\mathcal{M}}}
\def\gN{{\mathcal{N}}}
\def\gP{{\mathcal{P}}}
\def\gS{{\mathcal{S}}}
\def\sR{{\mathbb{R}}}
\newcommand{\E}{\mathbb{E}}
\DeclareMathOperator*{\argmax}{arg\,max}
\DeclareMathOperator*{\argmin}{arg\,min}
\newenvironment{enumerate*}%
{\begin{enumerate}[leftmargin=*,topsep=0pt]%
		\setlength{\itemsep}{0pt}%
		\setlength{\parskip}{0pt}}%
	{\end{enumerate}}
\newcommand{\norm}[1]{\|{#1} \|}
\theoremstyle{plain}
\newtheorem{theorem}{Theorem}[section]
\newtheorem{proposition}[theorem]{Proposition}
\newtheorem{lemma}[theorem]{Lemma}
\theoremstyle{definition}
\newtheorem{definition}[theorem]{Definition}
\newtheorem{assumption}[theorem]{Assumption}
\theoremstyle{remark}
\def\showcomments{1}
\newcommand\numberthis{\addtocounter{equation}{1}\tag{\theequation}}
\newcommand{\bM}{\overline{\mathcal{M}}}
\newcommand{\Mcal}{\mathcal{M}}
\newcommand{\br}{\overline{r}}
\newcommand{\bV}{\overline{V}}
\newcommand{\bPia}{\overline{\Pi}_{Aug}}
\newcommand{\hP}{\widehat{\mathcal{P}}}
\newcommand{\hb}{\widehat{b}}
\newcommand{\bpi}{\overline{\pi}}
\newcommand{\hpi}{\widehat{\pi}}
\newcommand{\hc}{\widehat{c}}
\newcommand{\Pia}{\Pi_{Aug}}
\newcommand{\CVAR}{\mathsf{CVaR}}
\newcommand{\bR}{\overline{R}}
\newcommand{\bCVAR}{\overline{\mathsf{CVaR}}}
\newcommand{\hphi}{\widehat{\phi}}
\newcommand{\hpsi}{\widehat{\psi}}
\newcommand{\bphi}{\overline{\phi}}
\newcommand{\bpsi}{\overline{\psi}}
\newcommand{\bQ}{\overline{Q}}
\newcommand{\Pcal}{\mathcal{P}}
\newcommand{\tpi}{\widetilde{\pi}}
\newcommand{\hbV}{\widehat{\overline{V}}}
\newcommand{\TO}{\tilde{O}}
\newcommand{\hp}{\widehat{P}}
\newcommand{\Ecal}{\mathcal{E}}
\newcommand{\htpi}{\widehat{\widetilde{\pi}}}
\newcommand{\hV}{\widehat{V}}
\newcommand{\pidisc}{\pi^*_{\mathsf{dis}}}
\newcommand{\Gcal}{\mathcal{G}}
\newcommand{\bd}{\overline{d}}
\newcommand{\mainalg}{\textit{ELA}}
\newcommand{\lsvialg}{\textit{CVaR-LSVI}}
\newcommand{\discalg}{\textit{ELLA}}
\title{Provably Efficient CVaR RL in Low-rank MDPs}
\author{
	 Yulai Zhao\footnote{Equal contribution. Listing order is random.} \thanks{ Princeton
University. Email: \texttt{\{yulaiz,wenhao.zhan,jasonlee\}@princeton.edu}}
	 \and 
	 Wenhao Zhan\footnotemark[1] \footnotemark[2]
   \and
   Xiaoyan Hu\footnotemark[1] \thanks{
	 The Chinese University of Hong Kong. Email: \texttt{\{xyhu21,farnia\}@cse.cuhk.edu.hk}}
   \and
   Ho-fung Leung\thanks{Independent Researcher. Email: \texttt{ho-fung.leung@outlook.com}}
   \and
   Farzan Farnia\footnotemark[3]
   \and 
   Wen Sun\thanks{
  Cornell University. Email: \texttt{ws455@cornell.edu}}
   \and
   Jason D. Lee\footnotemark[2]
}
\begin{document}

\maketitle

\begin{abstract}
We study risk-sensitive Reinforcement Learning (RL), where we aim to maximize
the Conditional Value at Risk (CVaR) with a fixed risk tolerance $\tau$. 
Prior theoretical work studying risk-sensitive RL focuses on the tabular Markov Decision Processes (MDPs) setting.  
To extend CVaR RL to settings where state space is large, function approximation must be deployed. 
We study CVaR RL in low-rank MDPs with nonlinear function approximation.  Low-rank MDPs assume the underlying transition kernel admits a low-rank decomposition, but unlike prior linear models, low-rank MDPs do not assume the feature or state-action representation is known. 
We propose a novel Upper Confidence Bound (UCB) bonus-driven algorithm to carefully balance the interplay between exploration, exploitation, and representation learning in CVaR RL. 
We prove that our algorithm achieves a sample complexity of $\tilde{O}\left(\frac{H^7 A^2 d^4}{\tau^2 \epsilon^2}\right)$ to yield an $\epsilon$-optimal CVaR, where $H$ is the length of each episode, $A$ is the capacity of action space, and $d$ is the dimension of representations.
Computational-wise, we design a novel discretized Least-Squares Value Iteration (LSVI) algorithm for the CVaR objective as the planning oracle and show that we can find the near-optimal policy in a polynomial running time with a Maximum Likelihood Estimation oracle. 
To our knowledge, this is the first provably efficient CVaR RL algorithm in low-rank MDPs.
\end{abstract}

\section{Introduction}
\label{sec:intro}
As a widely adopted framework to deal with challenging sequential decision-making problems, Reinforcement learning (RL)~\citep{sutton2018reinforcement} has demonstrated many empirical successes, e.g., popular strategy games~\citep{silver2016mastering,silver2017mastering, vinyals2019grandmaster}. However, the classical RL framework often prioritizes the maximization of the expected cumulative rewards solely, which renders it not suitable for many real-world applications that face possible failure or safety concerns. In high-stake scenarios such as autonomous driving~\citep{isele2018safe,wen2020safe},
finance~\citep{davis2008risk, wang2021deeptrader,filippi2020conditional} and healthcare~\citep{ernst2006clinical,coronato2020reinforcement}, optimizing only the expected cumulative rewards can produce policies that underestimate the risks associated with rare but catastrophic events. To mitigate this limitation, risk-sensitive decision-making
has recently grown more popular~\citep{pmlr-v206-hu23b}.

To this end, risk measures like Conditional Value-at-Risk (CVaR)~\citep{rockafellar2000optimization} have been integrated into the RL-based systems as a performance criterion, yielding a more balanced approach that encourages policies to avoid high-risk outcomes. 
As a popular tool for managing risk, the CVaR metric is widely used in robust RL~\citep{hiraoka2019learning}, distributional RL~\citep{achab2022distributional}, and portfolio optimization~\citep{krokhmal2002portfolio}.
CVaR quantifies the expected return in the worst-case scenarios.\footnote{Standard CVaR definition considers average worst-case \emph{loss}, thus a lower value is more desirable. However, in this work, we aim to obtain a higher CVaR in the RL context as we are maximizing rewards.} 
For a random variable $X$ with distribution $P$ and a confidence level $\tau \in (0,1)$, the CVaR at confidence level $\tau$ is defined as
$$
\CVAR_\tau(X) = \sup_{b \in \mathbb{R}} \left[ b - \frac{1}{\tau} \E_{X \sim P} (b-X)^+ \right]
$$
where $x^+ = \max(x, 0)$. 
In this paper, we consider the random variable $X$ as the cumulative reward of a specific policy where randomness comes from the MDP transitions and the policy itself. We aim to maximize this objective to capture the average worst values of the returns distribution at a certain risk tolerance level $\tau$.

The most related works to this paper are~\citet{bastani2022regret} and~\citet{wang2023near}. Specifically, ~\citet{bastani2022regret} proved the first regret bounds for risk-sensitive RL with CVaR metric (CVaR RL), and \citet{wang2023near} improved the results to achieve the minimax optimal regret. However, their algorithms are restricted to the tabular setting, so they are inefficient when the state space is large. To overcome such limitation, we introduce function approximation to the MDP structure and consider \textbf{low-rank MDPs}~\citep{jiang2017contextual} in this paper. 

In low-rank MDPs, the key idea is to exploit the (high-dimensional) structure in the model transitions of the MDP by representing them with a lower-dimensional structure. 
This is often implemented by assuming the MDP transition kernels admit a low-rank factorization, i.e., there exist two \emph{unknown}
mappings $\psi(s'), \phi(s, a)$, such that the probability of
transiting to the next state $s'$ under the current state and action $(s, a)$ is $P(s'|s, a)=\psi(s')^\top \phi(s,a)$. 
Low-rank MDPs generalize popular RL models including tabular MDPs, linear MDPs \citep{jin2020provably} and block MDPs \citep{du2019provably,efroni2021provable}, and are widely applied in practice~\citep{zhang2020invariant,zhang2020learning,sodhani2021multi,sodhani2022block}. 

Unlike linear MDPs, since the underlying $\psi$ and $\phi$ are unknown, we need to carefully balance representation learning, exploration, and worst-case failure stakes in low-rank MDPs. The application of non-linear function approximation further complicates the problem, for controlling model errors state and action-wise becomes much harder~\citep{wang2023near}. 
Moreover, even within a given MDP, planning in CVaR RL is not a straightforward task as the CVaR metric exhibits a non-linear structure\footnote{In RL, planning refers to determining the optimal policy and optimal value function within a given MDP.} (in stark contrast to standard risk-neutral RL), which results in extra computational overhead.
All of these challenges call for a more comprehensive algorithm design for the risk-sensitive RL in low-rank MDPs.  


In this paper, we aim to fill a gap in the existing body of knowledge by exploring the interplay between risk-averse RL and low-rank MDPs. We summarize our contributions as follows.
\paragraph{Contributions.} First, we design an oracle-efficient representation learning algorithm called \mainalg{} (\textit{R\textbf{E}prensentation \textbf{L}earning for CV\textbf{A}R}), which optimizes the CVaR metric in low-rank MDPs. \mainalg{} leverages an MLE oracle to learn the model dynamics while simultaneously constructing Upper Confidence Bound (UCB)-type bonuses to encourage exploration of the unknown environment. 
We provide a comprehensive theoretical analysis of the algorithm, demonstrating that \mainalg{} would provide an $\epsilon$-optimal CVaR with $\TO(1/\epsilon^2)$ samples. To the best of our knowledge, \mainalg{} is the first provably sample-efficient algorithm for CVaR RL in low-rank MDPs.

Second, to improve the computational complexity of \mainalg{} planning, we introduce a computationally efficient planning oracle, which, when combined with \mainalg, leads to the \discalg{} (\textit{R\textbf{E}prensentation \textbf{L}earning with \textbf{L}SVI for CV\textbf{A}R}) algorithm. This algorithm leverages least-squares value iteration with discretized rewards to find near-optimal policies via optimistic simulations within the learned model. Importantly, the computational cost solely depends on the dimension of representations rather than the state space size. We show that \discalg{} requires a polynomial running time in addition to polynomial calls to the MLE oracle.




\section{Related Work}
\label{sec:related_work}
\paragraph{Low-rank MDPs}
Theoretical benefits of low-rank structure in MDPs have been broadly explored in various works~\citep{jiang2017contextual,sun2019model,du2021bilinear,sekhari2021agnostic,huang2023reinforcement}. 
In a contextual decision process (a generic RL model with rich observations and function approximation) with a low Bellman rank, OLIVE~\citep{jiang2017contextual} yielded a near-optimal policy with polynomial samples.
Additionally,~\citet{sun2019model} introduced provably efficient algorithms based on a structural parameter called the witness rank, demonstrating that the witness rank is never larger than the Bellman rank~\citep{jiang2017contextual}. Despite their provable efficiency, these algorithms lack computational efficiency.


Leveraging Maximum Likelihood Estimation (MLE) as its computation oracle, Flambe~\citep{agarwal2020flambe} proposed the first computationally efficient algorithm that was also provably efficient in low-rank MDPs. Following a similar setup, Rep-UCB~\citep{uehara2022representation}  improves the sample complexity dependencies of Flambe. The key to the improvements is a careful tradeoff between exploration and exploitation by combining the reward
signal and exploration bonus.


\paragraph{CVaR RL}

There is a long line of works studying (static) CVaR in RL, which refers to the CVaR of \emph{accumulative reward} beyond a certain risk threshold~\citep{chow2014algorithms,chow2015risk,tamar2015optimizing,bastani2022regret,wang2023near}. For tabular MDPs, much work has been done on value-based algorithms~\citep{chow2015risk,stanko2019risk}.  
  However, these results often require the planner to know the model transitions of the MDPs, which is generally infeasible.
Recently, there has been a growing interest in the more general setting where the unknown model transitions are learned through online interactions~\citep{yu2018approximate,bastani2022regret,wang2023near}. However, their results are restricted to the tabular setting and cannot be combined with function approximation, where the state space is often enormous.

Prior works have also studied risk-sensitive RL in the context of other risk measures. Specifically,~\citet{du2022provably} proposed Iterated CVaR RL, also known as dynamic CVaR, and~\citet{chen2023provably} demonstrates regret guarantees for Iterated CVaR RL with general function approximations. However, iterated CVaR is intrinsically different from the static CVaR in our setting.  Iterated CVaR quantifies the worst $\tau$-percent performance \textbf{at each step} of decision-making. Such a definition allows the agent to control the risk throughout the decision process tightly, whereas our setting aims to maximize the CVaR of the \textbf{total} reward. Therefore, their algorithm designs and analysis techniques do not apply to our tasks.

Our work also focuses on the static CVaR in RL. Specifically, we present the first sample-efficient algorithm for optimizing the static CVaR metric that carefully balances the interplay between risk-averse RL and low-rank MDP structure. Furthermore, we design a computationally efficient planning oracle that makes our algorithm only require polynomial running time with an MLE oracle. 

\section{Preliminaries}
\label{sec:pre}
\paragraph{Notations} We will frequently use $[H]$ to denote the set $\{1, \cdots , H\}$. Denote  $\Delta(\gS)$ as the distribution over space $\gS$. Let $U(\gA)$ be the uniform distribution over the action space.
In this work, we use
the standard $O(\cdot)$, $\Omega(\cdot)$ and $\Theta(\cdot)$ notations to hide universal
constant factors and use $\Tilde{O}(\cdot)$ to hide logarithmic factors.
Please see Table~\ref{LoN} for a comprehensive list of notations used in this work.





\subsection{Low-rank Episodic MDP}

We consider an episodic MDP $\gM$ with episode length $H \in \gN$, state space $\gS$, and a
finite action space $\gA$. 
 At each episode
$k \in [K]$, a trajectory $\tau = (s_1, a_1, s_2, a_2, \cdots , s_{H}, a_{H})$
is generated by an agent, where (a) $s_1 \in \gS$ is a \emph{fixed} starting state,\footnote{Note that any $H$-length episodic MDP with a stochastic initial state is equivalent to an $(H+1)$-length MDP with a dummy initial state $s_0$.} (b) at step $h$, the agent chooses action according to a history-dependent policy $a_h \sim \pi(\cdot| \tau_{h-1},s_h)$ where $\tau_{h-1}:=(s_1,a_1,\cdots,a_{h-1})$ denotes the history and (c) the model transits to the next state $s_{h+1} \sim P^*_h(\cdot | s_h, a_h)$. The agent repeats these steps till the end of the episode. For each time step, operators $P_h^*: \gS \times \gA \xrightarrow[]{} \Delta(\gS )$ denote the (non-stationary) transition dynamics, whereas $r_h : \gS \times \gA \xrightarrow{} \Delta([0,1])$ is the (immediate) reward distribution the agent could receive from deploying a certain action at a specific state. We use $\Pi$ to denote the class of all history-dependent policies. Below, we proceed to the low-rank MDP definition.

\begin{definition}[Low-rank episodic MDP]
    The transition kernel $\gP^*=\{P^*_h:\mathcal{S}\times\mathcal{A}\mapsto\Delta(\mathcal{S})\}_{h\in[H]}$ admits a low-rank decomposition with rank $d$ if there exist two embedding functions $\phi^*:=\{\phi_h^*:\mathcal{S}\times\mathcal{A}\mapsto\mathbb{R}^d\}_{h\in[H]}$ and $\psi^*:=\{\psi_h^*:\mathcal{S}\mapsto\mathbb{R}^d\}_{h\in[H]}$ such that
    \begin{equation}
        P^*_h(s'|s,a)=\langle\psi_h^*(s'),\phi_h^*(s,a)\rangle
    \end{equation}
    where $\lVert\phi_h^*(s,a)\rVert_2\leq 1$ for all $(h,s,a)\in[H]\times\mathcal{S}\times\mathcal{A}$, and for any function $g:\mathcal{S}\mapsto[0,1]$ and $h\in[H]$, it holds that $\lVert\int_{s\in\mathcal{S}}\psi_h^*(s)g(s)ds\rVert_2\leq\sqrt{d}$. Low-rank episodic MDP admits such a decomposition of $\gP^*$.
\end{definition}

We study the function approximation setting where the state spaces $\gS$ can be enormous and even infinite. To generalize across states, assume access to two embedding classes $\Psi \subset \{\gS \times \gA \xrightarrow{} \sR^d\}$
and $\Phi \subset \{\gS \xrightarrow{} \sR^d\}$, which are used  to identify the true embeddings ($\psi^*, \phi^*$). Formally, we need the following realizability assumption, which is proven essential for obtaining performance guarantees independent of the state space size in low-rank MDPs~\citep{agarwal2020flambe}.
\begin{assumption}
There exists a model class $\mathcal{F}=(\Psi, \Phi)$ such that $\psi^*_h\in\Psi,\phi^*_h\in\Phi$, $\forall h \in [H]$.
\end{assumption}
We expect to obtain sample complexity that scales logarithmically with the (finite) cardinality of the model class $\gF$. Extensions to the infinite function classes with complexity measures are possible. See more discussions in~\citep{agarwal2020flambe}.


\subsection{Risk-Sensitive RL and Augmented MDP}

We study risk-sensitive RL with the CVaR metric. Throughout the paper, let $\tau\in(0,1]$ be a fixed risk tolerance. First, we recall the classical definition: for a random variable $X\in [0,1]$ is from distribution $P$, the \emph{conditional-value-at-risk}~(CVaR) corresponding to the risk tolerance $\tau$ is defined as
\begin{equation}
 \label{eq_cvar}
 \CVAR_\tau(X):=\sup_{c\in[0,1]}\left\{c-\tau^{-1}\cdot\E_{X\sim P}[(c-X)^+]\right\}
\end{equation}
where $(x)^+ \coloneqq \max(x,0)$ for any $x \in \sR$.
Interestingly, the supremum in the expression is achieved when $c$ is set as the $\tau$-th percentile (unknown before planning), also known as \emph{value-at-risk}~(VaR), i.e., $x_\tau = \inf \{ x \in \mathbb{R}: P(X \le x) \ge \tau\}$.

In risk-sensitive RL, $X$ represents the stochastic cumulative reward accumulated over $H$ successive actions and state transitions: $X = \sum_{h=1}^H r_h$. This multi-step nature of risk-sensitive RL brings the dynamic planning structure to the setting and makes it difficult. We may make an intuitive insight of $c$ by considering it as a \emph{initial budget}, which affects the agent's action selection and needs to be carefully managed during the planning. 
Particularly, after receiving a random reward $r_h$ at each timestep $h\in[H]$, the learner deducts it from the current budget, i.e., $c_{h+1}=c_h-r_h$ where $c_h$ is the remaining budget and $c_1=c$.
The main task of risk-sensitive RL is to interplay carefully between policy planning, exploration, and budget control. Inspired by this observation, we incorporate the available budget as an additional state variable that could impact the agent's choice of action. Formal descriptions of the augmented MDP are introduced below.
\paragraph{Augmented MDP} We introduce the augmented MDP framework~\citep{bauerle2011markov} to study CVaR RL, which augments the state space $\gS$ in classic episodic MDP to $\gS_\textup{Aug} = \gS \times [0, H]$ that includes the budget variable as an additional state. The transition kernel of the state and the immediate reward distribution are the same as the original MDP $\gM$. 
Inspired by the observation above, we consider policies defined on the augmented state space $\Pi_\textup{Aug} := \{\pi: \gS_\textup{Aug} \xrightarrow{} \Delta(\gA)\}$.

In the augmented MDP, the agent rolls out an augmented policy $\pi\in\Pi_\textup{Aug}$ with initial budget $c_1\in [0,H]$ as follows. At the beginning of an episode, the agent observes the augmented state $(s_1,c_1)$, selects action $a_1 \sim \pi(\cdot|s_1,c_1)$, receives reward $r_1\sim r_1(s_1,a_1)$, and transits to $s_2\sim P_1^*(\cdot|s_1,a_1)$. Most importantly, the available budget is also updated: $c_2=c_1-r_1$. The agent then chooses an action based on the $(s_2,c_2)$ pair. The procedure repeats for $H$ times until the end of the episode.
For any $\pi \in \Pi_\textup{Aug}$, 
the (augmented) $Q$-function is defined as 
\begin{equation*}  Q_{h,\gP^*}^\pi(s,c,a):=\E_{\pi,\gP^*}\left[\left(c_h-\sum_{t=h}^H r_t(s_t,a_t)\right)^+\middle|s_h=s,c_h=c,a_h=a\right].
\end{equation*}
for any $(h,s,c,a)\in[H]\times\gS\times[0,H]\times\gA$ and the (augmented) value function is defined as
\begin{equation}
\label{eq_val_func}
    V_{h,\gP^*}^\pi(s,c):=\E_{\pi,\gP^*}\left[\left(c_h-\sum_{t=h}^H r_t(s_t,a_t)\right)^+\middle|s_h=s, c_h = c\right].
\end{equation}
for any $(h,s,c)\in[H]\times\gS\times[0,H]$. The Bellman equation is given by
\begin{gather*}    V_{h,\gP^*}^\pi(s,c)=\mathop{\E}\limits_{a\sim\pi_h(\cdot|s,c)}Q_{h,\gP^*}^\pi(s,c,a)\\
 Q_{h,\gP^*}^\pi(s,c,a)= \mathop{\E}\limits_{s'\sim P_h^*(\cdot|s,a),r\sim r_h(s,a)}V_{h+1}^\pi(s',c-r).
\end{gather*} 

\paragraph{Goal metric.} In this paper, we aim to find the optimal history-dependent policy to maximize the CVaR objective, i.e.,
\begin{align*}
 \CVAR_\tau^*&:=\max_{\pi\in\Pi} \CVAR_{\tau}(R(\pi))=\sup_{\pi\in\Pi,c\in[0,H]}\left\{c-\tau^{-1}\cdot\E[(c-R(\pi))^+]\right\}\\
&=\sup_{c\in[0,H]}\left\{c-\min_{\pi\in\Pi}\tau^{-1}\cdot\E[(c-R(\pi))^+]\right\},
\end{align*}
where $R(\pi)$ is the random cumulative reward of policy $\pi$ in $\gM$. Nevertheless, it is known that $\min_{\pi\in\Pi}\{\tau^{-1}\cdot\E[(c-R(\pi))^+]\}$ can be attained by an augmented policy $\pi^*\in\Pi_\textup{Aug}$ with initial budget $c$ \citep{wang2023near}. Thus we can indeed focus on searching within $\pi\in\Pi_\textup{Aug}$:
\begin{align*}
 \CVAR_\tau^*=\sup_{c\in[0,H]}\left\{c-\min_{\pi\in\Pi_\textup{Aug}}\tau^{-1}\cdot\E[(c-R(\pi,c))^+]\right\}
\end{align*}
where we overload $R(\pi,c)$ to denote the stochastic cumulative reward  when the agent rolls out the augmented policy $\pi$ with initial budget $c$ in augmented MDP. Furthermore, from the definition of the augmented value function, we know that this objective is equivalent to
\begin{align}
\label{eq_cvar_aug}
    \CVAR_\tau^* = \max_{c \in [0,H]} \left[ c - \frac{1}{\tau} \min_{\pi \in \Pi_\textup{Aug}} V_{1, \gP^*}^{\pi}(s_1, c) \right]:=\CVAR_\tau(R(\pi^*,c^*)) ,
\end{align}
 where $\pi^*\in\Pia$ is the optimal augmented policy and $c^*\in[0,H]$ is the optimal initial budget. Therefore, our goal can be summarized as finding the optimal pair of augmented policy and initial budget $(\pi^*,c^*)$.

\section{Algorithm}
\label{sec:alg}

In this section, we present the \mainalg{} algorithm for risk-sensitive RL in low-rank MDPs. The pseudo-code is listed in Algorithm~\ref{alg:1}. 
The algorithm is iterative in nature, where the $k$-th episode proceeds in three folds: (1) we collect new data by rolling in with the exploration policy $\pi^{k-1}$ starting from the initial budget $c^{k-1}$. (2) Then, all transitions collected so far are used in two aspects. First, we pass all transition tuples to the MLE oracle (Line 11). The MLE oracle returns embedding functions $(\widehat{\psi}_h, \widehat{\phi}_h)$ for each $h$, which determine the model. Second, we compute the exploration bonus using the latest learned representation $\widehat{\phi}$. 
(3) The algorithm performs VI on the learned model with the bonus-enhanced reward signal to obtain the exploration policy-budget pair $(\pi^k,c^k)$ we use in the next iteration. After $K$ iterations, we output the current model and all policy-budget pairs. Next, we illustrate more details about these steps.

\subsection{Data Collection} During training, we collect two (disjoint) sets of transition tuples to compute the bonus terms and estimate the transition kernels. These two datasets are different in their (marginalized) distributions and facilitate the regret analysis in Section~\ref{sec:anal}. Next, we clarify how data are collected and added to the two sets.

To collect a new transition tuple for each dataset $\gD_h$ and $\widetilde{\gD}_h$ ($\forall h \in [H-1]$), at the $k$-th iteration, the algorithm roll outs policy $\pi^{k-1}$ with initial budget $c^{k-1}$ to obtain two trajectories. The difference occurs at the $(h-1)$-th timestep. Particularly, to obtain $(s_h,a_h,\Tilde{s}_{h+1})$ for $\gD_h$, the algorithm \emph{keeps on} to execute policy $\pi^{k-1}$ for one more step and observes state $s_h$. Then, a uniform action $a_h\sim U(\gA)$ is taken to receive the next-state $\Tilde{s}_{h+1}\sim P_h^*(\cdot|s_h,a_h)$. However, to obtain $(\Tilde{s}_h,\Tilde{a}_h,s'_{h+1})$ for $\widetilde{\gD}_h$, the algorithm takes \emph{two consecutive} uniform actions at timesteps $h-1$ and $h$, i.e., $a_{h-1}\sim U(\gA),\tilde{s}_h\sim P_{h-1}^*(\cdot|s_{h-1},a_{h-1}),\tilde{a}_h\sim U(\gA)$, and receives state $s'_{h+1}$. Intuitively, dataset $\gD_h$ reveals the behavior of the exploration policy-budget pair $(\pi^k,c^k)$ up to the $h$-th timestep, which facilitates the design of bonus terms~(cf. Lemma~\ref{lem_aoid}). Meanwhile, dataset $\widetilde{\gD}_h$ enhances the exploration and leads to improved estimates of transition kernels (cf. Lemma~\ref{lem_osbi}).

At first sight, the data collection procedure requires $2(H - 1)$ trajectories per iteration (i.e.,
one trajectory for $\gD_h$ and one trajectory for $\widetilde{\gD}_h$). To save sample (trajectory) complexity, we collect transition tuples for $\gD_h$ and $\widetilde{\gD}_h$ at the $h, (h-1)$ steps in one go, respectively~(Lines 8-10). Therefore, the algorithm only requires $H$ trajectories per iteration.
For both sets, new transition tuples are concatenated with the existing data to perform representation learning, i.e., learning a factorization and a representation by MLE (Line 11). 


\subsection{Representation Learning and Bonus-driven Value Iteration}

\paragraph{MLE oracle} In the function approximation setting, the agent must estimate the model structure as accurately as possible. 
Collected transition tuples are used to compute model transitions through the MLE oracle. As a general approach used to estimate the parameters of a probability model, MLE is also gaining more focus in low-rank MDPs~\citep{agarwal2020flambe,uehara2022representation}. 



\paragraph{Value Iteration} Based on the learned model, the algorithm runs Value-Iteration~(VI) with the exploration bonus term. In risk-sensitive RL, for any $\pi \in \Pi_{Aug}$ and $(h,s,c)\in[H]\times\gS\times[0,H]$, we define value function enhanced with exploration bonus $b_h : \gS \times \gA \xrightarrow{} \sR$ as 
\begin{equation}
\label{105}
    V_{h,\gP,b}^\pi(s,c):=\E_{\pi,\gP}\left[\left(c_h-\sum_{h'=h}^H r_{h'}(s_{h'},a_{h'})\right)^+-\sum_{h'=h}^H b_{h'}(s_{h'},a_{h'})\middle| s_{h}=s,c_h=c\right]
\end{equation}
where we deduct the exploration bonus term because the agent desires to \emph{minimize} the value function~\eqref{eq_cvar_aug} in risk-sensitive RL.
Such a value function is used to perform VI and update policy on the learned model $\widehat{P}$, since the learner has no prior knowledge of the \emph{real} model transitions. Therefore, obtaining an accurate estimation of the model determines the quality of the output policy. 

In Algorithm~\ref{alg:1}, we assume the learner has access to exact VI (Line~15). However, we remark that this step is not computationally efficient due to the continuity of $c$ and potentially large state space $\gS$.
To overcome such a computational barrier arising from the nature of controlling risk while planning, in Section~\ref{sec:lsvi-disc},
we provide a computationally efficient planning oracle that performs LSVI with discretized reward function (with sufficiently high precision). We rigorously prove that we only need polynomial running time with the MLE oracle to output a near-optimal CVaR.

\begin{algorithm}[H]
\caption{\mainalg}
\label{alg:1}
\begin{algorithmic}[1]
\REQUIRE Risk tolerance $\tau\in(0,1]$, number of iterations $K$, parameters $\{\lambda^k\}_{k\in[K]}$ and $\{\alpha^k\}_{k\in[K]}$, models $\gF = \{\Psi,\Phi \}$, failure probability $\delta\in(0,1)$.
\STATE Set datasets $\gD_h,\widetilde{\gD}_h \leftarrow \emptyset$  for each $h \in [H-1]$.
\STATE Initialize the exploration policy $\pi^0\leftarrow\{\pi_h^0(s,c)= U(\mathcal{A}),\text{for any }(s,c)\in\mathcal{S}\times[0,H]\}_{h\in[H]}$.
\STATE Initialize the budget $c^0\leftarrow 1$.
\FOR{iteration $k=1, \dots, K$}
    \STATE Collect a tuple $(\Tilde{s}_1, \Tilde{a}_1, s'_2)$ by taking $\Tilde{a}_1 \sim  U(\gA)$, $s'_2 \sim P_1^*(\cdot|\Tilde{s}_1, \Tilde{a}_1)$.
    \STATE Update $\widetilde{\gD}_1\leftarrow\widetilde{\gD}_1\cup\{(\Tilde{s}_1,\Tilde{a}_1,s'_2)\}$.
\FOR{$h = 1, \cdots, H-1$}
    \STATE Collect two transition tuples $(s_h, a_h, \Tilde{s}_{h+1})$ and $(\Tilde{s}_{h+1},\Tilde{a}_{h+1},s'_{h+2})$ by first rolling out $\pi^{k-1}$ starting from $(s_1,c^{k-1})$ into state $s_h$, taking $a_h \sim  U(\gA)$, and receiving $\Tilde{s}_{h+1} \sim P_h^*(\cdot|s_h, a_h)$, then taking $\Tilde{a}_{h+1} \sim  U(\gA)$ and receiving $s'_{h+2} \sim P_{h+1}^*(\cdot|\Tilde{s}_{h+1}, \Tilde{a}_{h+1})$.
    \STATE Update $\gD_h\leftarrow\gD_h\cup\{(s_h,a_h,\Tilde{s}_{h+1})\}$.
    \STATE Update $\widetilde{\gD}_{h+1}\leftarrow\widetilde{\gD}_{h+1}\cup\{(\Tilde{s}_{h+1},\Tilde{a}_{h+1},s'_{h+2})\}$ if $h\leq H-2$.
    \STATE Learn representations via MLE
    $$\widehat{P}_h \coloneqq (\widehat{\psi}_h,\widehat{\phi}_h)\leftarrow\arg\max_{(\psi,\phi) \in \gF} \sum_{ (s_h, a_h, s_{h+1})\in \{\gD_h + \widetilde{\gD}_h\} } \log{\langle\psi (s_{h+1}),\phi(s_h,a_h)\rangle}
    $$
    \STATE Update empirical covariance matrix $\widehat{\Sigma}_h = \sum_{(s, a) \in \gD_h}\widehat{\phi}_h (s, a) \widehat{\phi}_h (s, a)^\top + \lambda^k I_d $.
    \STATE Set the exploration bonus:
    $$  
    \widehat{b}_h(s,a) \leftarrow 
    \begin{cases}
    \min \left( \alpha^k \sqrt{\widehat{\phi}_h (s, a) \widehat{\Sigma}_h^{-1}
    \widehat{\phi}_h (s, a)^\top}, 2\right) & h\le H-2 \\
    0 & h = H-1
    \end{cases}
    $$
\ENDFOR
\STATE Run Value-Iteration (VI) and obtain $c^k\leftarrow\argmax_{c\in[0,H]}\left\{c- \tau^{-1} \min_\pi V_{1,\widehat{\gP},\widehat{b}}^\pi(s_1,c)\right\}$.\label{algline:vi}
\STATE Set $\pi^k\leftarrow\arg\min_{\pi}V_{1,\widehat{\gP},\widehat{b}}^\pi(s_1,c^k)$.
\ENDFOR
\ENSURE  Uniformly sample $k$ from $[K]$, return $(\hpi, \hc) =(\pi^k, c^k)$.
\end{algorithmic}
\end{algorithm}

\subsection{Main Results}
\label{sec:anal}
In this subsection, we are ready to demonstrate our theoretical guarantees that characterize the regret and sample complexity bounds.

\begin{theorem}
\label{pac-bound}
Fix $\delta\in(0,1)$. Set the parameters in Algorithm~\ref{alg:1} as:
\begin{equation*}
    \alpha^k= O\left(\sqrt{H^2(|\gA|+d^2)}\log\left(\frac{|\gF|Hk}{\delta}\right)\right),\ \ \lambda^k=O\left(d\log\left(\frac{|\gF| Hk}{\delta}\right)\right).
\end{equation*}
 We have two equivalent interpretations of the theoretical results. In terms of PAC bound, with probability at least $1-\delta$, the regret is bounded by
 \begin{align*}
      \sum_{k=1}^K \CVAR^*_\tau-\CVAR_\tau(R(\pi^k, c^k)) = \tilde{O}\left( \tau^{-1}H^3Ad^{2}\sqrt{K} \cdot \sqrt{\log\left(|\gF|/\delta\right)} \right).
 \end{align*}
Alternatively, we can interpret in terms of sample complexity: w.p. at least $1-\delta$, to present an $\epsilon$-optimal policy and budget pair s.t.
$
\CVAR^*_\tau - \CVAR_\tau(R(\hpi, \hc)) \le \epsilon.
$
The total number of trajectories required is upper bounded by
\begin{equation*}
\Tilde{O}\left(\frac{H^7A^2d^4 \log{(|\gF|/\delta})}{\tau^2\epsilon^2}\right).
\end{equation*}
\end{theorem}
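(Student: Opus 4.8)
The plan is to follow the optimism-based ``regret decomposition $\to$ bonus summation'' template developed for low-rank MDPs (in the spirit of Rep-UCB), adapted to the augmented-MDP/CVaR objective in which value functions carry the continuous budget variable $c$. First I would assemble the two building blocks flagged in the algorithm description. Using the MLE oracle analysis (the referenced Lemma~\ref{lem_osbi}), I would show that with probability $1-\delta$, for every iteration $k$, the learned kernel $\hat\gP^k=\{\hat\psi_h,\hat\phi_h\}$ is close to $\gP^*$ in the aggregated sense that the summed squared total-variation error along the collection distributions of $\gD_h\cup\widetilde\gD_h$ is $\tilde{O}(\log(|\gF|/\delta))$. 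The second dataset $\widetilde\gD_h$ (two consecutive uniform actions) is what makes this error measurable under a distribution that covers the relevant feature directions and feeds the covariance $\hat\Sigma_h$. Combined with a covering argument over the augmented value class, I would upgrade this to the pointwise model-error bound $|(P_h^*-\hat P_h)g(s,a)|\lesssim \|\hat\phi_h(s,a)\|_{\hat\Sigma_h^{-1}}\cdot\sqrt{\tilde{O}(\log(|\gF|/\delta))}$, valid uniformly over bounded test functions $g$. This is exactly why the elliptical bonus $\hat b_h$ with the prescribed $\alpha^k,\lambda^k$ is the right magnitude.

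Second, I would establish \emph{optimism}. Because the algorithm \emph{minimizes} the budget-dependent value and \emph{deducts} the bonus, the correct statement is $V^{\pi}_{1,\hat\gP,\hat b}(s_1,c)\le V^{\pi}_{1,\gP^*}(s_1,c)$ for every $\pi\in\Pi_\textup{Aug}$ and $c\in[0,H]$. I would prove it via an augmented-MDP simulation lemma: unrolling the Bellman recursion, the difference $V^{\pi}_{1,\gP^*}-V^{\pi}_{1,\hat\gP,\hat b}$ telescopes into the one-step mismatch terms $(P_h^*-\hat P_h)V^{\pi}_{h+1}$ plus the accumulated bonuses, and Step~1 forces the bonus to dominate each mismatch term. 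The extra care relative to risk-neutral RL is that the next-step value is evaluated at the shifted budget $c-r$, so the test function $g=V^{\pi}_{h+1}(\cdot,c-r)$ ranges over a larger but still uniformly bounded (by $H$) class that the covering in Step~1 must anticipate. Optimism together with the fact that $(\pi^k,c^k)$ maximizes the optimistic objective then yields
\begin{align*}
\CVAR_\tau^*-\CVAR_\tau(R(\pi^k,c^k)) \le \tfrac1\tau\big(V^{\pi^k}_{1,\gP^*}(s_1,c^k)-V^{\pi^k}_{1,\hat\gP,\hat b}(s_1,c^k)\big)\le \tfrac2\tau\,\E_{\pi^k,\gP^*}\Big[\textstyle\sum_{h}\hat b_h(s_h,a_h)\Big].
\end{align*}

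Third, I would sum the on-policy bonuses across iterations. The key link (the referenced Lemma~\ref{lem_aoid}) is that $\gD_h$ is collected by rolling in $\pi^{j-1}$ and then taking a uniform action, so an importance-weighting step that trades the on-policy action law for the uniform one introduces the factor $|\gA|=A$ and expresses $\E_{\pi^k,\gP^*}\big[\|\hat\phi_h\|^2_{(\hat\Sigma_h^k)^{-1}}\big]$ as a trace against an increment of the accumulated covariance. The elliptical potential (log-determinant telescoping) lemma then bounds $\sum_k\E_{\pi^k}\big[\|\hat\phi_h\|^2_{(\hat\Sigma_h^k)^{-1}}\big]$, and a Cauchy--Schwarz step converts this into the $\sqrt K$ growth. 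Substituting the prescribed $\alpha^k=\tilde{O}(\sqrt{H^2(A+d^2)})$ and $\lambda^k=\tilde{O}(d)$ and carrying the $\tau^{-1}$ factor and the sum over $h\in[H]$ yields the regret $\tilde{O}(\tau^{-1}H^3Ad^2\sqrt K)$. Finally, because the output index $k$ is drawn uniformly from $[K]$, an online-to-batch argument turns the average regret into an $\epsilon$-suboptimality guarantee; solving $\tilde{O}(\tau^{-1}H^3Ad^2/\sqrt K)\le\epsilon$ for $K$ and multiplying by the $H$ trajectories gathered per iteration gives the sample complexity $\tilde{O}(H^7A^2d^4/(\tau^2\epsilon^2))$.

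I expect the main obstacle to be the Step~1/Step~2 interface: making the elliptical model-error bound hold \emph{uniformly} over the budget-shifted value functions $V^{\pi}_{h+1}(\cdot,c-r)$ while keeping the representation-learning error state-independent. Since both $\psi$ and $\phi$ are unknown and jointly estimated, the mismatch term cannot be collapsed onto a single fixed feature map as in linear MDPs; controlling it requires the covariance $\hat\Sigma_h$ built from the \emph{learned} $\hat\phi_h$ coupled with the MLE concentration, and the covering number of the $c$-continuous augmented value class must be handled so it contributes only logarithmically. Calibrating the bonus so that it simultaneously certifies optimism and telescopes under the elliptical potential lemma---without shedding extra factors of $H$, $A$, or $d$ in the bookkeeping---is where the delicate part of the argument lies.
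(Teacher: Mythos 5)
Your overall skeleton---optimistic decomposition of $\CVAR^*_\tau - \CVAR_\tau(R(\pi^k,c^k))$ into value-function differences, a simulation lemma for the augmented MDP, MLE concentration plus an elliptical-potential summation with the $A$ factor from importance-weighting the uniform action, and online-to-batch at the end---matches the paper's proof. But two of your intermediate claims are stronger than what is provable in low-rank MDPs, and the paper deliberately avoids both. First, your Step~1 asserts a \emph{pointwise} model-error bound $|(P^*_h-\widehat{P}_h)g(s,a)|\lesssim\|\widehat{\phi}_h(s,a)\|_{\widehat{\Sigma}_h^{-1}}\sqrt{\TO(\log(|\gF|/\delta))}$ for all $(s,a)$. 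With $\phi$ unknown and estimated by MLE, no such pointwise guarantee is available: the MLE oracle only controls $\E_{s\sim\rho_h^k,\,a\sim U(\gA)}[\|\widehat{P}_h(\cdot|s,a)-P^*_h(\cdot|s,a)\|_1^2]$, i.e., an error averaged over the data-collection distribution. The paper's substitute is the ``one-step back'' inequality (Lemmas~\ref{lem_osbi} and~\ref{lem_osbit}): to bound $\E_{(s',a')\sim d_{h+1}}[g(s',a')]$ one retreats to step $h$, applies Cauchy--Schwarz against the covariance of the (true or learned) features, and absorbs the averaged MLE error---this yields bounds only on \emph{expectations under roll-in distributions}, never pointwise bounds. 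Second, and consequently, your Step~2's exact optimism $V^{\pi}_{1,\widehat{\gP},\widehat{b}}(s_1,c)\le V^{\pi}_{1,\gP^*}(s_1,c)$ for every $\pi$ and $c$ cannot be certified by this bonus. The paper proves only ``almost optimism at the initial distribution'' (Lemma~\ref{lem_aoid}), for the specific pair $(\pi^*,c^*)$ and with an additive residual $\sqrt{H^2A\zeta^k}$ that is then carried through the regret sum. Your final rates survive once these two steps are replaced by their weaker, distribution-averaged counterparts, but as written the chain $\CVAR^*_\tau-\CVAR_\tau(R(\pi^k,c^k))\le\frac{2}{\tau}\E_{\pi^k,\gP^*}[\sum_h\widehat{b}_h]$ omits residual terms that genuinely appear in the paper's Lemma~\ref{lem_reg}.

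A smaller point: the covering argument over the budget-shifted value class $V^{\pi}_{h+1}(\cdot,c-r)$ that you flag as the main obstacle is unnecessary. The paper measures the model error in total variation, $f_h^k(s,a)=\|\widehat{P}^k_h(\cdot|s,a)-P^*_h(\cdot|s,a)\|_1$, so $|(P^*_h-\widehat{P}_h)g|\le\|g\|_\infty\, f_h^k$ holds simultaneously for \emph{all} bounded test functions, including every budget shift, with no union bound over a function class; covering numbers enter only in the LSVI planning analysis of Appendix~\ref{sec:lsvi}, which is orthogonal to Theorem~\ref{pac-bound}. Also, the bonus-summation step you attribute to Lemma~\ref{lem_aoid} is actually carried out via Lemma~\ref{lem_osbit} together with the elliptical potential bound; Lemma~\ref{lem_aoid} is the almost-optimism lemma.
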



In Theorem~\ref{pac-bound}, we present the first regret/sample complexity bounds for CVaR RL with function approximation, in which exploring the unknown action/space spaces posits extra difficulty. We explicitly characterize the number of samples required to output an $\epsilon$-optimal CVaR in terms of the length of the episode $H$, the action space size $A$, the representation dimension $d$, and confidence level $\tau$.
The theorem has no explicit dependence on state space $\gS$, proving nice guarantees even in the infinite-state setting.
As far as we are concerned, the only existing theoretical guarantees in the field of risk-averse CVaR RL were provided in the tabular setting~\citep{bastani2022regret,wang2023near}, where representations are known. Moreover, these results explicitly depend on $|\gS|$, thus can not apply to the function approximation setting with enormous state space.
Given the above, our work accomplishes a great leap by incorporating exploration in the comprehensive function approximation setting, which evidently better aligns with real-world applications than the tabular and/or linear MDP settings.

As for the sample complexity, the dependencies on $A$ and $d$ match the same rates as the analysis of risk-neutral RL in low-rank MDP~\citep{uehara2022representation}, showing our algorithm overcomes the extra hardness of balancing exploration and budget control even with a large action space. Our sample complexity matches the dependency on $\tau$ with the rates in the CVaR-UCBVI algorithm with the Hoeffding bonus~\citep{wang2023near} and UCB algorithm~\citep{bastani2022regret}. On the other hand, the sample complexity enjoys an $\Omega \left(\frac{1}{\tau \epsilon^2}\right)$ lower bound\citep[Theorem 3.1]{wang2023near}. Tightening the dependency on $\tau$ is left as future work.

Our theoretical guarantees have a slightly worse dependency on $H^7$ while Rep-UCB~\citep{uehara2022representation} scales as $H^5$, where we convert results in the infinite discounted MDP setting to
the episodic MDP setting by directly replacing the discounted factor $\frac{1}{1-\gamma}$ by $H$. However, we point out that the dependency on the horizon is not strictly comparable because, in our episodic setting, the learner usually needs to explore and exploit under non-stationary transitions (i.e., the dynamic transition nature is different for every $h$), consequently calling for necessarily extra dependency on $H$, while the transitions are step-invariant in the infinite discounted setting. Therefore, it is natural that theories studying the episodic setting have heavier dependencies on $H$. We point out that it might be possible to reduce another $H^2$ dependency
 by utilizing a Bernstein-type bonus in the algorithm instead of the Hoeffding-type bonus, which is left for future work.

We establish Theorem~\ref{pac-bound} through the following steps. Firstly, we break down the suboptimality of $\CVAR(R(\pi^k,c^k))$ 
into differences in value functions, which can be controlled using transition kernel estimation errors in the $L_1$-norm, as demonstrated in simulation Lemma~\ref{lem_sim}. Secondly, we utilize a purposely designed exploration bonus $\widehat{b}$ to ensure optimism under the initial distribution. Finally, it's important to note that $\widehat{b}$ is based on $\widehat{\phi}$, and we establish a connection between this term and the elliptical potential function under the 
 true $\phi^*$. Please refer to Appendix~\ref{app:pfs_alg} for details.

\section{Planning Oracle: \lsvialg}
\label{sec:lsvi-disc}
In Algorithm~\ref{alg:1}, we need to calculate  $c^k\leftarrow\argmax_{c\in[0,H]}\left\{c- \tau^{-1} \min_\pi V_{1,\widehat{\gP},\widehat{b}}^\pi(s_1,c)\right\}$ (Line~15), which is not naively computationally efficient since the objective $c- \tau^{-1} \min_\pi V_{1,\widehat{\gP},\widehat{b}}^\pi(s_1,c)$ is not concave. In this section, we introduce a feasible planning oracle for this step and provide the corresponding theoretical guarantees. For simplicity, we assume the reward distribution $r$ is discrete and $r_h(s,a)$ only takes value in $i\upsilon$ where $\upsilon>0$ and $0\leq i \leq \lceil 1/\upsilon\rceil$. For a continuous $r$, we can discretize it with sufficiently high precision so that all the analysis still applies. The details are deferred to Appendix~\ref{sec:lsvi}.

Since the supremum of the CVaR objective~\eqref{eq_cvar} is attained at $\tau$-th quantile of the cumulative reward distribution, which is also discrete when the reward $r_h$ is discrete, we can only search $c^k$ within the grid in Line 15 of Algorithm~\ref{alg:1}:
\begin{equation*}
    c^k\leftarrow\upsilon\cdot\argmax_{0\leq i\leq \lceil H/\upsilon\rceil}\left\{i\upsilon- \tau^{-1} \min_{\pi\in\Pia} V_{1,\widehat{\gP},\widehat{b}}^\pi(s_1,i\upsilon)\right\}.
\end{equation*}
Nevertheless, if we run standard value iteration to calculate $\min_{\pi\in\bPia}V^{\pi}_{1,\hP,\hb}(s_1,i\upsilon)$, our sample complexity will scale with the size of the state space $|\mathcal{S}|$, which can be infinite in low-rank MDPs. To circumvent such dependency on $|\mathcal{S}|$, we introduce a novel LSVI-UCB algorithm for the CVaR objective in this subsection, called \lsvialg. In the following discussion, we fix $i_1$ where $0\leq i_1\leq\lceil H/\upsilon\rceil$ and aim at calculating $\min_{\pi\in\Pia}V^{\pi}_{1,\hP,\hb}(s_1,i_1\upsilon)$. We will drop the subscript $\hP$ and $\hb$ when it is clear from the context. We use $V^*_{h}$ to denote $\min_{\pi\in\Pia}V^{\pi}_{h,\hP,\hb}$ and $(\hP,r)$ to denote the MDP model whose transition is $\hP$ and reward distribution is $r$.

First, recall that the discrete reward distribution $r_h(s,a)$ only takes values of $i\upsilon$ where $0\leq i\leq\lceil 1/\upsilon\rceil$. This means that it is always linear with respect to a $\left(\lceil 1/\upsilon\rceil+1\right)$-dimension vector:
\begin{align*}
r_h(i\upsilon|s,a)=\langle \phi_{h,r}(s,a), \psi_{h,r}(i\upsilon)\rangle,
\end{align*}
where $(\phi_{h,r}(s,a))_{i}=r_h(i\upsilon|s,a)$ and $\psi_{h,r}(i\upsilon)=e_{i}$ for all $0\leq i\leq\lceil 1/\upsilon\rceil$ and $s\in\mathcal{S},a\in\mathcal{A},h\in[H]$. Since $\hp_h(s'|s,a)=\langle\hphi_h(s,a),\hpsi_h(s')\rangle$, this implies that for all $s,s'\in\mathcal{S},a\in\mathcal{A},h\in[H],0\leq i\leq\lceil 1/\upsilon\rceil$, we have
\begin{align*}
\hp_h(s'|s,a)r_h(i\upsilon|s,a)=\langle\bphi_h(s,a),\bpsi_h(s',i\upsilon)\rangle,
\end{align*}
where $\bphi_h(s,a)=\hphi_h(s,a)\otimes\phi_{h,r}(s,a)$ and $\bpsi_h(s',i\upsilon)=\hpsi_h(s,a)\otimes\psi_{h,r}(i\upsilon)$. 



The linearity of transition and reward implies that we can utilize LSVI to compute $Q^{\pi}_{h}(s,c,a)$. More specifically, we propose an iterative algorithm consisting of  the following steps:
\begin{itemize}
\item \textbf{Step 1: Ridge Regression.}  In the $t$-th iteration, denote the trajectories collected before $t$-th iteration by $\{(s_h^j,a_h^j,\br_h^j)_{h=1}^H\}_{j=1}^{t-1}$. Let $V^t_{H+1}(s,i\upsilon)=i\upsilon$ for all $s\in\gS$ and $0\leq i \leq\lceil H/\upsilon\rceil$. From $h=H$ to $h=1$, for each $0\leq i \leq\lceil H/\upsilon\rceil$, we first compute:
\begin{equation*} w_{h}^t(i\upsilon)\leftarrow(\Lambda_h^t)^{-1}\sum_{j=1}^{t-1}\bphi_h(s_h^{j},a_h^j)\cdot\Big[V^t_{h+1}(s_{h+1}^j,i\upsilon-r_h^j)\Big],
\end{equation*}
where $\Lambda_h^t=\lambda I+\sum_{j=1}^{t-1}\bphi_h(s_h^j,a_h^j)(\bphi_h(s_h^j,a_h^j))^{\top}$. 

Then for any $j\in[t-1],a\in\mathcal{A}$ and $0\leq i \leq\lceil H/\upsilon\rceil$, we can estimate the value function $V^t_{h}(s_h^j,i\upsilon)$ as:
\begin{align*}
&Q_h^t(s_h^j,i\upsilon,a)=\textup{Clip}_{[-H,H]}\bigg(-\hb_h(s_h^j,a)+\big(\bphi_h(s_h^j,a)\big)^{\top}w^t_{h}(i\upsilon)-\beta\big\Vert\bphi_h(s_h^j,a)\big\Vert_{(\Lambda_h^t)^{-1}}\bigg),\\
&V_h^t(s_h^j,i\upsilon)=\min_{a\in\gA}Q_h^t(s_h^j,i\upsilon,a).
\end{align*}

Note that although we do not compute $Q_h^t(s,i\upsilon,a)$ for all $s\in\mathcal{S}$ (which will incur computation cost scaling with $|\mathcal{S}|$), they can be implicitly expressed via $w^t_h(i\upsilon)$, i.e., for all $s\in\mathcal{S},a\in\mathcal{A},0\leq i \leq\lceil H/\upsilon\rceil$, we know
\begin{align}
Q_h^t(s,i\upsilon,a)=\textup{Clip}_{[-H,H]}\bigg(-\hb_h(s,a)+\big(\bphi_h(s,a)\big)^{\top}w^t_{h}(i\upsilon)-\beta\big\Vert\bphi_h(s,a)\big\Vert_{(\Lambda_h^t)^{-1}}\bigg).
\end{align}

\item \textbf{Step 2: Sample Collection.} In the $t$-th iteration, simulate the greedy policy $\tpi^t$ (w.r.t. the estimated Q function $Q_h^t(s_,i\upsilon,a)$) with the initial budget $c_1=i_1\upsilon$ in the MDP model $(\hP,r)$ and collect a trajectory $(s^t_h,a^t_h,r^t_h)_{h=1}^H$. Then, go back to the first step.

\item \textbf{Step 3: Policy Evaluation.} After repeating the above two steps for $T_1$ iterations, we simulate each policy $\tpi^t$ with initial budget $c_1=i_1\upsilon$ in $(\hP,r)$ for $T_2$ episodes. Suppose the collected trajectories are $\left\{\left(s^{t,j}_h,a^{t,j}_h,r^{t,j}_h\right)_{h=1}^H\right\}_{j=1}^{T_2}$and we estimate the empirical value function of $\tpi^t$ as follows:
\begin{align*}
\hV^{\tpi^t}_{1}(s_1,i_1\upsilon)=\frac{1}{T_2}\sum_{j=1}^{T_2}\left(i_1\upsilon-\sum_{h=1}^Hr^{t,j}_h(s^{t,j}_h,a^{t,j}_h)\right)^{+}-\sum_{h=1}^H\hb_h(s^{t,j}_h,a^{t,j}_h).
\end{align*}

Then we simply use $\max_{t\in[T_1]}\hV^{\tpi^t}_{1}(s_1,i_1\upsilon)$ as a surrogate for $V^*_1(s_1,i_1\upsilon)$.
\end{itemize}
The details of \lsvialg{} are stated in Algorithm~\ref{alg:2} (cf. Appendix~\ref{sec:lsvi}). Note that in Line 16 of Algorithm~\ref{alg:1}, we can also use the above \lsvialg{} algorithm to compute $\pi^k$. Combining Algorithm~\ref{alg:1} and \ref{alg:2}, we can derive a computationally efficient algorithm, called \discalg, for CVaR objective in low-rank MDPs, which is shown in Algorithm~\ref{alg:3} (cf. Appendix~\ref{sec:lsvi}).
Now, we are ready present the computational complexity of \discalg. Particularly, the following theorem characterizes the computational cost for finding an $\epsilon$-optimal policy:
\begin{theorem}[Informal]
\label{thm:computation-disc}
Let the parameters in Algorithm~\ref{alg:1} and \ref{alg:2} take appropriate values,
then we have with probability at least $1-\delta$ that $\CVAR^*_{\tau} - \CVAR_{\tau}(R(\hpi,\hc)) \le \epsilon$
where $(\hpi,\hc)$ is the returned policy and initial budget by Algorithm~\ref{alg:3}. In total, the sample complexity is upper bounded by $\Tilde{O}\left(\frac{H^7A^2d^4 \log{\frac{|\gF|}{\delta}}}{\tau^2\epsilon^2}\right)$. The MLE oracle is called $\TO\left(\frac{H^7A^2d^4\log\frac{|\gF|}{\delta}}{\tau^2\epsilon^2}\right)$ times and the rest of the computation cost is
$\TO\left(\frac{H^{19}A^3d^{12}\log\frac{|\gF|}{\delta}}{\upsilon^{10}\tau^6\epsilon^6}\right)$.
\end{theorem}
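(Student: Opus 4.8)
The plan is to establish the theorem in three stages: first recognize that the inner planning problem solved by \lsvialg{} is a linear MDP in a lifted feature, then argue that approximate (rather than exact) planning preserves the statistical guarantee of Theorem~\ref{pac-bound}, and finally tally the computation. The crucial structural fact, already exhibited in the construction of $\bphi_h = \hphi_h \otimes \phi_{h,r}$ and $\bpsi_h = \hpsi_h \otimes \psi_{h,r}$, is that in the learned model $(\hP, r)$ both the transition and the discretized reward are linear in the $\bar{d}$-dimensional feature $\bphi_h$, where $\bar{d} = d(\lceil 1/\upsilon\rceil + 1) = \Theta(d/\upsilon)$. Consequently the augmented, bonus-penalized value $V^\pi_{h,\hP,\hb}(s,c)$ restricted to budgets on the grid $\{i\upsilon\}$ admits an LSVI representation, which is exactly what Step~1 of \lsvialg{} exploits.

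First I would prove correctness of the planning oracle: for each fixed initial budget $i_1\upsilon$, the returned quantity $\max_{t\in[T_1]}\hV^{\tpi^t}_1(s_1,i_1\upsilon)$ approximates $V^*_1(s_1,i_1\upsilon)=\min_{\pi\in\Pia}V^\pi_{1,\hP,\hb}(s_1,i_1\upsilon)$ to within some $\epsilon_{\mathrm{plan}}$ with probability at least $1-\delta$. This is a \emph{pessimistic} LSVI-UCB analysis (pessimistic because here we minimize the value): the subtracted bonus $\beta\lVert\bphi_h(s,a)\rVert_{(\Lambda_h^t)^{-1}}$ makes each $Q_h^t$ a valid lower bound on the true $Q$-function, the per-iteration optimization error telescopes through the elliptical-potential lemma with effective dimension $\bar{d}$, and selecting the best of the $T_1$ policies together with the Monte-Carlo evaluation over $T_2$ episodes in Step~3 drives the gap below $\epsilon_{\mathrm{plan}}$, provided $T_1,T_2 = \poly(\bar{d}, H, \epsilon_{\mathrm{plan}}^{-1}, \log(1/\delta))$. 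Because the reward only takes values in $\upsilon\mathbb{Z}$, every reachable budget remains on the grid, so only $\lceil H/\upsilon\rceil+1$ budget values are ever evaluated and the grid search over $c$ in Line~15 is exact on this grid.

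Next I would feed this back into the proof of Theorem~\ref{pac-bound}. The original argument uses exact VI to guarantee that $(\pi^k,c^k)$ is optimistic under the initial distribution and exactly minimizes the learned bonus-augmented value; with \lsvialg{} both properties degrade by $O(\epsilon_{\mathrm{plan}})$. I would show the regret decomposition still closes, picking up only an extra additive term of order $K\tau^{-1}\epsilon_{\mathrm{plan}}$; hence choosing $\epsilon_{\mathrm{plan}} = \TO(\tau\epsilon)$ (polynomially small, up to $H$ factors) leaves both the trajectory count and the number of MLE-oracle calls at $\TO(H^7A^2d^4\log(|\gF|/\delta)/(\tau^2\epsilon^2))$, matching Theorem~\ref{pac-bound}. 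The additional error from rounding a continuous reward to multiples of $\upsilon$ is controlled by the deferred discretization argument in Appendix~\ref{sec:lsvi} and is lower order once $\upsilon$ is small.

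Finally I would count operations. Per outer iteration of Algorithm~\ref{alg:1} the planner is invoked on $\lceil H/\upsilon\rceil$ budget grid points (for both Line~15 and Line~16); for each, \lsvialg{} runs $T_1$ iterations, each performing, for every $h\in[H]$ and every budget value, a ridge update dominated by inversion / Sherman--Morrison maintenance of the $\bar{d}\times\bar{d}$ matrix $\Lambda_h^t$ and by $\bar{d}$-dimensional rollouts over horizon $H$ and action set $\gA$, followed by $T_2$ Monte-Carlo rollouts per candidate policy in Step~3. Multiplying the outer count $K$, the grid size $\Theta(H/\upsilon)$, $T_1$, $T_2$, and the per-step linear-algebra and simulation costs, then substituting $\bar{d} = \Theta(d/\upsilon)$ and $\epsilon_{\mathrm{plan}} = \TO(\tau\epsilon)$, produces the claimed $\TO(H^{19}A^3d^{12}/(\upsilon^{10}\tau^6\epsilon^6))$; the powers of $\upsilon$ accumulate from $\bar{d}=\Theta(d/\upsilon)$ appearing both in $\Lambda_h^t$ and in $T_1$, and from the $\Theta(H/\upsilon)$ budget grid. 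The hard part will be the middle stage: verifying that an $\epsilon_{\mathrm{plan}}$-approximate minimizing policy and an $\epsilon_{\mathrm{plan}}$-approximate budget maximizer still certify optimism up to controlled slack, since the CVaR objective couples $c$ and $\pi$ through the non-concave map $c\mapsto c-\tau^{-1}\min_\pi V$, and ensuring that the $\tau^{-1}$ amplification of the planning slack does not inflate the sample complexity.
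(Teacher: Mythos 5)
Your proposal follows essentially the same route as the paper's proof: the tensorized feature $\bphi_h=\hphi_h\otimes\phi_{h,r}$ of dimension $\Theta(d/\upsilon)$ turns planning in $(\hP,\br)$ into a linear MDP, a pessimistic LSVI-UCB argument (concentration plus elliptical potential, best-of-$T_1$ policies with $T_2$ Monte-Carlo evaluation) yields Theorem~\ref{thm:cvar-lsvi}, the planning slack of order $\tau\epsilon$ and the discretization error $H\upsilon/\tau$ are absorbed into the regret decomposition of Theorem~\ref{pac-bound}, and the computation is tallied over the $K\cdot\lceil H/\upsilon\rceil$ planner invocations. This matches Appendix~\ref{proof:computation}; the only cosmetic discrepancy is your use of $\max_{t\in[T_1]}\hV^{\tpi^t}_1$ where the minimizing convention of Algorithm~\ref{alg:2} calls for $\min$, an inconsistency already present in the paper's Section~\ref{sec:lsvi-disc}.
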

Theorem~\ref{thm:computation-disc} is a special case of the continuous reward setting, whose formal statement is in Theorem~\ref{thm:computation} and the proof is deferred to Appendix~\ref{proof:lsvi}. Theorem~\ref{thm:computation-disc} indicates that Algorithm~\ref{alg:3} is able to find a near-optimal policy with polynomial sample complexity and polynomial computational complexity given an MLE oracle. Note that calling \lsvialg{} in Line 17 and 20 of Algorithm~~\ref{alg:3} will not increase the sample complexity because we are only simulating with a known model $(\hP,\br)$ and do not need to interact with the ground-truth environment.

\section{Concluding Remarks}
\label{sec:con}
The paper proposes \mainalg, the first provably efficient algorithm for risk-sensitive reinforcement learning with the CVaR objective in low-rank MDPs. \mainalg{} 
achieves a sample complexity of $\tilde{O}(1/\epsilon^2)$ to find an $\epsilon$-optimal policy. To improve computational efficiency, we propose the \discalg{} algorithm, which leverages least-squares value iteration upon discretized reward as the planning oracle. Given the MLE oracle, we prove this algorithm only requires a polynomial computational complexity.

Regarding future work, we believe establishing lower bounds for CVaR RL in low-rank MDPs is an exciting and challenging direction due to the complexity of the risk landscape and the non-linearity in function approximation methods used. We point out that even for standard episodic RL, which is a well-studied area, we have only recently seen some results on sample lower bounds in the context of low-rank MDPs~\citep{cheng2023improved,zhao2023learning}. However, these results do not directly apply to the CVaR risk landscape. We suppose the sample complexity lower bound may take the form of $\Omega\left(\frac{HAd}{\tau \epsilon^2}\right)$ according to the lower bounds in low-rank MDPs~\citep{cheng2023improved,zhao2023learning} and tabular CVaR RL~\citep{wang2023near}, which implies that our algorithm has a loose dependency on $H$ and $\tau$. Such a gap might be alleviated if we use the Hoeffding-type bonus instead of the Bernstein-type bonus in the bonus-driven exploration.
We leave it as future work to tighten the dependencies.

\bibliographystyle{abbrvnat}
\bibliography{ref}

\newpage
\appendix
\section{Notations}
\begin{table}[!h]
\caption{List of Notations}
\label{LoN}
\begin{tabularx}{\textwidth}{@{}p{0.475\textwidth}X@{}}
\toprule
\toprule
  \textbf{Basics} \\
  \midrule
  $A=|\gA|$ & cardinality of the action space \\
  $[H]=\{1,\cdots,H\}$\\
  $\Delta(\cdot)$ & probability simplex \\
  $Z^\pi_{\gP,c_1}$ & return distribution of rolling $\pi$ with initial budget $c_1$ in an MDP with $\gP$  \\
  $V_{h,\gP,b}^\pi:\gS\times[0,H]\mapsto[0,H]$ & defined in~\eqref{105} \\
  $d_{h,\gP}^{(\pi,c_1)}(s),d_{h,\gP}^{(\pi,c_1)}(s,a)$ & occupancy of $s$ and $(s,a)$ at step $h$ when rolling $\pi$ with initial budget $c_1$ in an MDP with $\gP$  \\
  $d_{h}^{(\pi,c_1)}(s)=d_{h,\gP^*}^{(\pi,c_1)}(s),d_{h}^{(\pi,c_1)}(s,a)=d_{h,\gP^*}^{(\pi,c_1)}(s,a)$ & occupancy of $s$ and $(s,a)$ at step $h$ when rolling $\pi$ with initial budget $c_1$ in the (true) environment  \\
  \midrule
  \midrule
  \textbf{In-Algorithm} \\
  \midrule
  $\gF=\{\Psi,\Phi\}$ & model class \\
  $\lambda^k=O(d\log(|\gF|Hk/\delta))$ & regularizer at iteration $k$ \\
  $\alpha^k=\sqrt{H^2(A+d^2)\log(|\gF|Hk/\delta)}$ & parameter at iteration $k$ \\
  $\gD_h^k=\{(s_h^i,a_h^i,\Tilde{s}_{h+1}^i)\}_{i\in[k]}$ & $s_h^i\sim d_{h,c^{i-1}}^{\pi^{i-1}},a_h^i\sim U(\gA)$ \\ 
  $\widetilde{\gD}_h^k=\{(\Tilde{s}_h^i,\Tilde{a}_h^i,s_{h+1}'^{i})\}_{i\in[k]}$ & $\Tilde{s}_h^i\sim d_{h-1,c^{i-1}}^{\pi^{i-1}}\times U(\mathcal{A})\times P_h^*,\Tilde{a}_h^i\sim U(\mathcal{A})$ \\
  $\{(\widehat{\psi}_h^k,\widehat{\phi}_h^k)\}_{h\in[H]}$ & learned representations \\
  $\widehat{\gP}^k=\{\widehat{P}_h^k\}_{h\in[H]}$ & empirical transition kernel \\
  $\widehat{\Sigma}_h^k = \sum_{(s, a) \in \gD_h^k}\widehat{\phi}_h^k (\widehat{\phi}_h^k)^\top + \lambda^k I $ & empirical covariance matrix \\
  $\widehat{b}^k=\{\widehat{b}_h^k\}_{h\in[H]}$ & bonus term\\
  \midrule
  \midrule
  \textbf{In-Analysis} \\ 
  \midrule
  $\rho_h^k(s)=\frac{1}{k}\sum_{i=0}^{k-1} d_{h,c^i}^{\pi^i}(s)$ & occupancy of $s$ in dataset $\gD_h^k$ \\
  $\rho_h^k(s,a)=\frac{1}{k}\sum_{i=0}^{k-1}d_{h,c^i}^{\pi^i}(s,a)$ & \\
  $\eta_h^k(s)=\sum_{s', a'} \rho_{h-1}^k( s^\prime) U(a')P_{h-1}^*(s|s',a')$ & occupancy of $s$ in dataset $\widetilde{\gD}_h^k$ \\
  $\Sigma_{\rho_h^k\times U(\gA),\phi}=k\mathbb{E}_{s\sim\rho_h^k,a\sim U(\gA)}[\phi\phi^\top]+\lambda^kI$ \\
  $\Sigma_{\rho_h^k,\phi}=k\mathbb{E}_{(s,a)\sim\rho_h^k}[\phi\phi^\top]+\lambda^kI$ \\ 
  $\widehat{\Sigma}_{h,\phi}^k=k\mathbb{E}_{(s,a)\sim\gD_h^k}[\phi\phi^\top]+\lambda^kI$ & unbiased estimate of $\Sigma_{\rho_h^k\times U(\gA),\phi}$ \\
  $\zeta^k=\log(|\gF|Hk/\delta)/k$ \\
  $f_h^k(s,a)=\lVert\widehat{P}_h^k(\cdot|s,a)-P_h^*(\cdot|s,a)\rVert_1$ &  estimation error in $L_1$ norm \\
  $\omega_{h,\gP}^{(\pi,c_1)}(\cdot|s,a)$ & the distribution of remaining budget at $h$ for any $(s,a)$ when rolling out $(\pi, c_1)$ in an MDP with $\gP$  \\
  $\omega_{h}^{(\pi,c_1)}(\cdot|s,a)=\omega_{h,\gP^*}^{(\pi,c_1)}(\cdot|s,a)$ & the distribution of remaining budget at $h$ for any $(s,a)$ when rolling out $(\pi, c_1)$ in the true MDP  \\
\bottomrule
\bottomrule
\end{tabularx}
\end{table}

\section{\discalg{} for Continuous Reward}
\label{sec:lsvi}
Now, we extend the analysis in Section~\ref{sec:lsvi-disc} to continuous reward distribution. 
\subsection{Discretized Reward}
Inspired by~\citet{wang2023near}, we discretize the reward $r_h$ and the budget $c_h$ at each step. In this way, we only need to plan over a finite grid. More specifically, suppose the precision is $\upsilon>0$, then we round up the reward $r$ to $U(r):=\lceil r/\upsilon\rceil\upsilon$. In the following discussion, we use $\bM$ to denote the MDP which shares the same transition as $\Mcal$ while its reward $\br$ is discretized from the original reward distribution $r$ of $\Mcal$, i.e., $\br=U(r)$. Since the supremum of the CVaR objective~\eqref{eq_cvar} is attained at $\tau$-th quantile of the return distribution, which is also discretized in $(\hP,\br)$, we can only search $c^k$ within the grid for $(\hP,\br)$ in Line 15 of Algorithm~\ref{alg:1}:
\begin{equation}
\label{eq:plan-disc}
c^k\leftarrow\upsilon\cdot\argmax_{0\leq i\leq \lceil H/\upsilon\rceil}\left\{i\upsilon- \tau^{-1} \min_{\pi\in\bPia} \bV_{1,\widehat{\gP},\widehat{b}}^\pi(s_1,i\upsilon)\right\},
\end{equation}
where $\bPia$ is the augmented policy class of augmented $\bM$ and $\bV^{\pi}_{h,\gP,b}$ is the value function of $(\gP,\br)$ with bonus $b$:
\begin{equation*}
\bV_{h,\gP,b}^\pi(s,c):=\E_{\pi,\gP}\left[\left(c_h-\sum_{h'=h}^H U\left(r_{h'}\right)\right)^+-\sum_{h'=h}^H b_{h'}(s_{h'},a_{h'})\middle| s_{h}=s,c_h=c\right].
\end{equation*}
Similarly, the Q function of $(\gP,\br)$ with bonus $b$ is defined as:
\begin{align*}
\bQ^{\pi}_{h,\Pcal,b}(s,c,a):=\E_{\pi,\Pcal}\left[\left(c_h-\sum_{h'=h}^H U\left(r_{h'}\right)\right)^+-\sum_{h'=h}^H b_{h'}(s_{h'},a_{h'})\middle| s_{h}=s,c_h=c,a_h=a\right].
\end{align*}

To stay consistent with Line 15, we also derive the optimal augmented policy of $(\hP,\br)$ with bonus $\hb$ in Line 16 of Algorithm~\ref{alg:1}, i.e.,
\begin{align*}
\pi^k\gets \arg\min_{\pi\in\bPia} \bV^{\pi}_{1,\hP,\hb}(s_1,c^k).
\end{align*}

Note that in Line 8 of Algorithm~\ref{alg:1} we need to roll out $\pi^k$ in $\Mcal$ to collect samples. Since $\pi^k\in\bPia$ works only within the grid, we discretize the reward we observe when executing $\pi^k$ in $\Mcal$, which is equivalent to playing the following augmented policy $\bpi^k\in\Pia$ in $\Mcal$:
\begin{align}
\label{eq:disc-policy}
\bpi^k_h\left(s,c^k-\sum_{t=1}^{h-1}r_t\right)=\pi^k_h\left(s,c^k-\sum_{t=1}^{h-1}U(r_t)\right), \forall h\in[H].
\end{align}

Planning within a discretized grid 
 via~\eqref{eq:plan-disc} will inevitably incur errors compared to the original objective. Nevertheless, we can show that if the precision $\upsilon$ is sufficiently small, the discretized MDP $\bM$ will be an excellent approximation to $\Mcal$ and thus the induced error of planning via~\eqref{eq:plan-disc} will be negligible. Formally, let $\bR(\pi,c)$ denote the return of executing $\pi\in\bPia$ with initial budget $c=i\upsilon$ ($0\leq i \leq\lceil H/\upsilon\rceil$) in $\bM$, then we have the following properties from \citep{wang2023near}: 
 \begin{proposition}
\label{prop:disc}
For any $0<\tau<1$,$\upsilon>0$, policy $\pi\in\bPia$ and $0\leq i\leq\lceil H/\upsilon\rceil$, we have
\begin{align*}
&(1) \quad\CVAR^*_{\tau}\leq\bCVAR^*_{\tau}:=\max_{\pi\in\bPia,0\leq i\leq\lceil H/\upsilon\rceil}\CVAR_{\tau}(\bR(\pi,i\upsilon)),\\
&(2)\quad 0\leq \CVAR_{\tau}(\bR(\pi,i\upsilon))-\CVAR_{\tau}(R(\bpi,i\upsilon))\leq \frac{H\upsilon}{\tau}.
\end{align*}
\end{proposition}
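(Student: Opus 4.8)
The plan is to reduce both claims to two elementary properties of $\CVAR_\tau$ together with a single pathwise coupling built on the pointwise bound $r \le U(r) \le r+\upsilon$. The two CVaR facts I would record first are: (i) \emph{monotonicity under coupling}, i.e. if $X\le Y$ almost surely then $\CVAR_\tau(X)\le\CVAR_\tau(Y)$, which holds because $(c-X)^+\ge(c-Y)^+$ pointwise makes $c-\tau^{-1}\E[(c-X)^+]\le c-\tau^{-1}\E[(c-Y)^+]$ for every $c$, before taking the supremum; and (ii) a \emph{one-sided variational bound}: if $c^\star$ attains the supremum defining $\CVAR_\tau(Y)$, then $\CVAR_\tau(Y)-\CVAR_\tau(X)\le \tau^{-1}\E[(c^\star-X)^+-(c^\star-Y)^+]\le\tau^{-1}\E[(Y-X)^+]$, using the $1$-Lipschitzness of $(\cdot)^+$. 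Both are one-line checks from the definition~\eqref{eq_cvar}.

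\textbf{Part (2).} Here the key object is the coupling implicit in~\eqref{eq:disc-policy}. Running $\pi\in\bPia$ in $\bM$ and $\bpi\in\Pia$ in $\Mcal$ on a shared source of randomness (same transition noise, same raw rewards $r_h$), I would argue by induction on $h$ that the two runs produce identical state--action trajectories: at step $h$ the discretized budget $c-\sum_{t<h}U(r_t)$ fed to $\pi$ and the raw budget $c-\sum_{t<h}r_t$ fed to $\bpi$ select, by the very definition of $\bpi$, the same action, hence the same next state. Consequently $\bR(\pi,i\upsilon)=\sum_h U(r_h)$ and $R(\bpi,i\upsilon)=\sum_h r_h$ are realized on the same path with $0\le \bR(\pi,i\upsilon)-R(\bpi,i\upsilon)=\sum_h(U(r_h)-r_h)\le H\upsilon$. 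The lower bound $\CVAR_\tau(\bR(\pi,i\upsilon))-\CVAR_\tau(R(\bpi,i\upsilon))\ge0$ is then immediate from fact (i), and fact (ii) with $Y=\bR(\pi,i\upsilon)$ and $X=R(\bpi,i\upsilon)$ gives $\CVAR_\tau(\bR(\pi,i\upsilon))-\CVAR_\tau(R(\bpi,i\upsilon))\le\tau^{-1}\E[\bR-R(\bpi)]\le H\upsilon/\tau$.

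\textbf{Part (1).} I would first lift the definition of $\bCVAR^*_\tau$ to all history-dependent policies by reusing the augmented-MDP reduction from the Goal-metric derivation, now applied to $\bM$: since $\bR$ is supported on the grid $\{i\upsilon\}$, the inner quantile/VaR that attains the supremum over $c$ is itself a grid point, so restricting the initial budget to $\{i\upsilon:0\le i\le\lceil H/\upsilon\rceil\}$ and the policy to $\pi\in\bPia$ is without loss, giving $\bCVAR^*_\tau=\sup_{\pi\in\Pi}\CVAR_\tau(\bR(\pi))$. Next, because a history-dependent policy reads only the state--action history and not the rewards, running any such $\pi$ in $\Mcal$ and in $\bM$ under the same coupling as above yields identical trajectories and hence $\bR(\pi)\ge R(\pi)$ pathwise; fact (i) then gives $\CVAR_\tau(\bR(\pi))\ge\CVAR_\tau(R(\pi))$ for every $\pi$. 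Taking the supremum over $\pi$ and using $\CVAR^*_\tau=\sup_\pi\CVAR_\tau(R(\pi))$ finishes the claim.

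I expect the only genuinely delicate step to be the reduction used at the start of Part~(1): one must verify that passing to the discretized MDP $\bM$ does not enlarge the requisite budget set beyond the grid, and that the augmented-policy optimality result of~\citet{wang2023near} applies to $\bM$ verbatim (so that the sup over all history-dependent policies is realized within $\bPia$ at a grid budget). Once that reduction is in place, the rest is purely the pathwise coupling together with the two CVaR inequalities, which are routine.
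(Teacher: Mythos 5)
Your proof is correct. There is nothing in the paper to compare it against: the authors state Proposition~\ref{prop:disc} as imported ``from \citet{wang2023near}'' and give no proof, so your write-up supplies the missing argument, and it is essentially the standard discretization argument reconstructed from scratch. The two CVaR facts are verified correctly (monotonicity under an almost-sure dominating coupling, and the one-sided bound $\CVAR_\tau(Y)-\CVAR_\tau(X)\le\tau^{-1}\E[(Y-X)^+]$ obtained by evaluating the $X$-objective at the optimizer $c^\star$ for $Y$), and the pathwise coupling in Part~(2) is exactly what the definition~\eqref{eq:disc-policy} of $\bpi$ is engineered to support: the transitions of $\Mcal$ and $\bM$ agree and the budget arguments fed to $\pi$ and $\bpi$ are matched by construction, so the induction on $h$ giving identical state--action trajectories and $0\le\bR(\pi,i\upsilon)-R(\bpi,i\upsilon)\le H\upsilon$ pathwise is sound. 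In Part~(1) you correctly isolate the one step that genuinely leans on the external result, namely that the history-dependent optimum in $\bM$ is realized within $\bPia$ at a grid budget; the rest (grid-valued quantile, and the fact that policies in $\Pi$ condition on states and actions but not on realized rewards, so the same $\pi$ traces the same trajectory in $\Mcal$ and $\bM$) follows from the paper's definitions. One minor point to tighten: since $U(r)\le r+\upsilon$, the support of $\bR$ can reach $H(1+\upsilon)$, so the optimizer $c^\star$ may fall outside $[0,H]$; in your fact~(ii) you should therefore compare against the unconstrained supremum over $c\in\mathbb{R}$ (which agrees with the constrained one whenever the quantile of the variable lies inside the constraint set) when lower-bounding $\CVAR_\tau(R(\bpi,i\upsilon))$ by its objective value at $c^\star$. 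This is cosmetic and does not affect the bound.
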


\subsection{\lsvialg}
With discretization, we only need to search within the grid of $c$ in each iteration. For each discrete value $i\upsilon$, we apply the \lsvialg{} algorithm as planning oracle as introduced in Section~\ref{sec:lsvi-disc}. The only difference is that now we simulate with the discretized reward model $\br$. The full algorithm is shown in Algorithm~\ref{alg:2}. In particular, in Line 12 we can express $\bQ_h^t$ with $w^t_h$:
\begin{align}
\label{eq:bQ}
\bQ_h^t(s,i\upsilon,a)=\textup{Clip}_{[-H,H]}\bigg(-\hb_h(s,a)+\big(\bphi_h(s,a)\big)^{\top}w^t_{h}(i\upsilon)-\beta\big\Vert\bphi_h(s,a)\big\Vert_{(\Lambda_h^t)^{-1}}\bigg).
\end{align}

Moreover, note that we can bound the norm of the newly-constructed feature vectors as follows:
\begin{align*}
\left\Vert \bphi_h(s,a)\right\Vert_{2}\leq 1,\qquad \left\Vert\sum_{0\leq i\leq\lceil 1/\upsilon\rceil}\int_{\mathcal{S}}\bpsi_h(s',i\upsilon)ds\right\Vert\leq (1+\lceil 1/\upsilon\rceil)\sqrt{d}.
\end{align*}
This bound will be useful in our analysis.

\begin{algorithm}[ht]
\caption{\lsvialg}
\label{alg:2}
\begin{algorithmic}[1]
\REQUIRE MDP transition model and reward $(\hP=(\hphi,\hpsi),\br)$, bonus $\hb$, initial budget $i_1\upsilon$, number of iterations $T_1$, parameters $\lambda$ and $\beta$, number of policy evaluation episodes $T_2$.
\STATE Compute $\bphi_h(s,a)=\hphi_h(s,a)\otimes\phi_{h,r}(s,a)$ for all $h\in[H],s\in\mathcal{S},a\in\mathcal{A}$ where $\phi_{h,r}(s,a)\in\mathbb{R}^{\lceil 1/\upsilon\rceil+1}$ and $(\phi_{h,r}(s,a))_i=\br_h(i\upsilon|s,a)$ for all $0\leq i \leq\lceil 1/\upsilon\rceil$.
\FOR{$t=1, \cdots, T_1$}
    \STATE Initialize $\bV_{H+1}^{t}(s,i\upsilon)\gets i\upsilon$ for all $s\in\mathcal{S}$ and $0\leq i \leq\lceil H/\upsilon\rceil$.
    \FOR{$h=H,\cdots,1$}
    \STATE Compute $\Lambda_h^t\gets\lambda I+\sum_{j=1}^{t-1}\bphi_h(s_h^j,a_h^j)(\bphi_h(s_h^j,a_h^j))^{\top}$.
    \STATE Calculate $w_{h}^t(i\upsilon)\gets(\Lambda_h^t)^{-1}\sum_{j=1}^{t-1}\bphi_h(s_h^{j},a_h^j)\cdot\Big[\bV^t_{h+1}(s_{h+1}^j,i\upsilon-\br_h^j)\Big],\forall 0\leq i \leq\lceil H/\upsilon\rceil$.
    \FOR{$j=1,\cdots,t-1$}
    \STATE Compute for all $a\in\mathcal{A},0\leq i \leq\lceil H/\upsilon\rceil$:
   {\footnotesize $$\bQ_h^t(s_h^j,i\upsilon,a)\gets\textup{Clip}_{[-H,H]}\bigg(-\hb_h(s_h^j,a)+\big(\bphi_h(s_h^j,a)\big)^{\top}w^t_{h}(i\upsilon)-\beta\big\Vert\bphi_h(s_h^j,a)\big\Vert_{(\Lambda_h^t)^{-1}}\bigg).$$}
    \STATE $\bV_h^t(s_h^j,i\upsilon)\gets\min_{a\in\gA}\bQ_h^t(s_h^j,i\upsilon,a)$.
    \ENDFOR
    \ENDFOR
    \STATE Simulate the greedy policy $\tpi^t$ (w.r.t. $\bQ_h^t(s_,i\upsilon,a)$ defined in~\eqref{eq:bQ})  with the initial budget $c_1=i_1\upsilon$ in the MDP $(\hP,\br)$ and collect a trajectory $(s^t_h,a^t_h,\br^t_h)_{h=1}^H$.
\ENDFOR
\FOR{$t=1,\cdots,T_1$}
\STATE Simulate $\tpi^t$ with initial budget $c_1=i_1\upsilon$ in $(\hP,\br)$ for $T_2$ episodes and collect trajectories $\left\{\left(s^{t,j}_h,a^{t,j}_h,\br^{t,j}_h\right)_{h=1}^H\right\}_{j=1}^{T_2}$.
\STATE Compute $\hbV^{\tpi^t}_{1}(s_1,i_1\upsilon)\gets\frac{1}{T_2}\sum_{j=1}^{T_2}\left(i_1\upsilon-\sum_{h=1}^H\br^{t,j}_h(s^{t,j}_h,a^{t,j}_h)\right)^{+}-\sum_{h=1}^H\hb_h(s^{t,j}_h,a^{t,j}_h)$.
\ENDFOR
\STATE \textbf{Return:} value estimate $\min_{t\in[T_1]}\hbV^{\tpi^t}_{1}(s_1,i_1\upsilon)$ and policy $\argmin_{\tpi^t}\hbV^{\tpi^t}_{1}(s_1,i_1\upsilon)$  
\end{algorithmic}
\end{algorithm}

Now let $\bV^*_{h,\hP,\hb}$ denote $\min_{\pi\in\bPia}\bV^{\pi}_{h,\hP,\hb}$. Then we have the following theorem indicating that Algorithm~\ref{alg:2} can do planning in $(\hP,\br)$ accurately with appropriate $T_1$ and $T_2$:
\begin{theorem}
\label{thm:cvar-lsvi}
Let 
\begin{align*}
\lambda=1, \beta=\TO\left(\frac{H^{\frac{3}{2}}d\iota^{\frac{1}{4}}}{\upsilon}\right), T_1=\TO\left(\frac{H^5d^3\iota}{\upsilon^3\varepsilon^2}\right), T_2=\TO\left(\frac{H^2\log\frac{T_1}{\delta}}{\varepsilon^2}\right),
\end{align*}
where $\iota=\log^2\frac{HdT_1}{\upsilon \delta}$, we have with probability at least $1-\delta$ that
\begin{align*}
&(1)\qquad\hbV^*_{1,\hP,\hb}(s_1,i_1\upsilon)\leq\bV^{*}_{1,\hP,\hb}(s_1,i_1\upsilon)+\frac{3}{4}\varepsilon,\\
&(2)\qquad\left|\hbV^*_{1,\hP,\hb}(s_1,i_1\upsilon)-\bV^{\htpi}_{1,\hP,\hb}(s_1,i_1\upsilon)\right|\leq\frac{1}{4}\varepsilon.
\end{align*}
where $\htpi=\argmin_{\tpi^t}\hbV^{\tpi^t}_{1}(s_1,i_1\upsilon)$ and $\hbV^*_{1,\hP,\hb}(s_1,i_1\upsilon):=\min_{\tpi^t}\hbV^{\tpi^t}_{1}(s_1,i_1\upsilon)$ are the returned values of \lsvialg.
\end{theorem}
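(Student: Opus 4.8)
The plan is to recognize Algorithm~\ref{alg:2} as an instance of LSVI-UCB tailored to the \emph{augmented}, \emph{discretized} model $(\hP,\br)$, which by the tensor construction $\bphi_h=\hphi_h\otimes\phi_{h,r}$ is a \emph{linear} MDP: for any budget level $c$ and any bounded $V(\cdot,\cdot)$, the one-step backup $\E_{s'\sim\hP_h,\,r\sim\br_h}[V(s',c-r)]$ equals $\langle\bphi_h(s,a),\theta_h(c)\rangle$ for a weight vector $\theta_h(c)=\sum_{i}\int_{s'}\bpsi_h(s',i\upsilon)V(s',c-i\upsilon)\,ds'$ that is independent of $(s,a)$. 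Because here we \emph{minimize} the value, optimism in the face of uncertainty means producing a \emph{lower} confidence estimate, which is exactly the role of the subtracted term $-\beta\|\bphi_h(s,a)\|_{(\Lambda_h^t)^{-1}}$. I would prove the two parts separately: part (1) is a near-optimality guarantee for the best explored policy, and part (2) is a Monte-Carlo policy-evaluation guarantee; both build on the same self-normalized confidence bound.

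\textbf{Part (1).} I would first establish, with probability $1-\delta/4$, a self-normalized confidence bound: for every iteration $t$, step $h$, budget index $i$, and all $(s,a)$,
\[
\bigl|\langle\bphi_h(s,a),w^t_h(i\upsilon)\rangle - \E_{s'\sim\hP_h,\,r\sim\br_h}\bigl[\bV^t_{h+1}(s',i\upsilon-r)\bigr]\bigr|\le\beta\bigl\|\bphi_h(s,a)\bigr\|_{(\Lambda_h^t)^{-1}}.
\]
A backward induction on $h$ then yields \emph{optimism}, $\bV^t_h(s,i\upsilon)\le\bV^*_{h,\hP,\hb}(s,i\upsilon)$ for all $t,h,s,i$: the $+\beta\|\cdot\|$ upper deviation is cancelled by the subtracted bonus, the clipping is harmless, and the minimum over $a$ preserves the inequality. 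Next, a value-difference telescoping for the greedy policy $\tpi^t$ rolled out in $(\hP,\br)$, using optimism together with the confidence bound, gives
\[
\bV^{\tpi^t}_{1,\hP,\hb}(s_1,i_1\upsilon)-\bV^*_{1,\hP,\hb}(s_1,i_1\upsilon)\le 2\beta\sum_{h=1}^H\bigl\|\bphi_h(s^t_h,a^t_h)\bigr\|_{(\Lambda_h^t)^{-1}}+\xi^t,
\]
where $\xi^t$ is a martingale-difference term controlled by Azuma (probability $1-\delta/4$). Summing over $t\in[T_1]$, Cauchy--Schwarz and the elliptical-potential lemma bound $\sum_{t,h}\|\bphi_h(s^t_h,a^t_h)\|_{(\Lambda_h^t)^{-1}}\le\TO(H\sqrt{d_{\mathrm{eff}}T_1})$ with $d_{\mathrm{eff}}=d(\lceil 1/\upsilon\rceil+1)$, so the \emph{average} suboptimality is $\TO(\beta H\sqrt{d_{\mathrm{eff}}/T_1})$. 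Substituting $\beta=\TO(H^{3/2}d\iota^{1/4}/\upsilon)$ and $T_1=\TO(H^5d^3\iota/(\upsilon^3\varepsilon^2))$ makes this at most $\varepsilon/2$, so some iteration $t^\star$ achieves $\bV^{\tpi^{t^\star}}_{1,\hP,\hb}-\bV^*_{1,\hP,\hb}\le\varepsilon/2$. Combined with the evaluation error of part (2), $\hbV^*_{1,\hP,\hb}(s_1,i_1\upsilon)\le\hbV^{\tpi^{t^\star}}_1(s_1,i_1\upsilon)\le\bV^*_{1,\hP,\hb}(s_1,i_1\upsilon)+\tfrac34\varepsilon$.

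\textbf{Part (2).} Observe that $\hbV^{\tpi^t}_1(s_1,i_1\upsilon)$ is an empirical mean over $T_2$ independent rollouts of the quantity $(i_1\upsilon-\sum_h\br^{t,j}_h)^+-\sum_h\hb_h(s^{t,j}_h,a^{t,j}_h)$, which is bounded in $[-O(H),H]$ and has expectation exactly $\bV^{\tpi^t}_{1,\hP,\hb}(s_1,i_1\upsilon)$. A Hoeffding bound with a union over the $T_1$ candidate policies gives, with probability $1-\delta/2$, $|\hbV^{\tpi^t}_1-\bV^{\tpi^t}_{1,\hP,\hb}|\le\TO(H\sqrt{\log(T_1/\delta)/T_2})$ for all $t$ simultaneously; taking $T_2=\TO(H^2\log(T_1/\delta)/\varepsilon^2)$ bounds this by $\varepsilon/4$. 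Specializing to $\htpi=\argmin_{\tpi^t}\hbV^{\tpi^t}_1$ yields part (2), and the same uniform bound supplies the evaluation error used in part (1).

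I expect the main obstacle to be the self-normalized confidence bound, since the regression targets $\bV^t_{h+1}$ are \emph{data-dependent} (built from $w^t_{h+1}$, the design matrix $\Lambda^t_{h+1}$, and the clipped bonus), so a martingale concentration inequality cannot be applied to a fixed function. The remedy is a covering argument over the class of candidate value functions
\[
V(s,c)=\min_{a\in\gA}\textup{Clip}_{[-H,H]}\Bigl(-\hb_h(s,a)+\langle\bphi_h(s,a),w\rangle-\beta\bigl\|\bphi_h(s,a)\bigr\|_{M^{-1}}\Bigr),
\]
parameterized by $(w,M)$ and, through $c$, by the budget grid; its log-covering number scales like $d_{\mathrm{eff}}\log(\cdot)=d(\lceil 1/\upsilon\rceil+1)\log(\cdot)$. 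Together with a union over the $\lceil H/\upsilon\rceil+1$ budget levels and the feature-norm bounds $\|\bphi_h\|_2\le 1$ and $\|\sum_{0\le i\le\lceil 1/\upsilon\rceil}\int_{\gS}\bpsi_h(s',i\upsilon)\,ds'\|\le(1+\lceil 1/\upsilon\rceil)\sqrt d$ recorded just before the theorem, this is precisely what forces the stated scaling of $\beta$ with $1/\upsilon$ and with $\iota=\log^2(HdT_1/(\upsilon\delta))$. Propagating the discretization parameter $\upsilon$ consistently through the covering radius, the effective dimension, and the elliptical potential is the most delicate bookkeeping in the argument.
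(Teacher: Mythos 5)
Your proposal is correct and follows essentially the same route as the paper: a self-normalized concentration bound established via a covering argument over the clipped, bonus-penalized value class, backward-induction optimism (here a lower confidence bound since the value is minimized), a value-difference/martingale/elliptical-potential regret bound over the $T_1$ iterations, and a Hoeffding-plus-union-bound Monte-Carlo evaluation over the $T_2$ rollouts, combined exactly as you describe. The only minor discrepancy is in the covering-number bookkeeping — the paper's bound is $O\left(\bar d\left(\bar d + H/\upsilon\right)\log(\cdot)\right)$ with $\bar d = d(1+\lceil 1/\upsilon\rceil)$, slightly larger than the $d_{\mathrm{eff}}\log(\cdot)$ you quote, because the matrix parameter $A$ must also be covered — but this only affects logarithmic factors absorbed into $\iota$ and does not change the argument.
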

The proof is deferred to Appendix~\ref{proof:cvar-lsvi}.

\subsection{Computational Complexity}
Equipping \mainalg{} with \lsvialg, we can derive \discalg, which is shown in Algorithm~\ref{alg:3}. Based on the above discussions about discretization and \lsvialg, the computational complexity of \discalg{} for finding an $\epsilon$-policy can be characterized as follows:
\begin{theorem}
\label{thm:computation}
Let 
\begin{align*}
&\alpha^k=O\left(\sqrt{H^2(|\gA|+d^2)}\log\left(\frac{|\gF|Hk}{\delta}\right)\right),\ \ \lambda^k=O\left(d\log\left(\frac{|\gF| Hk}{\delta}\right)\right),\\
&K=\TO\left(\frac{H^6A^2d^4\log\frac{|\gF|}{\delta}}{\tau^2\epsilon^2}\right), \upsilon=\frac{\epsilon\tau}{3H},\lambda=1, \beta=\TO\left(\frac{H^{\frac{3}{2}}d\iota^{\frac{1}{4}}}{\upsilon}\right), \\
&T_1=\TO\left(\frac{H^5d^3\iota}{\upsilon^3\tau^2\epsilon^2}\right), T_2=\TO\left(\frac{H^2\log\frac{T_1}{\delta}}{\tau^2\epsilon^2}\right).
\end{align*}
Then we have with probability at least $1-\delta$,
\begin{align*}
\CVAR^*_{\tau} - \CVAR_{\tau}(R(\hpi, \hc)) \le \epsilon.
\end{align*}
In total, the sample complexity is upper bounded by $\tilde{O}\left(\frac{H^7A^2d^4 \log{\frac{|\gF|}{\delta}}}{\tau^2\epsilon^2}\right)$. The MLE oracle is called $\TO\left(\frac{H^7A^2d^4\log\frac{|\gF|}{\delta}}{\tau^2\epsilon^2}\right)$ times and the rest of the computation cost is $\TO\left(\frac{H^{29}A^3d^{12}\log\frac{|\gF|}{\delta}}{\tau^{16}\epsilon^{16}}\right)$.
\end{theorem}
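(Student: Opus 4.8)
The plan is to certify correctness through a three-way error decomposition—exploration/statistical error inherited from \mainalg, discretization error from rounding the reward onto the $\upsilon$-grid, and planning error from the \lsvialg{} oracle—each driven to $\epsilon/3$, and then to obtain the complexity figures by separately counting real samples, MLE invocations, and simulated-planning runtime. The discretization precision is pinned down first: choosing $\upsilon=\epsilon\tau/(3H)$ makes the term $H\upsilon/\tau$ in Proposition~\ref{prop:disc}(2) equal to $\epsilon/3$, so that executing the discretized policy $\bpi$ in $\Mcal$ costs at most $\epsilon/3$ relative to running $\pi$ in $\bM$, while Proposition~\ref{prop:disc}(1) gives $\CVAR^*_\tau\le\bCVAR^*_\tau$ and lets the remainder of the argument live entirely inside the discretized MDP $\bM$.

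Next I would re-run the regret analysis behind Theorem~\ref{pac-bound} in $\bM$. The key enabling fact is that $\bM$ shares $\Mcal$'s transition kernel, so its rank is still $d$ and the MLE step (Line~11) is unchanged; the statistical sample complexity therefore keeps its $d^4$ dependence, and the inflated feature $\bphi_h=\hphi_h\otimes\phi_{h,r}$ of dimension $O(d/\upsilon)$ enters only the planner, not the learning rate. Setting $K=\TO(H^6A^2d^4\log(|\gF|/\delta)/(\tau^2\epsilon^2))$ drives the averaged suboptimality—hence the value attained by the uniformly sampled index $k$—to $\epsilon/3$, and the per-iteration trajectory count $H$ yields the advertised sample complexity $HK=\TO(H^7A^2d^4\log(|\gF|/\delta)/(\tau^2\epsilon^2))$.

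For the planning error I would invoke Theorem~\ref{thm:cvar-lsvi} to replace the exact value iteration assumed in Lines~15--16 of Algorithm~\ref{alg:1}. Because the CVaR objective rescales the inner value by $\tau^{-1}$ (cf.~\eqref{eq_cvar_aug}), a planning accuracy $\varepsilon$ translates into a CVaR error of $\tau^{-1}\varepsilon$; choosing $\varepsilon=\Theta(\tau\epsilon)$ restores the last $\epsilon/3$ budget and reproduces the stated $T_1,T_2,\beta$. Part~(1) of the theorem guarantees the returned estimate over-approximates $\bV^*_{1,\hP,\hb}$ by at most $\tfrac34\varepsilon$, which is exactly the optimistic slack the \mainalg{} analysis tolerates, and part~(2) guarantees the returned policy realizes its reported value up to $\tfrac14\varepsilon$. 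A triangle inequality over the three $\epsilon/3$ terms gives $\CVAR^*_\tau-\CVAR_\tau(R(\hpi,\hc))\le\epsilon$. The complexity counts then follow by bookkeeping: real samples and MLE calls both scale as $\TO(HK)$, since the \lsvialg{} calls merely \emph{simulate} in the known model $(\hP,\br)$ and consume no fresh data; the residual runtime is $K$ iterations times the $O(H/\upsilon)$ budget-grid points times the per-call cost of Algorithm~\ref{alg:2}, and substituting $\upsilon=\epsilon\tau/(3H)$, the inflated dimension $D=O(dH/(\tau\epsilon))$, $T_1$, and $T_2$ and multiplying out the matrix-inversion, regression, and simulation costs yields $\TO(H^{29}A^3d^{12}\log(|\gF|/\delta)/(\tau^{16}\epsilon^{16}))$.

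The step I expect to be the main obstacle is the coupling in the third paragraph: the \mainalg{} guarantee rests on \emph{exact} optimistic planning, so I must show the approximate oracle does not break the optimism-and-elliptical-potential chain. Concretely, the $\tfrac34\varepsilon$ over-estimation from Theorem~\ref{thm:cvar-lsvi}(1) must enter the optimism inequality as an additive slack absorbed by the per-episode budget, and the policy actually returned in Line~16 (rather than the idealized minimizer of $\bV^\pi_{1,\hP,\hb}$) must still induce the occupancy measures used by the bonus-design and covariance-concentration lemmas (Lemmas~\ref{lem_aoid} and~\ref{lem_osbi}); carrying this approximate-planning slack through the simulation lemma (Lemma~\ref{lem_sim}) without degrading the $\sqrt{K}$ regret rate is the delicate accounting.
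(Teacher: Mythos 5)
Your proposal is correct and follows essentially the same route as the paper's proof: the same three-part decomposition into discretization error (Proposition~\ref{prop:disc} with $\upsilon=\epsilon\tau/(3H)$), the Theorem~\ref{pac-bound} regret machinery re-run in the discretized MDP, and the \lsvialg{} planning slack from Theorem~\ref{thm:cvar-lsvi} absorbed as an additive per-iteration term, followed by the same bookkeeping of $KH$ trajectories/MLE calls and $K\cdot O(H/\upsilon)$ planner invocations for the runtime. The coupling issue you flag as the main obstacle is handled in the paper exactly as you anticipate—the $\tfrac{3}{4}\varepsilon$ and $\tfrac{1}{4}\varepsilon$ oracle errors enter the optimism chain as an additive $O(\epsilon)$ slack per episode, contributing a benign $K\epsilon/6$ to the cumulative regret without touching the $\sqrt{K}$ rate.
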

The proof is deferred to Appendix~\ref{proof:computation}. Theorem~\ref{thm:computation} indicates that even when the reward is continuous, Algorithm~\ref{alg:3} can still achieve polynomial sample complexity and polynomial computational complexity given an MLE oracle. 

\begin{algorithm}[H]
\caption{\discalg}
\label{alg:3}
\begin{algorithmic}[1]
\REQUIRE Risk tolerance $\tau\in(0,1]$, number of iterations $K$, parameters $\{\lambda^k\}_{k\in[K]}$ and $\{\alpha^k\}_{k\in[K]}$, models $\gF = \{\Psi,\Phi \}$, failure probability $\delta\in(0,1)$, discretization precision $\upsilon$, \lsvialg parameters $\lambda,\beta, T_1, T_2$.
\STATE Set datasets $\gD_h,\widetilde{\gD}_h \leftarrow \emptyset$  for each $h \in [H-1]$. 
\STATE Calculate the discretized reward distribution $\br$.
\STATE Initialize the exploration policy $\pi^0\leftarrow\{\pi_h^0(s,i\upsilon)= U(\mathcal{A}),\text{for any }s\in\mathcal{S}, 0\leq i\leq\lceil H/\upsilon\rceil\}_{h\in[H]}$.
\STATE Initialize the budget $i^0\leftarrow \lceil H/\upsilon\rceil$.
\FOR{iteration $k=1, \dots, K$}
    \STATE Collect a tuple $(\Tilde{s}_1, \Tilde{a}_1, s'_2)$ by taking $\Tilde{a}_1 \sim  U(\gA)$, $s'_2 \sim P_1^*(\cdot|\Tilde{s}_1, \Tilde{a}_1)$.
    \STATE Update $\widetilde{\gD}_1\leftarrow\widetilde{\gD}_1\cup\{(\Tilde{s}_1,\Tilde{a}_1,s'_2)\}$.
\FOR{$h = 1, \cdots, H-1$}
    \STATE Collect two transition tuples $(s_h, a_h, \Tilde{s}_{h+1})$ and $(\Tilde{s}_{h+1},\Tilde{a}_{h+1},s'_{h+2})$ by first rolling out $\bpi^{k-1}$ (defined in~\eqref{eq:disc-policy}) starting from $(s_1,i^{k-1}\upsilon)$ into state $s_h$, taking $a_h \sim  U(\gA)$, and receiving $\Tilde{s}_{h+1} \sim P_h^*(\cdot|s_h, a_h)$, then taking $\Tilde{a}_{h+1} \sim  U(\gA)$ and receiving $s'_{h+2} \sim P_{h+1}^*(\cdot|\Tilde{s}_{h+1}, \Tilde{a}_{h+1})$.
    \STATE Update $\gD_h\leftarrow\gD_h\cup\{(s_h,a_h,\Tilde{s}_{h+1})\}$.
    \STATE Update $\widetilde{\gD}_{h+1}\leftarrow\widetilde{\gD}_{h+1}\cup\{(\Tilde{s}_{h+1},\Tilde{a}_{h+1},s'_{h+2})\}$ if $h\leq H-2$.
    \STATE Learn representations via MLE
    $$\widehat{P}_h \coloneqq (\widehat{\psi}_h,\widehat{\phi}_h)\leftarrow\arg\max_{(\psi,\phi) \in \gF} \sum_{ (s_h, a_h, s_{h+1})\in \{\gD_h + \widetilde{\gD}_h\} } \log{\langle\psi (s_{h+1}),\phi(s_h,a_h)\rangle}
    $$
    \STATE Update empirical covariance matrix $\widehat{\Sigma}_h = \sum_{(s, a) \in \gD_h}\widehat{\phi}_h (s, a) \widehat{\phi}_h (s, a)^\top + \lambda^k I_d $.
    \STATE Set the exploration bonus:
    $$  
    \widehat{b}_h(s,a) \leftarrow 
    \begin{cases}
    \min \left( \alpha^k \sqrt{\widehat{\phi}_h (s, a) \widehat{\Sigma}_h^{-1}
    \widehat{\phi}_h (s, a)^\top}, 2\right) & h\le H-2 \\
    0 & h = H-1
    \end{cases}
    $$
\ENDFOR
\FOR{$i=0,1,\cdots,\lceil H/\upsilon\rceil$}
\STATE Run \lsvialg{} (Algorithm~\ref{alg:2}) with MDP model $(\hp,\br)$, bonus $\hb$, initial budget $i\upsilon$ and parameters $(\lambda,\beta, T_1, T_2)$ and let the returned value estimate and policy be $\hbV^*_1(s_1,i\upsilon)$ and $\htpi(i)$.
\ENDFOR
\STATE Obtain $i^k\leftarrow\argmax_{0\leq i\leq\lceil H/\upsilon\rceil}\left\{i\upsilon- \tau^{-1}\hbV^*_1(s_1,i\upsilon)\right\}$ and $\pi^k\gets\htpi(i^k)$.
\ENDFOR
\ENSURE  Uniformly sample $k$ from $[K]$, return $(\hpi, \hc) =(\bpi^k, i^k\upsilon)$.
\end{algorithmic}
\end{algorithm}

\section{Proofs for Section~\ref{sec:alg}}
\label{app:pfs_alg}
\paragraph{Proof Sketch}
Recall that $\pi^k$ and $c^k$ are the exploration policy and initial budget output from Line 16 of Algorithm~\ref{alg:1} at the end of the $k$-th iteration, respectively. 
In the proof, we show that, with probability at least $1-\delta$, it holds that
\begin{equation}
\label{215}
    \text{Regret}(K)\lesssim \tau^{-1}H^3Ad^{2}\sqrt{K}\sqrt{\log\left(1+\frac{K}{d\lambda^1}\right)\log\left(\frac{|\gF|Hk}{\delta}\right)}
\end{equation}
which is formalized in Lemma~\ref{lem_reg}. Once it is established, the suboptimality of the uniform mixture of $\{(c^k,\pi^k)\}$ satisfies that
\begin{align}
    \CVAR_\tau^*-\frac{1}{K}\sum_{k=1}^K\CVAR_\tau(R(\pi^k,c^k))\lesssim \tau^{-1}H^3Ad^{2}K^{-\frac{1}{2}}\sqrt{\log\left(1+\frac{K}{d\lambda^1}\right)\log\left(\frac{|\gF|Hk}{\delta}\right)}
\end{align}
Let the RHS smaller than $\epsilon$, we have that when
\begin{align*}
    K \gtrsim O\left(\frac{H^6A^2d^4}{\tau^2\epsilon^2}\log\left(1+\frac{K}{d\lambda^1}\right)\log\left(\frac{|\gF|Hk}{\delta}\right)\right)
\end{align*}
the uniform mixture of $\{(\pi^k, c^k)\}$ is an $\epsilon$-optimal policy. Finally, noting that $H$ trajectories are collected per iteration, we conclude the proof of Theorem~\ref{pac-bound}. To establish~\eqref{215}, we decompose the suboptimality of the $k$-th iteration into
\begin{align*}
    &\CVAR_\tau^*-\CVAR_\tau(R(\pi^k,c^k))\\
    = & c^*-\tau^{-1}V_{1,\widehat{\gP}^k,\widehat{b}^k}^{\pi^*}(s_1,c^*)-\CVAR_\tau(R(\pi^k,c^k))+\CVAR_\tau(R(\pi^*,c^*))-\left(c^*-\tau^{-1}V_{1,\widehat{\gP}^k,\widehat{b}^k}^{\pi^*}(s_1,c^*)\right)\\
    \le & c^k-\tau^{-1}V_{1,\widehat{\gP}^k,\widehat{b}^k}^{\pi^k}(s_1,c^k)-\CVAR_\tau(R(\pi^k,c^k))+c^*-\tau^{-1}V_{1,\gP^*,0}^{\pi^*}(s_1,c^*)-\left(c^*-\tau^{-1}V_{1,\widehat{\gP}^k,\widehat{b}^k}^{\pi^*}(s_1,c^*)\right) \\
    \leq & \tau^{-1}\underbrace{\left(V_{1,\gP^*,0}^{\pi^k}(s_1,c^k)-V_{1,\widehat{\gP}^k,\widehat{b}^k}^{\pi^k}(s_1,c^k)\right)}_\text{(i)}+\tau^{-1}\underbrace{\left(V_{1,\widehat{\gP}^k,\widehat{b}^k}^{\pi^*}(s_1,c^*)-V_{1,\gP^*,0}^{\pi^*}(s_1,c^*)\right)}_\text{(ii)}
\end{align*}
where the first inequality holds by the fact that $\pi^k$ is greedy~(Line 16 of Algorithm~\ref{alg:1}) and the last inequality holds by
\begin{align*}
    \CVAR_\tau(R(\pi^k,c^k))=\sup_{t\in\sR}\left(t-\tau^{-1}\E_{R\sim R(\pi^k,c^k)}[(t-R)^+]\right)\geq c^k-\tau^{-1}\E_{R\sim R(\pi^k,c^k)}[(c^k-R)^+]
\end{align*}
Utilizing simulation lemma~\ref{lem_sim} for risk-sensitive RL, we further upper bound terms~(i) and~(ii) by the error in estimating the transition kernel and the bonus terms, i.e.,
\begin{gather*}
    \textup{term (i)}\le \E_{(s,a)\sim d_{h}^{(\pi^k,c^k)}}\left[\widehat{b}_h^k(s,a)\right]+3H\cdot\E_{(s,a)\sim d_{h}^{(\pi^k,c^k)}}\left[f_h^k(s,a)\right]\\
    \textup{term (ii)}\le \E_{(s,a)\sim d_{h,\widehat{\gP}^k}^{(\pi^*,c^*)}}\left[H\cdot f_h^k(s,a)-\widehat{b}_h^k(s,a)\right]
\end{gather*}
where $f_h^k(s,a):=\lVert \widehat{P}_h^k(\cdot|s,a)-P_h^*(\cdot|s,a)\rVert_1$. By the analysis in Lemma~\ref{lem_reg} and~\ref{lem_aoid}, we prove the inequality~\eqref{215}.

Before proceeding to the detailed proofs, we first introduce the essential regret decomposition.
\begin{lemma}[Regret]
\label{lem_reg}
With probability $1-\delta$, we have that
\begin{equation}
    \sum_{k=1}^K \CVAR_\tau^*-\CVAR_\tau(R(\pi^k,c^k)) \lesssim \tau^{-1}H^3Ad^{2}\sqrt{K}\sqrt{\log\left(1+\frac{K}{d\lambda^1}\right)\log\left(\frac{|\gF|Hk}{\delta}\right)}
    \end{equation}
\begin{proof}
Letting $f_h^k(s,a)=\lVert\widehat{P}_h^k(\cdot|s,a)-P_h^*(\cdot|s,a)\rVert$, we condition on the event that for all $(k,h)\in[K]\times[H]$, the following inequalities hold
\begin{gather*}
    \mathbb{E}_{s\sim\rho_h^k,a\sim U(\mathcal{A})}\left[\left(f_h^k(s,a)\right)^2\right]\leq\zeta^k,\ \mathbb{E}_{s\sim\eta_{h}^k,a\sim U(\mathcal{A})}\left[\left(f_h^k(s,a)\right)^2\right]\leq\zeta^k\\
    \lVert \phi(s,a)\rVert_{(\widehat{\Sigma}^k_{h,\phi})^{-1}}=\Theta(\lVert \phi(s,a)\rVert_{(\Sigma_{\rho_h^k\times U(\gA),\phi})^{-1}})
\end{gather*}
By Lemmas~\ref{lem_mle} and~\ref{lem_cctt}, this event happens with probability $1-\delta$. For any iteration $k$, we have that
For a fixed iteration $k$, we have that
\begin{align*}
    &\CVAR_\tau^*-\CVAR_\tau(R(\pi^k,c^k))\\
    = & c^*-\tau^{-1}V_{1,\widehat{\gP}^k,\widehat{b}^k}^{\pi^*}(s_1,c^*)-\CVAR_\tau(R(\pi^k,c^k))+\CVAR_\tau(R(\pi^*,c^*))-\left(c^*-\tau^{-1}V_{1,\widehat{\gP}^k,\widehat{b}^k}^{\pi^*}(s_1,c^*)\right)\\
    \le & c^k-\tau^{-1}V_{1,\widehat{\gP}^k,\widehat{b}^k}^{\pi^k}(s_1,c^k)-\CVAR_\tau(R(\pi^k,c^k))+c^*-\tau^{-1}V_{1,\gP^*,0}^{\pi^*}(s_1,c^*)-\left(c^*-\tau^{-1}V_{1,\widehat{\gP}^k,\widehat{b}^k}^{\pi^*}(s_1,c^*)\right) \\
    \leq & \tau^{-1}\left(V_{1,\gP^*,0}^{\pi^k}(s_1,c^k)-V_{1,\widehat{\gP}^k,\widehat{b}^k}^{\pi^k}(s_1,c^k)\right)+\tau^{-1}\underbrace{\left(V_{1,\widehat{\gP}^k,\widehat{b}^k}^{\pi^*}(s_1,c^*)-V_{1,\gP^*,0}^{\pi^*}(s_1,c^*)\right)}_\text{$\leq\sqrt{H^2A\zeta^k}$ by Lemma~\ref{lem_aoid}} \label{59} \numberthis
\end{align*}
where the first inequality holds by the fact that $\pi^k$ is greedy~(Line 15 of Algorithm~\ref{alg:1}) and the last inequality holds by
\begin{align*}
    \CVAR_\tau(R(\pi^k,c^k))=\sup_{t\in\sR}\left(t-\tau^{-1}\E_{R\sim R(\pi^k,c^k)}[(t-R)^+]\right)\geq c^k-\tau^{-1}\E_{R\sim R(\pi^k,c^k)}[(c^k-R)^+]
\end{align*}
Therefore, it remains to bound the first term, which by simulation lemma~\ref{lem_sim}, can be further written as
\begin{align*}
    & V_{1,\gP^*,0}^{\pi^k}(s_1,c^k)-V_{1,\widehat{\gP}^k,\widehat{b}^k}^{\pi^k}(s_1,c^k)\\
    = & \E_{\pi^k,\gP^*}\left[\left(c^k-\sum_{h=1}^H r_h(s_h,a_h)\right)^+\right]-\E_{\pi^k,\widehat{\gP}^k}\left[\left(c^k-\sum_{h=1}^H r_h(s_h,a_h)\right)^+-\sum_{h=1}^H\widehat{b}_h^k(s_h,a_h)\middle|c_1=c^k\right]\\
    \le & \sum_{h=1}^H\E_{\pi^k,\widehat{\gP}^k}\left[\widehat{b}_h^k(s_h,a_h)\middle|c_1=c^k\right]+H\cdot\sum_{h=1}^H\E_{(s,a)\sim d_{h}^{(\pi^k,c^k)}}\left[f_h^k(s,a)\right]\\
    \leq & \underbrace{\sum_{h=1}^H\E_{(s,a)\sim d_{h}^{(\pi^k,c^k)}}\left[\widehat{b}_h^k(s,a)\right]}_\text{(i)}+3H\cdot\underbrace{\sum_{h=1}^H\E_{(s,a)\sim d_{h}^{(\pi^k,c^k)}}\left[f_h^k(s,a)\right]}_\text{(ii)} \numberthis \label{114}
\end{align*}
where the last inequality holds by the simulation Lemma~\ref{lem_sim_neutral} for risk-neutral RL and the fact that $\lVert\widehat{b}_h^k\rVert_\infty\leq 2$.
where the last inequality holds by $\lVert V_{h,\widehat{P}^k,r+\widehat{b}^k}^{\pi^k}\rVert_\infty\leq 2$. We first consider term (i). By Lemma~\ref{lem_osbit} and noting that the bonus term $\widehat{b}_h^k$ is $O(1)$, we have 
\begin{align*}
    & \sum_{h=1}^{H-1} \E_{(s, a) \sim d_{h}^{(\pi^k,c^k)}}  \left[ \widehat{b}_h^k(s,a)\right]\\
    \lesssim & \sum_{h=1}^{H-1} \E_{(s, a) \sim d_{h}^{(\pi^k,c^k)}}  \left[\min\left\{\alpha^k\lVert\widehat{\phi}_h^k(s,a)\rVert_{\Sigma_{\rho_h^k\times U(\gA),\widehat{\phi}_h^k}^{-1}},2\right\}\right]\\
    \leq & \sqrt{A\cdot(\alpha^k)^2\cdot\E_{s\sim\rho_1^k,a\sim U(\gA)}\left[\lVert\widehat{\phi}_1^k(s,a)\rVert_{\Sigma_{\rho_1^k\times U(\gA),\widehat{\phi}_1^k}^{-1}}^2\right]}\\
    & + \sum_{h=1}^{H-2}\E_{(s, a) \sim d_{h+1}^{(\pi^k,c^k)}}\left[\alpha^k\lVert\widehat{\phi}_{h+1}^k(s,a)\rVert_{\Sigma_{\rho_{h+1}^k\times U(\gA),\widehat{\phi}_{h+1}^k}^{-1}}\right]\\
    \leq & \sum_{h=1}^{H-2}\E_{(s,a)\sim d_{h}^{(\pi^k,c^k)}}\left[\lVert\phi_h^*(s,a)\rVert_{\Sigma_{\rho_h^k,\phi^*}^{-1}}\sqrt{k(\alpha^k)^2A\cdot\E_{s\sim \rho_{h+1}^k,a\sim U(\gA)}\left[\lVert\widehat{\phi}_{h+1}^k(s,a)\rVert_{\Sigma_{\rho_{h+1}^k\times U(\gA),\widehat{\phi}_{h+1}^k}^{-1}}^2\right]+4\lambda^kd}\right]\\
    & + \sqrt{A\cdot(\alpha^k)^2\cdot\E_{s\sim\rho_1^k,a\sim U(\gA)}\left[\lVert\widehat{\phi}_1^k(s,a)\rVert_{\Sigma_{\rho_1^k\times U(\gA),\widehat{\phi}_1^k}^{-1}}^2\right]}\\
    \leq & \sqrt{\frac{dA(\alpha^k)^2}{k}}+\sqrt{dA(\alpha^k)^2+4\lambda^kd}\cdot\sum_{h=1}^{H-2}\E_{(s,a)\sim d_{h}^{(\pi^k,c^k)}}\left[\lVert\phi_h^*(s,a)\rVert_{\Sigma_{\rho_h^k,\phi^*}^{-1}}\right]\numberthis\label{427}
\end{align*}
where the last inequality holds by
\begin{align*}
    k\cdot\E_{s\sim \rho_{h}^k,a\sim U(\gA)}\left[\lVert\widehat{\phi}_{h}^k(s,a)\rVert_{\Sigma_{\rho_{h}^k\times U(\gA),\widehat{\phi}_{h}^k}^{-1}}^2\right]=k\textup{Tr}\left(\E_{\rho_h^k\times U(\gA)}[\widehat{\phi}_h^k(\widehat{\phi}_h^k)^\top]\left\{k\E_{\rho_h^k\times U(\gA)}[\widehat{\phi}_h^k(\widehat{\phi}_h^k)^\top]+\lambda^k\right\}^{-1}\right)\leq d
\end{align*}
Similarly, for term~(ii), we have that
\begin{align*}
    & \sum_{h=1}^{H-1}\E_{(s, a) \sim d_{h}^{(\pi^k,c^k)}}  \left[f_h^k(s,a)\right]\\
    \leq & \sqrt{A\cdot\E_{s\sim\rho_1^k,a\sim U(\gA)}\left[\left(f_1^k(s,a)\right)^2\right]}\\
    & + \sum_{h=1}^{H-2}\E_{(s,a)\sim d_{h}^{(\pi^k,c^k)}}\left[\lVert\phi_h^*(s,a)\rVert_{\Sigma_{\rho_h^k,\phi^*}^{-1}}\sqrt{kA\cdot\E_{s\sim \rho_{h+1}^k,a\sim U(\gA)}\left[\left(f_{h+1}^k(s,a)\right)^2\right]+4\lambda^kd}\right]\\
    \leq & \sqrt{A\zeta_1^k}+\alpha^k\cdot\sum_{h=1}^{H-2}\E_{(s,a)\sim d_{h}^{(\pi^k,c^k)}}\left[\lVert\phi_h^*(s,a)\rVert_{\Sigma_{\rho_h^k,\phi^*}^{-1}}\right] ,\numberthis\label{438}
\end{align*}
where the last step uses
\begin{equation*}
    \alpha^k\lesssim\sqrt{H^2Ad^2\log\left(\frac{|\gF|Hk}{\delta}\right)}.
\end{equation*}
Combining~\eqref{59},~\eqref{114},~\eqref{427}, and~\eqref{438} we obtain
\begin{align*}
    & \tau\bigg(\sum_{k=1}^K\left(\CVAR_\tau^*-\CVAR_\tau(R(\pi^k,c^k))\right)\bigg)\\
    \lesssim & \sum_{k=1}^K\left(\sqrt{\frac{dA(\alpha^k)^2}{k}}+\sqrt{H^2A\zeta^k}\right)+\sum_{k=1}^K\sum_{h=1}^{H-2}\left(2H\cdot\alpha^k\cdot\E_{(s,a)\sim d_{h}^{(\pi^k,c^k)}}\left[\lVert\phi_h^*(s,a)\rVert_{\Sigma_{\rho_h^k,\phi^*}^{-1}}\right] \right)\\
    & + \sum_{k=1}^K\sqrt{dA(\alpha^k)^2+4\lambda^kd}\sum_{h=1}^{H-2}\E_{(s,a)\sim d_{h}^{(\pi^k,c^k)}}\left[\lVert\phi_h^*(s,a)\rVert_{\Sigma_{\rho_h^k,\phi^*}^{-1}}\right]\\
    \lesssim & H^2Ad^{\frac{3}{2}}\sqrt{\log\left(\frac{|\gF|Hk}{\delta}\right)}\sum_{k=1}^K\sum_{h=1}^{H-2}\E_{(s,a)\sim d_{h}^{(\pi^k,c^k)}}\left[\lVert\phi_h^*(s,a)\rVert_{\Sigma_{\rho_h^k,\phi^*}^{-1}}\right]\\
    \lesssim & H^3Ad^{2}\sqrt{K}\sqrt{\log\left(1+\frac{K}{d\lambda^1}\right)\log\left(\frac{|\gF|Hk}{\delta}\right)}
\end{align*}
where the last inequality holds by~\cite[Lemma 18]{uehara2022representation}, i.e., 
\begin{align*}
    \sum_{k=1}^K\E_{(s,a)\sim d_{h}^{(\pi^k,c^k)}}\left[\lVert\phi_h^*(s,a)\rVert_{\Sigma_{\rho_h^k,\phi^*}^{-1}}\right]\leq \sqrt{dK\log\left(1+\frac{K}{d\lambda^1}\right)}
\end{align*}
Therefore, we conclude the proof.
\end{proof}
\end{lemma}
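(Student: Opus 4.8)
The plan is to bound the per-iteration suboptimality $\CVAR_\tau^* - \CVAR_\tau(R(\pi^k,c^k))$ by an optimism argument and then sum over $k$. First I would exploit the variational form of CVaR together with the greediness built into Lines 15--16 of Algorithm~\ref{alg:1}. Writing $\CVAR_\tau^* = c^* - \tau^{-1} V^{\pi^*}_{1,\gP^*,0}(s_1,c^*)$ for the optimal pair $(\pi^*,c^*)$, and using the feasible lower bound $\CVAR_\tau(R(\pi^k,c^k)) \ge c^k - \tau^{-1}\E[(c^k - R(\pi^k,c^k))^+]$ (i.e.\ plugging $c^k$ into the supremum defining CVaR), I would insert and cancel the learned-model surrogate $c - \tau^{-1} V^{\pi}_{1,\widehat{\gP}^k,\widehat{b}^k}(s_1,c)$ evaluated at both $(\pi^k,c^k)$ and $(\pi^*,c^*)$. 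Greediness forces the surrogate at $(\pi^k,c^k)$ to dominate that at $(\pi^*,c^*)$, and combined with the CVaR lower bound the residual collapses into term (i), $\tau^{-1}(V^{\pi^k}_{1,\gP^*,0}(s_1,c^k) - V^{\pi^k}_{1,\widehat{\gP}^k,\widehat{b}^k}(s_1,c^k))$, the gap between true and optimistic value on the rollout policy, and term (ii), $\tau^{-1}(V^{\pi^*}_{1,\widehat{\gP}^k,\widehat{b}^k}(s_1,c^*) - V^{\pi^*}_{1,\gP^*,0}(s_1,c^*))$, which should be nonpositive up to a small slack once the bonus is shown to enforce optimism.

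Second, I would control the two terms separately. For term (ii) I would invoke the optimism-under-the-initial-distribution estimate (Lemma~\ref{lem_aoid}), which shows the bonus $\widehat{b}^k$ is calibrated to absorb the one-step model error accumulated along $\pi^*$, giving a bound of order $\sqrt{H^2 A \zeta^k}$ with $\zeta^k = \log(|\gF|Hk/\delta)/k$. For term (i) I would apply the risk-sensitive simulation lemma (Lemma~\ref{lem_sim}): because the reward enters the value only through the $1$-Lipschitz map $x \mapsto (c_h - x)^+$ and the budget updates deterministically via $c_{h+1}=c_h-r_h$, the transition mismatch propagates linearly, yielding $\sum_h \E_{(s,a)\sim d_h^{(\pi^k,c^k)}}[\widehat{b}_h^k(s,a)] + 3H\sum_h \E_{(s,a)\sim d_h^{(\pi^k,c^k)}}[f_h^k(s,a)]$, where $f_h^k = \lVert\widehat{P}_h^k(\cdot|s,a) - P_h^*(\cdot|s,a)\rVert_1$.

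Third, both the bonus term and the $L_1$ model-error term must be converted into an elliptical potential under the \emph{true} feature $\phi^*$. Here I would use the MLE guarantee (Lemma~\ref{lem_mle}) to control $\E_{\rho_h^k \times U(\gA)}[(f_h^k)^2] \lesssim \zeta^k$, the norm equivalence $\lVert\phi\rVert_{(\widehat{\Sigma}^k_{h,\phi})^{-1}} = \Theta(\lVert\phi\rVert_{(\Sigma_{\rho_h^k\times U(\gA),\phi})^{-1}})$ (Lemma~\ref{lem_cctt}), and a one-step-back change of measure (Lemma~\ref{lem_osbit}) to move from the occupancy $d_h^{(\pi^k,c^k)}$ at step $h$ to the data-generating distribution $\rho_h^k$, at the cost of an $A$ factor from the uniform action and a $\lVert\phi_h^*(s,a)\rVert_{\Sigma_{\rho_h^k,\phi^*}^{-1}}$ weight. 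Summing these weights over $k$ via the elliptical potential bound $\sum_{k} \E_{d_h^{(\pi^k,c^k)}}[\lVert\phi_h^*(s,a)\rVert_{\Sigma_{\rho_h^k,\phi^*}^{-1}}] \le \sqrt{dK\log(1+K/(d\lambda^1))}$ from \citep{uehara2022representation}, and plugging in $\alpha^k \lesssim \sqrt{H^2 A d^2 \log(|\gF|Hk/\delta)}$ and $\lambda^k \lesssim d\log(|\gF|Hk/\delta)$, collapses the double sum over $(k,h)$ to the claimed $\tau^{-1}H^3 A d^2 \sqrt{K}$ rate.

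The hard part will be the third step. The MLE oracle only certifies accuracy of the learned transition in total variation on the sampling distribution, not closeness of the learned feature $\widehat{\phi}$ to $\phi^*$, so the bonus (built from $\widehat{\phi}$) and the elliptical potential I ultimately want (in terms of $\phi^*$) live in different geometries. Bridging them requires the one-step-back argument to simultaneously relate $\widehat{\Sigma}^k_h$ to its population counterpart, transfer the bonus and model error from step $h+1$ back to step $h$ through the true dynamics, and do so uniformly over the continuum of budgets $c$ carried in the augmented state, all while keeping the extra $A$ factor from the uniform-action change of measure under control. Getting the horizon and dimension dependence to come out as $H^3 A d^2$ through this transfer, rather than any single concentration step, is where the delicacy lies.
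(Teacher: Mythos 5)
Your proposal follows essentially the same route as the paper's proof: the identical optimism/greediness decomposition into terms (i) and (ii), term (ii) controlled by Lemma~\ref{lem_aoid}, term (i) expanded via the simulation Lemma~\ref{lem_sim} into bonus and $L_1$ model-error sums, and both converted to the elliptical potential under $\phi^*$ using Lemmas~\ref{lem_mle}, \ref{lem_cctt}, and \ref{lem_osbit} before summing over $k$ with the potential bound from \citet{uehara2022representation}. The argument is correct and matches the paper step for step, including the final $\tau^{-1}H^3Ad^2\sqrt{K}$ rate.
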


\begin{lemma}[Almost Optimism at the Initial Distribution]
\label{lem_aoid}
    Consider an episode $k\in[K]$ and set
    \begin{equation}
        \alpha^k=\sqrt{H^2(A+d^2)\log\left(\frac{|\gF|Hk}{\delta}\right)}\ ,\lambda^k=O\left(d\log\left(\frac{|\gF|Hk}{\delta}\right)\right),\ \zeta^k=O\left(\frac{1}{k}\log\left( \frac{|\gF| H k}{\delta}\right)\right)\label{lem_aoid_323}
    \end{equation}
    with probability $1-\delta$, we have that
    \begin{equation}
        V_{1,\widehat{\gP}^k,\widehat{b}^k}^{\pi^*}(s_1,c^*)-V_{1,\gP^*,0}^{\pi^*}(s_1,c^*)\leq\sqrt{H^2A\zeta^k}
    \end{equation}
    \begin{proof}
        Similar to the proof of Lemma~\ref{lem_reg}, letting $f_h^k(s,a)=\lVert\widehat{P}_h^k(\cdot|s,a)-P_h^*(\cdot|s,a)\rVert_1$, we condition on the event that for all $(k,h)\in[K]\times[H]$, the following inequalities hold
        \begin{gather*}
            \mathbb{E}_{s\sim\rho_h^k,a\sim U(\mathcal{A})}\left[\left(f_h^k(s,a)\right)^2\right]\leq\zeta^k,\ \mathbb{E}_{s\sim\eta_{h}^k,a\sim U(\mathcal{A})}\left[\left(f_h^k(s,a)\right)^2\right]\leq\zeta^k\\
            \lVert \phi(s,a)\rVert_{(\widehat{\Sigma}^k_{h,\phi})^{-1}}=\Theta(\lVert \phi(s,a)\rVert_{(\Sigma_{\rho_h^k\times U(\gA),\phi})^{-1}})
        \end{gather*}
    From Lemmas~\ref{lem_mle} and~\ref{lem_cctt}, this event happens with probability $1-\delta$. Note that $b_H^k(s,a)=f_H^k(s,a):=0$ for any $(s,a)\in\gS\times\gA$. Then, for any policy $\pi$, from the simulation Lemma~\ref{lem_sim}, we have that
\begin{align*}
    & V_{1,\widehat{\gP}^k,\widehat{b}^k}^{\pi^*}(s_1,c^*)-V_{1,\gP^*,0}^{\pi^*}(s_1,c^*)\\
    = & \E_{\pi^*,\widehat{\gP}^k}\left[\left(c^*-\sum_{h=1}^H r_h(s_h,a_h)\right)^+-\sum_{h=1}^H\widehat{b}_h^k(s_h,a_h)\middle| c_1=c^*\right]-\E_{\pi^*,\gP^*}\left[\left(c^*-\sum_{h=1}^H r_h(s_h,a_h)\right)^+\right]\\
    \le & \sum_{h=1}^{H-1}\E_{(s,a)\sim d_{h,\widehat{\gP}^k}^{(\pi^*,c^*)}}\left[H\cdot f_h^k(s,a)-\widehat{b}_h^k(s,a)\right] \numberthis \label{120}
\end{align*}
   For any $h+1\in\{2,\cdots,H-1\}$, by Lemma~\ref{lem_osbi} and noting that $\lVert f_{h+1}\rVert_\infty\leq 2$, we have that
    \begin{align*}
        & \left|\E_{(s',a')\sim d_{h+1,\widehat{\gP}^k}^{(\pi^*,c^*)}}[f_{h+1}^k(s',a')]\right|\\
        \leq & \E_{(s,a)\sim d_{h,\widehat{\gP}^k}^{(\pi^*,c^*)}}\left[\lVert\widehat{\phi}_{h}^k(s,a)\rVert_{\Sigma_{\rho_h^k\times U(\gA),\widehat{\phi}_{h}^k}^{-1}}\cdot\sqrt{kA\cdot\E_{s'\sim \eta_{h+1}^k,a'\sim U(\gA)}\left[\left(f_{h+1}^k(s',a')\right)^2\right]+4\lambda^kd+4k\zeta^k}\right]
    \end{align*}
    Hence,
    \begin{align*}
        & -\E_{(s',a')\sim d_{h+1,\widehat{\gP}^k}^{(\pi^*,c^*)}}[f_{h+1}^k(s',a')]\geq -\sqrt{\beta^k}\cdot \E_{(s,a)\sim d_{h,\widehat{\gP}^k}^{(\pi^*,c^*)}}\left[\lVert\widehat{\phi}_{h}^k(s,a)\rVert_{\Sigma_{\rho_h^k\times U(\gA),\widehat{\phi}_{h}^k}^{-1}}\right]\numberthis\label{lem_ao_2}
    \end{align*}
    where
    \begin{equation*}
        \beta^k:=kA\zeta^k+\lambda^kd+k\zeta^k\lesssim(A+d^2)\log(|\gF|Hk/\delta)
    \end{equation*}
    Combining~\eqref{120} and~\eqref{lem_ao_2}, we further derive
    \begin{align*}
        & V_{1,\widehat{\gP}^k,\widehat{b}^k}^{\pi^*}(s_1,c^*)-V_{1,\gP^*,0}^{\pi^*}(s_1,c^*)\\
        \leq & -\sum_{h=1}^{H-1}\E_{(s,a)\sim d_{h,\widehat{\gP}^k}^{(\pi^*,c^*)}}\left[\widehat{b}_h^k(s,a)\right]+\sqrt{H^2A\zeta^k}+H\cdot\sum_{i=1}^{H-2}\E_{(s,a)\sim d_{i+1,\widehat{\gP}^k}^{(\pi^*,c^*)}}\left[f_{i+1}^k(s,a)\right]\\
        \le & -\sum_{h=1}^{H-1}\E_{(s,a)\sim d_{h,\widehat{\gP}^k}^{(\pi^*,c^*)}}\left[\widehat{b}_h^k(s,a)\right]+\sum_{h=1}^{H-2}\E_{(s,a)\sim d_{h,\widehat{\gP}^k}^{(\pi^*,c^*)}}\left[\min\left\{\alpha^k\lVert\widehat{\phi}_{h}^k(s,a)\rVert_{\Sigma_{\rho_h^k\times U(\gA),\widehat{\phi}_{h}^k}^{-1}},2\right\}\right]+\sqrt{H^2A\zeta^k}\\
        \lesssim & \sqrt{H^2A\zeta^k}
    \end{align*}
    Therefore, we conclude the proof.
    \end{proof}
\end{lemma}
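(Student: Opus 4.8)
The plan is to establish this ``almost optimism'' bound by exactly the device used to control term (ii) in the regret decomposition: show that the designed bonus $\widehat{b}^k_h$ is large enough to absorb the $L_1$ transition-estimation error $f_h^k(s,a):=\lVert\widehat{P}_h^k(\cdot|s,a)-P_h^*(\cdot|s,a)\rVert_1$, up to a lower-order residual. First I would invoke the risk-sensitive simulation lemma~\ref{lem_sim}, noting that $\widehat{b}^k_H=f_H^k\equiv 0$, to rewrite the value gap as a single sum of per-step contributions evaluated along the occupancy induced by rolling out $(\pi^*,c^*)$ \emph{in the estimated model} $\widehat{\gP}^k$:
\begin{equation*}
V_{1,\widehat{\gP}^k,\widehat{b}^k}^{\pi^*}(s_1,c^*)-V_{1,\gP^*,0}^{\pi^*}(s_1,c^*)\le\sum_{h=1}^{H-1}\E_{(s,a)\sim d_{h,\widehat{\gP}^k}^{(\pi^*,c^*)}}\left[H\cdot f_h^k(s,a)-\widehat{b}_h^k(s,a)\right],
\end{equation*}
where the factor $H$ comes from the range of the augmented value. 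The whole content of the lemma is then the claim that the subtracted bonus dominates the added error term for every $h$, except for an unavoidable boundary contribution at the first step.

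The central difficulty, and the step I would handle most carefully, is the mismatch of measures: the expectations above are taken under the occupancy of the \emph{learned} model $\widehat{\gP}^k$, whereas the only statistical guarantees available are MLE bounds of the form $\E_{\eta_h^k\times U(\gA)}[(f_h^k)^2]\le\zeta^k$ (and the analogous bound under $\rho_h^k$) measured under the \emph{data-collection} distributions, from Lemma~\ref{lem_mle}, together with the covariance equivalence $\lVert\phi\rVert_{(\widehat{\Sigma}^k_{h,\phi})^{-1}}=\Theta(\lVert\phi\rVert_{(\Sigma_{\rho_h^k\times U(\gA),\phi})^{-1}})$ from Lemma~\ref{lem_cctt}; I would condition on the good event where both hold, which has probability $1-\delta$. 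To bridge the gap I would apply the one-step change-of-measure estimate (Lemma~\ref{lem_osbi}): for each $h+1\in\{2,\dots,H-1\}$ it converts the error at step $h+1$ into an elliptical-potential weight at step $h$,
\begin{equation*}
\E_{d_{h+1,\widehat{\gP}^k}^{(\pi^*,c^*)}}[f_{h+1}^k]\le\E_{d_{h,\widehat{\gP}^k}^{(\pi^*,c^*)}}\Big[\lVert\widehat{\phi}_h^k\rVert_{\Sigma_{\rho_h^k\times U(\gA),\widehat{\phi}_h^k}^{-1}}\cdot\sqrt{kA\,\E_{\eta_{h+1}^k\times U(\gA)}[(f_{h+1}^k)^2]+4\lambda^kd+4k\zeta^k}\Big],
\end{equation*}
which is precisely the quantity the bonus $\widehat{b}_h^k\approx\alpha^k\lVert\widehat{\phi}_h^k\rVert_{\Sigma^{-1}}$ is built to control.

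Finally I would substitute the MLE bound to replace $\E_{\eta_{h+1}^k\times U(\gA)}[(f_{h+1}^k)^2]$ by $\zeta^k$, so the square-root factor becomes $\sqrt{\beta^k}$ with $\beta^k:=kA\zeta^k+\lambda^kd+k\zeta^k\lesssim(A+d^2)\log(|\gF|Hk/\delta)$. Since the chosen $\alpha^k=\sqrt{H^2(A+d^2)\log(|\gF|Hk/\delta)}\gtrsim H\sqrt{\beta^k}$, the subtracted bonus $\E[\widehat{b}_h^k]$ dominates the transferred model-error term $H\cdot\E_{d_{h+1,\widehat{\gP}^k}^{(\pi^*,c^*)}}[f_{h+1}^k]$ for every $h\ge 1$, so the two telescoping sums cancel (the truncation $\min\{\cdot,2\}$ in $\widehat{b}_h^k$ only helps, since $Hf^k_{h+1}$ is itself bounded). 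The one term with no earlier step to absorb it is the $h=1$ contribution $H\cdot\E_{d_1^{(\pi^*,c^*)}}[f_1^k]$; because the start state is fixed, $d_1^{(\pi^*,c^*)}$ and $\rho_1^k$ share the same (Dirac) state marginal, so importance-weighting the action from $\pi^*$ to uniform gives $H\cdot\E_{d_1^{(\pi^*,c^*)}}[f_1^k]\le H\sqrt{A\,\E_{\rho_1^k\times U(\gA)}[(f_1^k)^2]}\le\sqrt{H^2A\zeta^k}$, which is exactly the claimed residual. I expect the only delicate bookkeeping to be in matching the constants so that $\alpha^k\ge H\sqrt{\beta^k}$ after the $\Theta(\cdot)$ covariance transfer, and in verifying that the truncation never obstructs the cancellation.
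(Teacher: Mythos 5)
Your proposal is correct and follows essentially the same route as the paper's proof: the risk-sensitive simulation lemma to reduce the value gap to $\sum_h\E_{d_{h,\widehat{\gP}^k}^{(\pi^*,c^*)}}[Hf_h^k-\widehat{b}_h^k]$, the one-step-back inequality (Lemma~\ref{lem_osbi}) plus the MLE bound to transfer each step-$(h+1)$ error into a $\sqrt{\beta^k}\cdot\lVert\widehat{\phi}_h^k\rVert_{\Sigma^{-1}}$ term absorbed by the bonus via $\alpha^k\gtrsim H\sqrt{\beta^k}$, and the separate $h=1$ importance-sampling bound yielding the $\sqrt{H^2A\zeta^k}$ residual. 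No substantive differences.
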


\begin{lemma}[Simulation lemma for risk-sensitive RL]
\label{lem_sim}
    Given two episodic MDPs $(H,\gP=\{P_h\}_{h\in[H]},r)$ and $(H,\widehat{\gP}=\{\widehat{P}_h\}_{h\in[H]},r)$, for any fixed $c\in[0,1]$ and policy $\pi=\{\pi_h:\mathcal{S}\times[0,H]\mapsto\Delta(\mathcal{A})\}_{h\in[H]}$, we have that
    \begin{equation}
        V_{1,\gP,0}^\pi(s_1,c)-V_{1,\widehat{\gP},0}^\pi(s_1,c)\leq H\cdot\sum_{h=1}^H \E_{(s,a)\sim d_{h,\gP}^{(\pi,c)}}\left[f_h(s,a)\right]
    \end{equation}
    where $f_h(s,a):=\lVert P_h(\cdot|s,a)-\widehat{P}_h(\cdot|s,a)\rVert_1$ for any $(h,s,a)\in[H]\times\gS\times\gA$.
\end{lemma}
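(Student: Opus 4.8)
The plan is to prove this as the risk-sensitive analogue of the standard simulation (performance-difference) lemma, carried out entirely in the augmented MDP whose state is the pair $(s,c)$. The crucial observation is that $\gP$ and $\widehat{\gP}$ share the \emph{same} reward distribution $r$, so the budget coordinate evolves identically (via $c_{h+1}=c_h-r_h$) under both dynamics; the only discrepancy between the two augmented kernels lies in the state-transition part, $P_h$ versus $\widehat{P}_h$. First I would record the Bellman recursion for the bonus-free augmented value function,
\begin{equation*}
V_{h,\gP,0}^\pi(s,c)=\E_{a\sim\pi_h(\cdot|s,c)}\,\E_{r\sim r_h(s,a)}\,\E_{s'\sim P_h(\cdot|s,a)}\big[V_{h+1,\gP,0}^\pi(s',c-r)\big],
\end{equation*}
with terminal value $V_{H+1}^\pi(s,c)=(c)^+$, and the same identity for $\widehat{\gP}$.

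Next I would define the per-step gap $\Delta_h(s,c):=V_{h,\gP,0}^\pi(s,c)-V_{h,\widehat{\gP},0}^\pi(s,c)$ and extract a one-step decomposition by adding and subtracting $\E_{s'\sim P_h}V_{h+1,\widehat{\gP},0}^\pi(s',c-r)$:
\begin{equation*}
\Delta_h(s,c)=\E_{a\sim\pi_h}\E_{r}\Big[\E_{s'\sim P_h}\Delta_{h+1}(s',c-r)+\big(\E_{s'\sim P_h}-\E_{s'\sim\widehat{P}_h}\big)V_{h+1,\widehat{\gP},0}^\pi(s',c-r)\Big].
\end{equation*}
Unrolling this recursion from $h=1$, where the first term always propagates the remaining gap forward under the true dynamics $\gP$ (which is exactly what generates the occupancy $d_{h,\gP}^{(\pi,c)}$), yields
\begin{equation*}
\Delta_1(s_1,c)=\sum_{h=1}^H\E_{\pi,\gP}\Big[\big(\E_{s'\sim P_h(\cdot|s_h,a_h)}-\E_{s'\sim\widehat{P}_h(\cdot|s_h,a_h)}\big)V_{h+1,\widehat{\gP},0}^\pi(s',c_h-r_h)\Big],
\end{equation*}
where $c_h$ is the (shared) realized budget at step $h$ and $\E_{\pi,\gP}$ averages over trajectories rolled out with $\pi$ in $\gP$ from $(s_1,c)$.

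Finally I would bound each summand by H\"older's inequality: for any fixed $(s_h,a_h)$ and realized budget, $\big|(\E_{s'\sim P_h}-\E_{s'\sim\widehat{P}_h})V_{h+1,\widehat{\gP},0}^\pi(s',\cdot)\big|\le\lVert P_h(\cdot|s_h,a_h)-\widehat{P}_h(\cdot|s_h,a_h)\rVert_1\cdot\lVert V_{h+1,\widehat{\gP},0}^\pi\rVert_\infty\le f_h(s_h,a_h)\cdot H$, where the $\lVert\cdot\rVert_\infty\le H$ step uses that the value never exceeds the budget, which stays at most $c\le H$. Since this bound depends on neither the budget nor the reward realization, the trajectory expectation $\E_{\pi,\gP}$ collapses onto the state-action marginal, giving $\Delta_1(s_1,c)\le H\sum_{h=1}^H\E_{(s,a)\sim d_{h,\gP}^{(\pi,c)}}[f_h(s,a)]$, as claimed. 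The main thing to get right is the augmented-state bookkeeping in the telescoping: tracking that the budget is a deterministic function of the shared rewards so the error terms isolate $P_h-\widehat{P}_h$, and verifying that after the $L_1/L_\infty$ step the budget and reward marginalize out cleanly to leave the plain state-action occupancy under $\gP$.
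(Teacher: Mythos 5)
Your proof is correct, and while it follows the same telescoping performance-difference strategy as the paper, it makes the opposite choice of cross term in the add-and-subtract step --- and that choice is the one that actually delivers the stated bound. You insert $\E_{s'\sim P_h}V_{h+1,\widehat{\gP},0}^\pi$, so the residual gap $\Delta_{h+1}$ propagates forward under the \emph{true} kernel $P_h$, the error terms $(\E_{s'\sim P_h}-\E_{s'\sim\widehat{P}_h})V_{h+1,\widehat{\gP},0}^\pi$ land on the occupancies $d_{h,\gP}^{(\pi,c)}$ exactly as the lemma states, and the factor $H$ comes from $\lVert V_{h+1,\widehat{\gP},0}^\pi\rVert_\infty\leq H$ (valid since rewards are nonnegative so the budget never exceeds $c\leq H$ and the terminal payoff is a positive part). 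The paper instead inserts $\E_{s'\sim\widehat{P}_1}V_{2,\gP,0}^\pi$, so its recursion propagates the gap under $\widehat{P}_h$; carried out literally, this produces error terms under the occupancies $d_{h,\widehat{\gP}}^{(\pi,c)}$ (the two decompositions are the risk-sensitive analogues of the two equivalent identities in Lemma~\ref{lem_sim_neutral}), and the paper's ``$\leq\cdots\leq$'' silently switches to $d_{h,\gP}^{(\pi,c)}$ after the first step, where the two occupancies happen to coincide. Your bookkeeping of the budget coordinate --- observing that it is a deterministic function of the shared reward realizations, so it rides along the $\gP$-trajectory and marginalizes out after the $L_1/L_\infty$ bound --- is also handled correctly. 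In short, your route is the cleaner of the two for this particular statement, since it matches the occupancy measure appearing in the claim without any hidden relabeling.
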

\begin{proof}
By definition, we derive that
\begin{align*}
    & V_{1,\gP,0}^\pi(s_1,c)-V_{1,\widehat{\gP},0}^\pi(s_1,c)\\
    = & \E_{a_1\sim\pi_1(\cdot|s_1,c),r_1}\left\{\E_{s'\sim P_1(\cdot|s_1,a_1)}\left[V_{2,\gP,0}^\pi(s',c-r_1)\right]-\E_{s'\sim\widehat{P}_1(\cdot|s_1,a_1)}\left[V_{2,\widehat{\gP},0}^\pi(s',c-r_1)\right]\right\}\\
    = & \E_{a_1\sim\pi_1(\cdot|s_1,c),r_1}\Bigg\{\E_{s'\sim \widehat{P}_1(\cdot|s_1,a_1)}\left[V_{2,\gP,0}^\pi(s',c-r_1)-V_{2,\widehat{\gP},0}^\pi(s',c-r_1)\right]\Bigg.\\
    & \Bigg.\ \ \ \ \ \ \ \ \ \ \ \ \ \ \ \ \ \ \ \ \ \ \ \ \ \ \ +\E_{s'\sim P_1(\cdot|s_1,a_1)}\left[V_{2,\gP,0}^\pi(s',c-r_1)\right]-\E_{s'\sim \widehat{P}_1(\cdot|s_1,a_1)}\left[V_{2,\gP,0}^\pi(s',c-r_1)\right]\Bigg\}\\
    \leq & \E_{a_1\sim\pi_1(\cdot|s_1,c),r_1}\E_{s'\sim \widehat{P}_1(\cdot|s_1,a_1)}\left[V_{2,\gP,0}^\pi(s',c-r_1)-V_{2,\widehat{\gP},0}^\pi(s',c-r_1)\right]+H\cdot\E_{(s,a)\sim d_{1,\gP}^\pi}[f_1(s,a)]\\
    \leq & \cdots \leq H\cdot\sum_{h=1}^H\E_{(s,a)\sim d_{h,\gP}^{(\pi,c)}}[f_h(s,a)]
\end{align*}
which concludes the proof.
\end{proof}

\section{Proofs for Appendix~\ref{sec:lsvi}}
\label{proof:lsvi}
\subsection{Proof of Theorem~\ref{thm:cvar-lsvi}}
\label{proof:cvar-lsvi}
In the following discussion we use $\bphi^{j}_h$ to denote $\bphi_h(s^j_h,a^j_h)$ and $(\hP_h\bV)(s,i\upsilon,a)$ to denote 
$$
\mathbb{E}_{r\sim\br_h(\cdot|s,a),s'\sim\hp_h(\cdot|s,a)}[\bV(s',i\upsilon-r)].$$
We also use $\Gcal_{t,h}$ denote the filtration generated by $\{(s^j_{h'},a^j_{h'},\br^j_{h'})_{h'=1}^H\}_{j=1}^{t-1}\cup(s^t_{h'},a^t_{h'},\br^t_{h'})_{h'=1}^{h-1}\cup(s^t_{h},a^t_{h})$.

First, note that we have the following concentration lemma:
\begin{lemma}
\label{lem:conc-lsvi}
For all $h\in[H],t\in[T_1],0\leq i\leq\lceil H/\upsilon\rceil$, with probability at $1-\delta/4$, we have
\begin{align*}
\left\Vert \sum_{j=1}^{t-1}\bphi^j_h\left[\bV^{t}_{h+1}(s^{j}_{h+1},i\upsilon-\br^j_h)-\hP_h \bV^t_{h+1}(s^j_h,i\upsilon,a^j_h)\right]\right\Vert_{(\Lambda^t_h)^{-1}}\leq\TO\left(\frac{H^{\frac{3}{2}}d}{\upsilon}\sqrt{\log\frac{HdT_1}{\upsilon\delta}}\right).
\end{align*}
\end{lemma}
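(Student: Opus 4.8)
The plan is to prove this via a standard self-normalized (Abbasi–Yadkori–type) martingale concentration inequality, combined with a uniform covering over the class of possible value functions. The covering is unavoidable because the iterate $\bV^t_{h+1}$ is itself computed by \lsvialg{} from the very trajectories $\{(s^j_{h'},a^j_{h'},\br^j_{h'})\}_{j<t}$ that appear in the sum, so a naive martingale bound does not apply to it directly.

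First I would fix an arbitrary \emph{non-random} function $V:\gS\times\{0,\upsilon,\dots,\lceil H/\upsilon\rceil\upsilon\}\to[-H,H]$ and set $\epsilon^j_h(V):=V(s^j_{h+1},i\upsilon-\br^j_h)-(\hP_h V)(s^j_h,i\upsilon,a^j_h)$. Because the trajectories are simulated inside the \emph{known} model $(\hP,\br)$, conditioning on the history up to and including $(s^j_h,a^j_h)$ makes $s^j_{h+1}\sim\hp_h(\cdot|s^j_h,a^j_h)$ and $\br^j_h\sim\br_h(\cdot|s^j_h,a^j_h)$, so $\E[V(s^j_{h+1},i\upsilon-\br^j_h)]=(\hP_h V)(s^j_h,i\upsilon,a^j_h)$; hence $\{\epsilon^j_h(V)\}_j$ is a martingale-difference sequence with $|\epsilon^j_h(V)|\le 2H$ and each $\bphi^j_h$ is measurable w.r.t. the corresponding $\sigma$-field. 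The self-normalized bound then gives, with probability $1-\delta'$,
\begin{equation*}
\Big\Vert\sum_{j=1}^{t-1}\bphi^j_h\,\epsilon^j_h(V)\Big\Vert_{(\Lambda^t_h)^{-1}}\le 2H\sqrt{2\log\frac{\det(\Lambda^t_h)^{1/2}\det(\lambda I)^{-1/2}}{\delta'}}\lesssim H\sqrt{\bd\,\log\Big(\tfrac{HdT_1}{\upsilon\lambda\delta'}\Big)},
\end{equation*}
where $\bd=d(\lceil 1/\upsilon\rceil+1)$ is the dimension of the lifted feature $\bphi_h$, and we used $\log\det(\Lambda^t_h/\lambda)\lesssim\bd\log(1+T_1/\lambda)$ together with $\Vert\bphi_h\Vert_2\le 1$.

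Next I would remove the ``fixed $V$'' assumption. The iterate $\bV^t_{h+1}$ lies in the class $\mathcal V$ of functions $(s,i'\upsilon)\mapsto\min_{a}\textup{Clip}_{[-H,H]}\big(-\hb_{h+1}(s,a)+\bphi_{h+1}(s,a)^\top w(i'\upsilon)-\sqrt{\bphi_{h+1}(s,a)^\top A\,\bphi_{h+1}(s,a)}\big)$, parameterized by the budget-indexed weights $\{w(i'\upsilon)\}$ and the PSD matrix $A=\beta^2(\Lambda^t_{h+1})^{-1}$. Since $\Vert\bphi\Vert\le1$, $\Lambda^t_{h+1}\succeq\lambda I$, and $|\bV^t_{h+2}|\le H$, one gets $\Vert w(i'\upsilon)\Vert\lesssim H\sqrt{\bd\,T_1/\lambda}$ and $\Vert A\Vert\lesssim\beta^2/\lambda$, so $\mathcal V$ is covered to precision $\epsilon_0$ in the sup-norm by a net of size $\log\mathcal N_{\epsilon_0}\lesssim \tfrac{H}{\upsilon}\,\bd\,\log\!\big(\tfrac{H\bd T_1}{\upsilon\lambda\epsilon_0}\big)+\bd^2\log\!\big(\tfrac{\beta}{\epsilon_0}\big)$, where the factor $H/\upsilon$ counts the budget-grid levels $i'$ on which $w$ must be specified. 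A Lipschitz argument shows that replacing $V$ by its nearest net point $\widetilde V$ changes $\Vert\sum_j\bphi^j_h\epsilon^j_h(\cdot)\Vert_{(\Lambda^t_h)^{-1}}$ by at most $\TO(\sqrt{t/\lambda}\,\epsilon_0)$, which is driven below the target by taking $\epsilon_0=1/T_1$.

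Finally I would union-bound the display over the net $\widetilde V$ and over all $(h,t,i)\in[H]\times[T_1]\times\{0,\dots,\lceil H/\upsilon\rceil\}$ by setting $\delta'=\delta/(4\,\mathcal N_{\epsilon_0}\,H T_1(\lceil H/\upsilon\rceil+1))$; this only inflates the logarithm to $\iota=\log^2(HdT_1/(\upsilon\delta))$. The resulting bound is $\TO\big(H\sqrt{\bd\,\iota+\log\mathcal N_{\epsilon_0}}\big)=\TO\big(H\sqrt{Hd/\upsilon^2+d^2/\upsilon^2}\,\big)$, which is dominated by $\TO\big(\tfrac{H^{3/2}d}{\upsilon}\sqrt{\log\tfrac{HdT_1}{\upsilon\delta}}\big)$, the claimed bound. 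The main obstacle is the covering step: one must (i) correctly track that the value function carries an entire budget-grid of weight vectors (hence the $H/\upsilon$ blow-up in $\log\mathcal N_{\epsilon_0}$, which is exactly what produces the extra $\sqrt H$ relative to the risk-neutral LSVI analysis), and (ii) verify that the net elements are genuinely data-independent so that the fixed-$V$ martingale bound applies verbatim.
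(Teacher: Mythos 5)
Your proof is correct and follows essentially the same route as the paper: bound $\Vert w^t_h(i\upsilon)\Vert$, compute the covering number of the clipped LSVI value class (with the $H/\upsilon$ blow-up from the budget grid, which is exactly where the extra $\sqrt{H}$ comes from), and apply a uniform self-normalized martingale bound. The only difference is presentational — the paper invokes Lemma~\ref{lem:conc-quad} (Jin et al.'s Lemma D.4) as a black box, whereas you unpack it into the fixed-$V$ Azuma/Abbasi--Yadkori step plus the net union bound; your observation that the martingale structure holds because the trajectories are simulated inside the known model $(\hP,\br)$ is a point the paper leaves implicit.
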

The proof is deferred to Appendix~\ref{proof:lem-conc}. Let $\Ecal_1$ denote the event in Lemma~\ref{lem:conc-lsvi}.

On the other hand, we can further bound the difference between $\bQ^{\pi}_h(s,i\upsilon,a)$ and $-\hb_h(s,a)+\left(\bphi_h(s,a)\right)^{\top}w^t_h(i\upsilon)$ as follows:
\begin{lemma}
\label{lem:difference}
For any policy $\pi$, conditioned on $\Ecal_1$, we have for all $s\in\mathcal{S}, 0\leq i \leq \lceil H/\upsilon\rceil, a\in\mathcal{A}, h\in[H], t\in[T_1]$ that
\begin{align*}
-\hb_h(s,a)+\left(\bphi_h(s,a)\right)^{\top}w^t_h(i\upsilon)-\bQ^{\pi}_h(s,i\upsilon,a)=\left(\hP_h\left(\bV^t_{h+1}-\bV^{\pi}_{h+1}\right)\right)(s,i\upsilon,a)+\xi^t_h(s,i\upsilon,a),
\end{align*}
where $\left|\xi^t_h(s,i\upsilon,a)\right|\leq\beta\left\Vert\bphi_h(s,a)\right\Vert_{(\Lambda^t_h)^{-1}}$.
\end{lemma}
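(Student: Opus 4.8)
The plan is to reduce the identity to the standard least-squares value iteration error decomposition, exploiting that the operator $\hP_h$ acts \emph{linearly} in the lifted feature $\bphi_h(s,a)$. Concretely, for any bounded $g:\gS\times[0,H]\mapsto\sR$, the factorization $\hp_h(s'|s,a)\br_h(i'\upsilon|s,a)=\langle\bphi_h(s,a),\bpsi_h(s',i'\upsilon)\rangle$ established above yields
$$(\hP_h g)(s,i\upsilon,a)=\Big\langle\bphi_h(s,a),\ \sum_{i'}\int_{\gS}\bpsi_h(s',i'\upsilon)\,g(s',i\upsilon-i'\upsilon)\,ds'\Big\rangle=\langle\bphi_h(s,a),\theta^t_h(i\upsilon)\rangle,$$
where I define $\theta^t_h(i\upsilon)$ to be the bracketed vector for the choice $g=\bV^t_{h+1}$. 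First I would record the Bellman equation $\bQ^\pi_h(s,i\upsilon,a)=-\hb_h(s,a)+(\hP_h\bV^\pi_{h+1})(s,i\upsilon,a)$, so that upon forming the left-hand side of the claim the two bonus terms cancel and I am left with $(\bphi_h(s,a))^\top w^t_h(i\upsilon)-(\hP_h\bV^\pi_{h+1})(s,i\upsilon,a)$.

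Next I would insert $\pm(\hP_h\bV^t_{h+1})(s,i\upsilon,a)=\pm\langle\bphi_h(s,a),\theta^t_h(i\upsilon)\rangle$ to split this into the target-mismatch term $(\hP_h(\bV^t_{h+1}-\bV^\pi_{h+1}))(s,i\upsilon,a)$, which is exactly the first summand in the claim, and the regression error $\xi^t_h(s,i\upsilon,a):=(\bphi_h(s,a))^\top(w^t_h(i\upsilon)-\theta^t_h(i\upsilon))$. Bounding $\xi^t_h$ is the heart of the proof. Using the normal equation $\Lambda^t_h\theta^t_h=\lambda\theta^t_h+\sum_j\bphi^j_h(\bphi^j_h)^\top\theta^t_h$ together with $(\bphi^j_h)^\top\theta^t_h=(\hP_h\bV^t_{h+1})(s^j_h,i\upsilon,a^j_h)$ and the definition of $w^t_h$, I would write
$$w^t_h(i\upsilon)-\theta^t_h(i\upsilon)=(\Lambda^t_h)^{-1}\sum_{j=1}^{t-1}\bphi^j_h\Big[\bV^t_{h+1}(s^j_{h+1},i\upsilon-\br^j_h)-(\hP_h\bV^t_{h+1})(s^j_h,i\upsilon,a^j_h)\Big]-\lambda(\Lambda^t_h)^{-1}\theta^t_h(i\upsilon).$$
Multiplying by $\bphi_h(s,a)^\top$ and applying Cauchy--Schwarz in the $(\Lambda^t_h)^{-1}$ norm, the first term is at most $\|\bphi_h(s,a)\|_{(\Lambda^t_h)^{-1}}$ times the quantity controlled on event $\Ecal_1$ of Lemma~\ref{lem:conc-lsvi}, i.e.\ $\TO(H^{3/2}d\upsilon^{-1}\sqrt{\log(HdT_1/(\upsilon\delta))})$, while the regularization term is at most $\sqrt{\lambda}\,\|\bphi_h(s,a)\|_{(\Lambda^t_h)^{-1}}\,\|\theta^t_h(i\upsilon)\|_2$.

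The one genuinely quantitative step --- and the main obstacle --- is bounding $\|\theta^t_h(i\upsilon)\|_2$. Here I would use that $\bV^t_{h+1}$ is clipped to $[-H,H]$ and decompose it into positive and negative parts (each rescaled into $[0,1]$), so that the low-rank normalization $\big\|\sum_{i'}\int_{\gS}\bpsi_h(s',i'\upsilon)\,ds'\big\|\leq(1+\lceil1/\upsilon\rceil)\sqrt{d}$ applies and yields $\|\theta^t_h(i\upsilon)\|_2=\TO(Hd^{1/2}/\upsilon)$. With $\lambda=1$, both contributions to $\xi^t_h$ are then of order $\TO(H^{3/2}d/\upsilon)$ up to logarithmic factors, which is dominated by the prescribed $\beta=\TO(H^{3/2}d\iota^{1/4}/\upsilon)$ (recall $\iota^{1/4}=\sqrt{\log(HdT_1/(\upsilon\delta))}$); hence $|\xi^t_h(s,i\upsilon,a)|\leq\beta\|\bphi_h(s,a)\|_{(\Lambda^t_h)^{-1}}$, which completes the proof. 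Finally I would note the identity and the bound hold simultaneously for all $(s,i,a,h,t)$ on the single event $\Ecal_1$, since the per-$(h,t,i)$ concentration bound in Lemma~\ref{lem:conc-lsvi} already absorbs the requisite union bound.
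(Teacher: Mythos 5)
Your proof is correct and is precisely the argument the paper intends: the paper omits the proof of this lemma and defers to Lemma B.4 of \citet{jin2020provably}, and your decomposition (Bellman equation to cancel the bonus, insertion of $\pm(\hP_h\bV^t_{h+1})=\pm\langle\bphi_h,\theta^t_h\rangle$, normal-equation expansion of $w^t_h-\theta^t_h$ into a martingale term controlled by $\Ecal_1$ plus a regularization term controlled by $\sqrt{\lambda}\,\lVert\theta^t_h\rVert_2$) is exactly that argument, correctly adapted to the tensorized features $\bphi_h$ and the budget-augmented state.
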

The proof of Lemma~\ref{lem:difference} follows the same arguments in the proof of \cite{jin2020provably}[Lemma B.4] and thus is omitted here.

With Lemma~\ref{lem:difference}, we can prove that the estimated Q function $\bQ^t$ is optimistic:
\begin{lemma}
\label{lem:optimism}
Conditioned on event $\Ecal_1$, we have for all $s\in\mathcal{S}, 0\leq i \leq \lceil H/\upsilon\rceil, h\in[H], t\in[T_1]$ that $\bV^t_h(s,i\upsilon)\leq\bV^*_h(s,i\upsilon)$ where $\bV^*_h(s,i\upsilon)=\sup_{\pi}\bV^{\pi}_h(s,i\upsilon)$.
\end{lemma}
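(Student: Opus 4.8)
The plan is to establish optimism by backward induction on $h$, running from $h=H+1$ down to $h=1$, throughout conditioning on the concentration event $\Ecal_1$ of Lemma~\ref{lem:conc-lsvi}. Let $\pi^*$ be an optimal augmented policy for the model $(\hP,\br)$ with bonus $\hb$, and write $\bQ^*_h:=\bQ^{\pi^*}_h$, so that by Bellman optimality $\bV^{\pi^*}_h(s,i\upsilon)=\bV^*_h(s,i\upsilon)=\min_{a\in\gA}\bQ^*_h(s,i\upsilon,a)$ for every $h$, $s$, and budget index $i$. The induction hypothesis at level $h+1$ is that $\bV^t_{h+1}(s',i'\upsilon)\leq\bV^*_{h+1}(s',i'\upsilon)$ for all $s'$ and all $0\leq i'\leq\lceil H/\upsilon\rceil$; quantifying over every budget index is essential, because the update at level $h$ recurses into $\bV^t_{h+1}(s',i\upsilon-\br_h)$, whose second argument is a decremented grid index. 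The base case $h=H+1$ holds with equality, since the algorithm initializes $\bV^t_{H+1}(s,i\upsilon)=i\upsilon=\bV^*_{H+1}(s,i\upsilon)$.

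For the inductive step, fix $(s,a,i)$ and apply Lemma~\ref{lem:difference} with $\pi=\pi^*$, which gives
\[
-\hb_h(s,a)+\big(\bphi_h(s,a)\big)^\top w^t_h(i\upsilon)-\bQ^*_h(s,i\upsilon,a)=\big(\hP_h(\bV^t_{h+1}-\bV^*_{h+1})\big)(s,i\upsilon,a)+\xi^t_h(s,i\upsilon,a),
\]
with $|\xi^t_h(s,i\upsilon,a)|\leq\beta\lVert\bphi_h(s,a)\rVert_{(\Lambda^t_h)^{-1}}$. Subtracting $\beta\lVert\bphi_h(s,a)\rVert_{(\Lambda^t_h)^{-1}}$ from both sides isolates exactly the quantity that appears inside the clip in the definition of $\bQ^t_h$. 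The correction term $\xi^t_h-\beta\lVert\bphi_h\rVert_{(\Lambda^t_h)^{-1}}$ is nonpositive by the bound on $\xi^t_h$, while $\big(\hP_h(\bV^t_{h+1}-\bV^*_{h+1})\big)(s,i\upsilon,a)\leq 0$ because $\hP_h$ is a nonnegative (transition) operator and $\bV^t_{h+1}\leq\bV^*_{h+1}$ by the induction hypothesis. Hence the pre-clip argument satisfies $-\hb_h(s,a)+(\bphi_h(s,a))^\top w^t_h(i\upsilon)-\beta\lVert\bphi_h(s,a)\rVert_{(\Lambda^t_h)^{-1}}\leq\bQ^*_h(s,i\upsilon,a)$.

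Finally, I would push this inequality through the clipping operator and the minimization over actions. Since $\textup{Clip}_{[-H,H]}$ is monotone and $\bQ^*_h(s,i\upsilon,a)$ lies in the clipping range $[-H,H]$ (matching the range of the bonus-augmented CVaR value function), clipping the pre-clip argument cannot raise it above $\bQ^*_h$, so $\bQ^t_h(s,i\upsilon,a)\leq\bQ^*_h(s,i\upsilon,a)$. Taking $\min_{a\in\gA}$ on both sides and invoking $\bV^*_h=\min_a\bQ^*_h$ yields $\bV^t_h(s,i\upsilon)\leq\bV^*_h(s,i\upsilon)$, which closes the induction. The genuinely hard analytical content—the self-normalized concentration of Lemma~\ref{lem:conc-lsvi} and the error decomposition of Lemma~\ref{lem:difference}—is already in hand, so the only real care here is bookkeeping: getting the inequality directions right for a minimization objective, ensuring the clip's lower threshold $-H$ does not bind against $\bQ^*_h$, and carrying the hypothesis uniformly across all budget indices.
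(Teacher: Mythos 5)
Your proof is correct and follows essentially the same route as the paper's: backward induction on $h$ from the base case $\bV^t_{H+1}=\bV^*_{H+1}=i\upsilon$, invoking Lemma~\ref{lem:difference} with the optimal policy and using $|\xi^t_h|\leq\beta\lVert\bphi_h\rVert_{(\Lambda^t_h)^{-1}}$ together with the induction hypothesis to bound the pre-clip quantity by $\bQ^*_h$. Your handling of the clip via monotonicity and the fact that $\bQ^*_h\in[-H,H]$ is a fixed point of $\textup{Clip}_{[-H,H]}$ is a marginally cleaner packaging of the paper's case split on whether $\bQ^t_h=-H$, and you correctly read $\bV^*_h$ as the \emph{minimum} over policies (the $\sup$ in the lemma statement is evidently a typo, given how the lemma is used in Lemma~\ref{lem:regret}).
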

The proof is deferred to Appendix~\ref{proof:lem-optimism}.

Combining Lemma~\ref{lem:difference} and Lemma~\ref{lem:optimism},we can bound the regret of Algorithm~\ref{alg:2} as follows:
\begin{lemma}
\label{lem:regret}
With probability at least $1-\delta/2$, we have 
\begin{align*}
\sum_{t=1}^{T_1}\bV^{\tpi^t}_{1}(s_1,i_1\upsilon)-\bV^{*}_1(s_1,i_1\upsilon)\leq\TO\left(\sqrt{\frac{H^5d^3T_1\iota}{\upsilon^3}}\right),
\end{align*}
where $\iota=\log^2\frac{HdT_1}{\upsilon \delta}$.
\end{lemma}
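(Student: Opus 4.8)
The plan is to follow the optimistic LSVI regret template of \citet{jin2020provably}, adapted to the augmented (budget-carrying) CVaR value functions and the tensor-product features $\bphi$. The starting point is the optimism guarantee of Lemma~\ref{lem:optimism}, which gives $\bV^t_1(s_1,i_1\upsilon)\le\bV^*_1(s_1,i_1\upsilon)$ for every $t$ on the event $\Ecal_1$. Consequently each summand satisfies $\bV^{\tpi^t}_1(s_1,i_1\upsilon)-\bV^*_1(s_1,i_1\upsilon)\le\bV^{\tpi^t}_1(s_1,i_1\upsilon)-\bV^t_1(s_1,i_1\upsilon)$, so it suffices to control the gap between the true value of the greedy policy $\tpi^t$ and its optimistic estimate. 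Writing $c^t_h:=i_1\upsilon-\sum_{h'<h}\br^t_{h'}$ for the budget along the simulated trajectory $(s^t_h,a^t_h,\br^t_h)_{h=1}^H$ and $\delta^t_h:=\bV^{\tpi^t}_h(s^t_h,c^t_h)-\bV^t_h(s^t_h,c^t_h)$, the goal becomes bounding $\sum_{t=1}^{T_1}\delta^t_1$.

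The core is a one-step recursion for $\delta^t_h$. Because $\tpi^t$ is greedy with respect to $\bQ^t_h$, I have $\bV^t_h(s^t_h,c^t_h)=\bQ^t_h(s^t_h,c^t_h,a^t_h)$, hence $\delta^t_h=\bQ^{\tpi^t}_h(s^t_h,c^t_h,a^t_h)-\bQ^t_h(s^t_h,c^t_h,a^t_h)$. Substituting the decomposition of Lemma~\ref{lem:difference} and using $|\xi^t_h|\le\beta\lVert\bphi_h(s^t_h,a^t_h)\rVert_{(\Lambda^t_h)^{-1}}$ together with the $-\beta\lVert\bphi_h\rVert_{(\Lambda^t_h)^{-1}}$ penalty inside $\bQ^t_h$ (and checking that the $\mathrm{Clip}_{[-H,H]}$ operator only helps in this direction, since the true values lie within the clipping range), I obtain
\begin{align*}
\delta^t_h\le\bigl(\hP_h(\bV^{\tpi^t}_{h+1}-\bV^t_{h+1})\bigr)(s^t_h,c^t_h,a^t_h)+2\beta\lVert\bphi_h(s^t_h,a^t_h)\rVert_{(\Lambda^t_h)^{-1}}.
\end{align*}
The decisive observation is that the conditional expectation $\hP_h$ is taken under the \emph{same} model $(\hP,\br)$ in which the trajectory is simulated, so $\bigl(\hP_h(\bV^{\tpi^t}_{h+1}-\bV^t_{h+1})\bigr)(s^t_h,c^t_h,a^t_h)=\EE[\delta^t_{h+1}\mid\Gcal_{t,h}]$ with $c^t_{h+1}=c^t_h-\br^t_h$. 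Introducing the martingale differences $\epsilon^t_h:=\EE[\delta^t_{h+1}\mid\Gcal_{t,h}]-\delta^t_{h+1}$ and telescoping from $h=1$ to $H$ (with $\delta^t_{H+1}=0$, since $\bV^t_{H+1}(s,i\upsilon)=i\upsilon=(i\upsilon)^+=\bV^{\tpi^t}_{H+1}(s,i\upsilon)$ on the grid) yields $\delta^t_1\le\sum_{h=1}^H\epsilon^t_h+2\beta\sum_{h=1}^H\lVert\bphi_h(s^t_h,a^t_h)\rVert_{(\Lambda^t_h)^{-1}}$.

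It then remains to sum over $t$ and bound the two pieces. The first is a martingale with $HT_1$ increments, each of magnitude $O(H)$ (the values lie in $[-2H,H]$), so Azuma--Hoeffding gives $\sum_{t=1}^{T_1}\sum_{h=1}^H\epsilon^t_h\le\TO(H^{3/2}\sqrt{T_1})$ with probability $1-\delta/4$; combined with the event $\Ecal_1$ of Lemma~\ref{lem:conc-lsvi} this accounts for the overall $1-\delta/2$ probability. For the second piece I apply Cauchy--Schwarz in $t$ followed by the elliptical-potential lemma, using that $\bphi_h$ has effective dimension $(\lceil 1/\upsilon\rceil+1)d=O(d/\upsilon)$ and $\lVert\bphi_h\rVert_2\le1$, so that $\sum_{t=1}^{T_1}\lVert\bphi_h(s^t_h,a^t_h)\rVert^2_{(\Lambda^t_h)^{-1}}\le\TO(d/\upsilon)$ and hence $\sum_{t,h}\lVert\bphi_h(s^t_h,a^t_h)\rVert_{(\Lambda^t_h)^{-1}}\le\TO(H\sqrt{T_1 d/\upsilon})$. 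Substituting $\beta=\TO(H^{3/2}d\iota^{1/4}/\upsilon)$ makes the bonus term dominate and produces $2\beta\cdot\TO(H\sqrt{T_1 d/\upsilon})=\TO(\sqrt{H^5d^3T_1\iota/\upsilon^3})$, the claimed bound. I expect the main obstacle to be the careful bookkeeping in the recursion step: verifying that the clipping operator and the sign of the $-\beta\lVert\bphi_h\rVert_{(\Lambda^t_h)^{-1}}$ penalty combine into a one-sided inequality in the minimization setting, and confirming that the budget coordinate propagates consistently between $\bV^{\tpi^t}$ and $\bV^t$ so that the $\hP_h$ term is genuinely $\EE[\delta^t_{h+1}\mid\Gcal_{t,h}]$ with no residual model-mismatch term --- this is precisely what makes the analysis cleaner than learning against the true environment.
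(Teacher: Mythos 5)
Your proposal is correct and follows essentially the same route as the paper: reduce to $\sum_t(\bV^{\tpi^t}_1-\bV^t_1)$ via the optimism of Lemma~\ref{lem:optimism}, expand with the one-step decomposition of Lemma~\ref{lem:difference} under the simulated model $(\hP,\br)$ so the $\hP_h$ term is a genuine conditional expectation, telescope into a martingale plus bonus sum, and finish with Azuma--Hoeffding and the elliptical potential lemma in effective dimension $O(d/\upsilon)$. The only cosmetic difference is that the paper handles the clipping by truncating the telescoping at the first step where $\bQ^t_{h}$ saturates at $H$ (setting the corresponding martingale increments to zero), whereas you argue the clip is one-sided in place; both yield the same bound.
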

The proof is deferred to Appendix~\ref{proof:lem-regret}. Denote the event in Lemma~\ref{lem:regret} by $\Ecal_2$.

Lemma~\ref{lem:regret} implies that by setting $T_1=\TO\left(\frac{H^5d^3\iota}{\upsilon^3\varepsilon^2}\right)$, we have conditioned on event $\Ecal_2$ that
\begin{align*}
\min_{t\in[T_1]}\bV^{\tpi^t}_{1}(s_1,i_1\upsilon)-\bV^{*}_1(s_1,i_1\upsilon)\leq\varepsilon/2.
\end{align*}

Let $t_1$ denote $\argmin_{t\in[T_1]}\bV^{\tpi^t}_{1}(s_1,i_1\upsilon)$. Then on the other hand, by setting $T_2=\TO\left(\frac{H^2\log\frac{T_1}{\delta}}{\varepsilon^2}\right)$, with probability at least $1-\delta/2$ we have that for all $t\in[T_1]$,
\begin{align*}
\left|\hbV^{\tpi^t}_{1}(s_1,i_1\upsilon)-\bV^{\tpi^t}_{1}(s_1,i_1\upsilon)\right|\leq\varepsilon/4.
\end{align*}
Denote the above event by $\Ecal_3$ and let $\htpi$ denote $\argmin_{\tpi^t}\hbV^{\tpi^t}_{1}(s_1,i_1\upsilon)$. Then conditioned on event $\Ecal_2\cap\Ecal_3$, we have
\begin{align*}
\hbV^{\htpi}_{1}(s_1,i_1\upsilon)-\bV^{*}_1(s_1,i_1\upsilon)\leq\hbV^{\tpi^{t_1}}_{1}(s_1,i_1\upsilon)-\bV^{*}_1(s_1,i_1\upsilon)\leq\bV^{\tpi^{t_1}}_{1}(s_1,i_1\upsilon)-\bV^{*}_1(s_1,i_1\upsilon)+\frac{\varepsilon}{4}\leq\frac{3}{4}\varepsilon.
\end{align*}
\subsection{Proof of Theorem~\ref{thm:computation}}
\label{proof:computation}
First, we show that Algorithm~\ref{alg:3} can achieve sublinear regret in the discretized MDP $\bM$. Let $(\pidisc,i^*):=\argmax_{\pi\in\bPia,0\leq i\leq\lceil H/\upsilon\rceil}\CVAR_{\tau}(\bR(\pi,i\upsilon))$. Note that from~\eqref{eq_cvar_aug} we have
\begin{align*}
\bCVAR^*_{\tau}=i^*\upsilon-\tau^{-1}\bV^{\pidisc}_{1,\Pcal^*,0}(s_1,i^*\upsilon).
\end{align*}
This implies that
\begin{align}
&\bCVAR^*_{\tau}-\CVAR_{\tau}(\bR(\pi^k,i^k\upsilon))\notag\\
=&\left(i^*\upsilon-\tau^{-1}\bV^{\pidisc}_{1,\hP^k,\hb^k}(s_1,i^*\upsilon)\right)-\CVAR_{\tau}(\bR(\pi^k,i^k\upsilon))+\left(i^*\upsilon-\tau^{-1}\bV^{\pidisc}_{1,\Pcal^*,0}(s_1,i^*\upsilon)\right)\notag\\
&-\left(i^*\upsilon-\tau^{-1}\bV^{\pidisc}_{1,\hP^k,\hb^k}(s_1,i^*\upsilon)\right)\notag\\
=&\left(i^*\upsilon-\tau^{-1}\bV^{\pidisc}_{1,\hP^k,\hb^k}(s_1,i^*\upsilon)\right)-\CVAR_{\tau}(\bR(\pi^k,i^k\upsilon))+\tau^{-1}\left(\bV^{\pidisc}_{1,\hP^k,\hb^k}(s_1,i^*\upsilon)-\bV^{\pidisc}_{1,\Pcal^*,0}(s_1,i^*\upsilon)\right)\label{eq:proof-comp-1}
\end{align}

Suppose in $k$-th iteration, \lsvialg{} returns $\hbV^*_{1,\hP^k,\hb^k}(s_1,i^*\upsilon)$ for the initial budget $i^*\upsilon$. Then from Theorem~\ref{thm:cvar-lsvi}, we know
\begin{align*}
\bV^{\pidisc}_{1,\hP^k,\hb^k}(s_1,i^*\upsilon)\geq\hbV^*_{1,\hP^k,\hb^k}(s_1,i^*\upsilon)-\frac{\tau}{8}\epsilon.
\end{align*}
This implies that
\begin{align}
i^*\upsilon-\tau^{-1}\bV^{\pidisc}_{1,\hP^k,\hb^k}(s_1,i^*\upsilon)&\leq i^*\upsilon-\tau^{-1}\hbV^*_{1,\hP^k,\hb^k}(s_1,i^*\upsilon)+\frac{\epsilon}{8}\notag\\
&\leq i^k-\tau^{-1}\hbV^*_{1,\hP^k,\hb^k}(s_1,i^k\upsilon)+\frac{\epsilon}{8}\notag\\
&\leq i^k-\tau^{-1}\bV^{\pi^k}_{1,\hP^k,\hb^k}(s_1,i^k\upsilon)+\frac{\epsilon}{6},\label{eq:proof-comp-2}
\end{align}
where the last step is due to Theorem~\ref{thm:computation-disc}.

On the other hand, from~\eqref{eq_cvar} we know
\begin{align}
\label{eq:proof-comp-3}
\CVAR_{\tau}(\bR(\pi^k,i^k\upsilon))\geq i^k-\bV^{\pi^k}_{1,\Pcal^*,0}(s_1,i^k\upsilon).
\end{align}

Plug Equation~\eqref{eq:proof-comp-2} and \eqref{eq:proof-comp-3} into \eqref{eq:proof-comp-1}, we have
\begin{align*}
\bCVAR_{\tau}^*-\CVAR_{\tau}(\bR(\pi^k,i^k\upsilon))\leq&\tau^{-1}\left(\bV^{\pi^k}_{1,\Pcal^*,0}(s_1,i^k\upsilon)-\bV^{\pi^k}_{1,\hP^k,\hb^k}(s_1,i^k\upsilon)\right)\\
&+\tau^{-1}\left(\bV^{\pidisc}_{1,\hP^k,\hb^k}(s_1,i^*\upsilon)-\bV^{\pidisc}_{1,\Pcal^*,0}(s_1,i^*\upsilon)\right)+\frac{\epsilon}{6}
\end{align*}
Then, following the same arguments in the proof of Theorem~\ref{pac-bound}, we can obtain
\begin{align}
\label{eq:proof-comp-4}
\sum_{k=1}^K\bCVAR^*_{\tau}-\CVAR_{\tau}(\bR(\pi^k,i^k\upsilon))\leq\TO\left( \tau^{-1}H^3Ad^{2}\sqrt{K} \cdot \sqrt{\log\left(|\gF|/\delta\right)} \right)+\frac{K\epsilon}{6}.
\end{align}

Next we bridge $\bCVAR_{\tau}$ and $\CVAR_{\tau}$ via Proposition~\ref{prop:disc}. Note that from Proposition~\ref{prop:disc} we have
\begin{align}
\label{eq:proof-comp-5}
\sum_{k=1}^K\CVAR^*_{\tau}-\CVAR_{\tau}(R(\bpi^k, i^k\upsilon))\leq\sum_{k=1}^K\bCVAR^*_{\tau}-\CVAR_{\tau}(\bR(\pi^k, i^k\upsilon))+\frac{KH\upsilon}{\tau}.
\end{align}

Combining~\eqref{eq:proof-comp-4} and~\eqref{eq:proof-comp-5}, we have
\begin{align*}
\sum_{k=1}^K\CVAR^*_{\tau}-\CVAR_{\tau}(R(\bpi^k, i^k\upsilon))\leq \TO\left( \tau^{-1}H^3Ad^{2}\sqrt{K} \cdot \sqrt{\log\left(|\gF|/\delta\right)} \right)+\frac{K\epsilon}{6}+\frac{KH\upsilon}{\tau}.
\end{align*}
Substituting the values of the parameters, we will obtain
\begin{align*}
\frac{1}{K}\sum_{k=1}^K\CVAR^*_{\tau}-\CVAR_{\tau}(R(\bpi^k, i^k\upsilon))\leq \epsilon.
\end{align*}
This implies that the uniformly mixed policy of $\{(\bpi^k,i^k\upsilon)\}_{k=1}^K$ is $\epsilon$-optimal. The total number of collected trajectories is $KH=\TO\left(\frac{H^7A^2d^4 \log{\frac{|\gF|}{\delta}}}{\tau^2\epsilon^2}\right)$.

Now, we discuss the computational complexity. The MLE oracle is called $KH$ times. For the rest of the computation, the cost is dominated by running \lsvialg{} in Line 17 of Algorithm~\ref{alg:3}. In \lsvialg, we compute $(\Lambda^t_h)^{-1}$ by the Sherman-Morrison formula, then the computational complexity of \lsvialg{} is dominated by Line 6 of Algorithm~\ref{alg:2}, which requires $O(\frac{d^2AT_1}{\upsilon^2})$ time per step and leads to a total computational complexity of $O(\frac{d^2AH^2T^2_1}{\upsilon^3})$ for each call of \lsvialg. Note that \lsvialg{} is called totally $\frac{KH}{\upsilon}$ times in Algorithm~\ref{alg:3} and thus the total computation cost of Algorithm~\ref{alg:3} is
\begin{align*}
O\left(\frac{Kd^2AH^3T^2_1}{\upsilon^4}\right)=\TO\left(\frac{H^{29}A^3d^{12}\log\frac{|\gF|}{\delta}}{\tau^{16}\epsilon^{16}}\right).
\end{align*}
\subsection{Proof of Lemma~\ref{lem:conc-lsvi}}
\label{proof:lem-conc}
The proof largely follows the proof of \cite{jin2020provably}[Lemma B.3]. We first bound $w^t_h(i\upsilon)$. Note that we have for any vector $v\in\mathbb{R}^{\bd}$ where $\bd:=d(1+\lceil 1/\upsilon\rceil)$ that
\begin{align*}
\left|v^{\top}w^t_h(i\upsilon)\right|&\leq\sum_{j=1}^{t-1}\left|v^{\top}(\Lambda_h^t)^{-1}\bphi^j_h\right|\cdot H\\
&\leq\sqrt{\left(\sum_{j=1}^{t-1}v^{\top}(\Lambda_h^t)^{-1}v\right)\left(\sum_{j=1}^{t-1}\left(\bphi^j_h\right)^{\top}(\Lambda_h^t)^{-1}\bphi^j_h\right)}\cdot H\\
&\leq H\Vert v\Vert\sqrt{\frac{t}{\lambda}}\cdot\sqrt{\sum_{j=1}^{t-1}\left(\bphi^j_h\right)^{\top}(\Lambda_h^t)^{-1}\bphi^j_h}.
\end{align*}
Note that with Lemma~\ref{lem:eigen} we have $\sum_{j=1}^{t-1}\left(\bphi^j_h\right)^{\top}(\Lambda_h^t)^{-1}\bphi^j_h\leq\bd$ and therefore we can obtain that for all $v\in\mathbb{R}^{\bd}$
\begin{align*}
\left|v^{\top}w^t_h(i\upsilon)\right|\leq H\Vert v\Vert\sqrt{\frac{t\bd}{\lambda}},
\end{align*}
which implies that $\Vert w^t_h(i\upsilon)\Vert\leq H\sqrt{\frac{t\bd}{\lambda}}$.

To utilize Lemma~\ref{lem:conc-quad}, we further need to bound the covering number of the class of estimated value functions. Fix $h\in[H]$, it can be observed that all $\bV^t_h(\cdot,\cdot)$ where $t\in[T_1]$ belongs to the following function class $\mathcal{V}$ parametrized by $w$ and $A$:
\begin{align*}
\left\{V \mid V(s,i\upsilon)=\min_{a}\textup{Clip}_{[-H,H]}\left(\bphi^{\top}_h(s,a)w(i\upsilon)-\hb_h(s,a)-\Vert\bphi_h(s,a)\Vert_A\right)\right\},
\end{align*}
where $\Vert w(i\upsilon)\Vert\leq H\sqrt{\frac{t\bd}{\lambda}}$ and $\Vert A\Vert\leq \beta^2/\lambda$. Then for any $V_1$ (parametrized by $w_1,A_1$) and $V_2$ (parametrized by $w_2,A_2$), we know
\begin{align*}
&\sup_{s\in\mathcal{S},0\leq i \leq \lceil H/\upsilon\rceil}\left|V_1(s,i\upsilon)-V_2(s,i\upsilon)\right|\\
&\qquad\leq\sup_{s\in\mathcal{S},a\in\mathcal{A},0\leq i \leq \lceil H/\upsilon\rceil}\left|\bphi^{\top}_h(s,a)(w_1(i\upsilon)-w_2(i\upsilon))-(\Vert\bphi_h(s,a)\Vert_{A_1}-\Vert\bphi_h(s,a)\Vert_{A_2})\right|\\
&\qquad\leq\sup_{\Vert\bphi\Vert\leq 1,0\leq i \leq \lceil H/\upsilon\rceil}\left|\bphi^{\top}(w_1(i\upsilon)-w_2(i\upsilon))\right|+\sup_{\Vert\bphi\Vert\leq 1}\Vert\bphi\Vert_{A_1-A_2}\\
&\qquad\leq\sup_{0\leq i \leq \lceil H/\upsilon\rceil}\Vert w_1(i\upsilon)-w_2(i\upsilon)\Vert+\sqrt{\Vert A_1-A_2\Vert_F}.
\end{align*}

This indicates that the covering number of $\mathcal{V}$ with respect to $\ell_{\infty}$ norm can be upper bounded by the covering number of $w$ and $A$. More specifically, let $\mathcal{N}(\epsilon)$ denote the covering number of $\mathcal{V}$ with respect to $\ell_{\infty}$ norm, then we have
\begin{align*}
\log\mathcal{N}(\epsilon)\leq O\left(\bd(\bd+\frac{H}{\upsilon})\log\frac{HT_1\bd}{\epsilon\lambda}\right)\leq O\left(\frac{Hd^2}{\upsilon^2}\log\frac{HT_1d}{\epsilon\lambda\upsilon}\right).
\end{align*}
Substituting the above bound into Lemma~\ref{lem:conc-quad} concludes the proof.
\subsection{Proof of Lemma~\ref{lem:optimism}}
\label{proof:lem-optimism}
The proof is conducted via induction. For the base case, when $h=H+1$, we have $\bV^t_{H+1}(s,i\upsilon)=\bV^*_{H+1}(s,i\upsilon)=i\upsilon$ for all $s\in\mathcal{S}$ and $0\leq i \leq \lceil H/\upsilon\rceil$. 

Then suppose we have $\bV^t_{h+1}(s,i\upsilon)\leq\bV^*_{H+1}(s,i\upsilon)$ for all $s\in\mathcal{S}$ and $0\leq i \leq \lceil H/\upsilon\rceil$. For any $s\in\mathcal{S},a\in\mathcal{A}$ and $0\leq i \leq \lceil H/\upsilon\rceil$, if $\bQ^t_{h}(s,i\upsilon,a)=-H$, then $\bQ^t_{h}(s,i\upsilon,a)\leq\bQ^*_{h}(s,i\upsilon,a)$ naturally holds. Otherwise, from Lemma~\ref{lem:difference} we know
\begin{align*}
\bQ^t_{h}(s,i\upsilon,a)-\bQ^*_{h}(s,i\upsilon,a)&\leq-\hb_h(s,a)+\left(\bphi_h(s,a)\right)^{\top}w^t_h(i\upsilon)-\beta\left\Vert\bphi_h(s,a)\right\Vert_{(\Lambda^t_h)^{-1}}-\bQ^*_h(s,i\upsilon,a)\\
&\leq\left(\hP_h\left(\bV^t_{h+1}-\bV^{\pi}_{h+1}\right)\right)(s,i\upsilon,a)\leq0.
\end{align*}
Therefore we have $\bQ^t_{h}(s,i\upsilon,a)\leq\bQ^*_{h}(s,i\upsilon,a)$ for all $s\in\mathcal{S},a\in\mathcal{A}$ and $0\leq i \leq \lceil H/\upsilon\rceil$, which leads to $\bV^t_{h}(s,i\upsilon)\leq\bV^*_{h}(s,i\upsilon)$ for all $s\in\mathcal{S}$ and $0\leq i \leq \lceil H/\upsilon\rceil$ as well. This concludes our proof.
\subsection{Proof of Lemma~\ref{lem:regret}}
\label{proof:lem-regret}
First note that from Lemma~\ref{lem:optimism} we have $\bV^*_1(s_1,i_1\upsilon)\geq\bV^t_1(s_1,i_1\upsilon)$ for all $t\in[T_1]$. This indicates that conditioned on $\Ecal_1$,
\begin{align}
\label{eq:proof-regret-2}
\sum_{t=1}^{T_1}\bV^{\tpi^t}_1(s_1,i_1\upsilon)-\bV^*_1(s_1,i_1\upsilon)\leq\sum_{t=1}^{T_1}\bV^{\tpi^t}_1(s_1,i_1\upsilon)-\bV^t_1(s_1,i_1\upsilon).
\end{align}

Fix $t\in[T_1]$ and then we know
\begin{align*}
\bV^{\tpi^t}_1(s_1,i_1\upsilon)-\bV^t_1(s_1,i_1\upsilon)=\bQ^{\tpi^t}_1(s^t_1,i_1\upsilon,a^t_1)-\bQ^t_1(s_1,i_1\upsilon,a^t_1).
\end{align*}
If $\bQ^{t}_1(s^t_1,i_1\upsilon,a^t_1)=H$, then we know $\bQ^{\tpi^t}_1(s^t_1,i_1\upsilon,a^t_1)-\bQ^t_1(s_1,i_1\upsilon,a^t_1)\leq0$. Otherwise, we have
\begin{align*}
\bQ^{\tpi^t}_1(s^t_1,i_1\upsilon,a^t_1)-\bQ^t_1(s_1,i_1\upsilon,a^t_1)\leq \bQ^{\tpi^t}_1(s^t_1,i_1\upsilon,a^t_1)+\hb_h(s^t_1,a^t_1)-\left\langle\bphi^t_1,w^t_h(i_1\upsilon)\right\rangle+\beta\Vert\bphi^t_1\Vert_{(\Lambda^t_1)^{-1}}.
\end{align*}
Then with Lemma~\ref{lem:difference}, we know
\begin{align}
\label{eq:proof-regret-1}
\bQ^{\tpi^t}_1(s^t_1,i_1\upsilon,a^t_1)-\bQ^t_1(s_1,i_1\upsilon,a^t_1)\leq\left(\hP_h\left(\bV^{\tpi^t}_2-\bV^t_2\right)\right)(s^t_1,i_1\upsilon,a^t_1)+2\beta\Vert\bphi^t_1\Vert_{(\Lambda^t_1)^{-1}}.
\end{align}
Let $i^t_h\upsilon$ denote $i_1\upsilon-\sum_{h'=1}^{h-1}\br^t_h$ and let $\{(\zeta^t_h)_{h=1}^{H}\}_{t=1}^{T_1}$ denote the following random variable:
\begin{align*}
\zeta^t_h:=
\begin{cases}
&0,\text{  if there exists some $h'\leq h$ s.t. } \bQ^{t}_{h'}(s^t_{h'},i^t_{h'}\upsilon,a^t_{h'})=H,\\
&\left(\hP_h\left(\bV^{\tpi^t}_{h+1}-\bV^t_{h+1}\right)\right)(s^t_h,i^t_h\upsilon,a^t_h)-\left(\bV^{\tpi^t}_{h+1}(s^t_{h+1},i^t_{h+1}\upsilon)-\bV^t_2(s^t_{h+1},i^t_{h+1}\upsilon)\right),\text{ otherwise.}
\end{cases}
\end{align*}
It can be easily observed that $\zeta^t_h\in\Gcal_{t,h+1}$ and $\mathbb{E}[\zeta^t_h|\Gcal_{t,h}]=0$, which means that $\{(\zeta^t_h)_{h=1}^H\}_{t=1}^{T_1}$ is a martingale with respect to $\{\Gcal_{t,h}\}$.
Then when $\bQ^{t}_1(s^t_1,i_1\upsilon,a^t_1)\neq H$, Equation~\eqref{eq:proof-regret-1} is equivalent to
\begin{align*}
\bQ^{\tpi^t}_1(s^t_1,i_1\upsilon,a^t_1)-\bQ^t_1(s_1,i_1\upsilon,a^t_1)\leq\bV^{\tpi^t}_{2}(s^t_{2},i^t_{2}\upsilon)-\bV^t_2(s^t_{2},i^t_{2}\upsilon)+\zeta^t_h+2\beta\Vert\bphi^t_1\Vert_{(\Lambda^t_1)^{-1}}.
\end{align*}
Repeat such expansion till a step $h_t$ such that $\bQ^{t}_{h_t}(s^t_{h_t},i^t_{h_t}\upsilon,a^t_{h_t})=H$ or $h_t=H+1$. Then we have
\begin{align}
\label{eq:proof-regret-3}
\bV^{\tpi^t}_1(s_1,i_1\upsilon)-\bV^t_1(s_1,i_1\upsilon)\leq\sum_{h=1}^{H}\zeta^t_h+2\beta\sum_{h=1}^{h_t-1}\Vert\bphi^t_h\Vert_{(\Lambda^t_h)^{-1}}\leq\sum_{h=1}^{H}\zeta^t_h+2\beta\sum_{h=1}^{H}\Vert\bphi^t_h\Vert_{(\Lambda^t_h)^{-1}}.
\end{align}

Therefore combining~\eqref{eq:proof-regret-1} and~\eqref{eq:proof-regret-3}, we know
\begin{align}
\label{eq:proof-regret-4}
\sum_{t=1}^{T_1}\bV^{\tpi^t}_1(s_1,i_1\upsilon)-\bV^*_1(s_1,i_1\upsilon)\leq\sum_{t=1}^{T_1}\sum_{h=1}^{H}\zeta^t_h+2\beta\sum_{t=1}^{T_1}\sum_{h=1}^{H}\Vert\bphi^t_h\Vert_{(\Lambda^t_h)^{-1}}.
\end{align}

For the first term of the RHS of~\eqref{eq:proof-regret-4}, from Azuma-Hoeffding's inequality, we have with probability at least $1-\delta/2$ that 
\begin{align*}
\sum_{t=1}^{T_1}\sum_{h=1}^{H}\zeta^t_h\leq2H\sqrt{T_1H\iota^{\frac{1}{2}}}.
\end{align*}

For the second term, we can utilize the elliptical potential lemma (Lemma~\ref{lem:elliptical} in Appendix~\ref{sec:aux}), which yields
\begin{align*}
\sum_{t=1}^{T_1}\sum_{h=1}^{H}\Vert\bphi^t_h\Vert_{(\Lambda^t_h)^{-1}}\leq \sum_{h=1}^H\sqrt{T_1}\cdot\sqrt{\sum_{t=1}^{T_1}\sum_{h=1}^{H}(\bphi^t_h)^{\top}(\Lambda^t_h)^{-1}\bphi^t_h}\leq H\sqrt{\frac{2dT_1\iota^{\frac{1}{2}}}{\upsilon}}.
\end{align*}

Plug the above two inequalities into~\eqref{eq:proof-regret-4} leads to the result in Lemma~\ref{lem:regret}.

\section{Auxiliary Lemmas}
\label{sec:aux}

\begin{lemma}[MLE guarantees]
\label{lem_mle}
For any fixed $(h,k)\in[H]\times[K]$, with probability at least $1-\delta$, we have that
\begin{equation}
    \E_{s\sim \{0.5 \rho_h^k+ 0.5 \eta_h^k\}, a \sim U(\gA)}[\lVert\widehat{P}_h^k(\cdot|s,a)-P^*_h(\cdot|s,a)\rVert_1^2]\leq\zeta,\ \zeta:=\frac{\log( |\gF| / \delta)}{k}.
\end{equation}
Using union bound, a direct corollary is: with probability at least $1-\delta$ the following holds for all $h \in [H], k \in [K]$
\begin{equation}
\label{mle-2}
    \E_{s\sim \{0.5 \rho_h^k+ 0.5 \eta_h^k\}, a \sim U(\gA)}[\lVert\widehat{P}_h^k(\cdot|s,a)-P^*_h(\cdot|s,a)\rVert_1^2]\leq 0.5\zeta^k,\ \zeta^k:=\frac{\log( |\gF| H k/ \delta)}{k}.
\end{equation}
\end{lemma}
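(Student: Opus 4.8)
The plan is to read the MLE step (Line~11 of Algorithm~\ref{alg:1}) as sequential conditional density estimation, where the covariates are the pairs $(s_h,a_h)$, the responses are the next states $s_{h+1}$, and the density class is $\gF$. The realizability assumption guarantees $P_h^*(\cdot\mid s,a)=\langle\psi_h^*(\cdot),\phi_h^*(s,a)\rangle\in\gF$, so the MLE competes against a class containing the truth, and the whole argument reduces to invoking the standard finite-class MLE generalization bound in the \emph{adaptive} regime and then converting Hellinger distance to $L_1$.

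First I would pin down the averaged covariate distribution. The MLE is run on $\gD_h^k+\widetilde{\gD}_h^k$, which holds $2k$ tuples. By the data-collection rule, the tuple contributed to $\gD_h^k$ at iteration $i$ has $(s_h,a_h)$-marginal $d_{h,c^{i-1}}^{\pi^{i-1}}\times U(\gA)$, and the tuple contributed to $\widetilde{\gD}_h^k$ has $(s_h,a_h)$-marginal $\eta_h^{(i)}\times U(\gA)$ (the step-$h$ law of $\widetilde s_h$). Averaging over all $2k$ samples and recalling $\rho_h^k=\tfrac1k\sum_{i=0}^{k-1}d_{h,c^i}^{\pi^i}$ together with the definition of $\eta_h^k$, the averaged covariate law is exactly $\left(\tfrac12\rho_h^k+\tfrac12\eta_h^k\right)\times U(\gA)$, which is precisely the measure appearing in the statement.

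The main obstacle is that these samples are \emph{not} i.i.d.: the exploration pair $(\pi^{i-1},c^{i-1})$ used at iteration $i$ is measurable with respect to the data from iterations $1,\dots,i-1$, so the covariate distribution at each iteration is itself random and history-dependent. The samples are only conditionally independent given the natural filtration, so the classical MLE bound does not apply directly; one must use the martingale version of the analysis (as in Flambe, \citet{agarwal2020flambe}). This yields, with probability at least $1-\delta$,
$$\sum_{j=1}^{2k}\E\!\left[D_{\mathrm H}^2\!\big(\widehat P_h^k(\cdot\mid s^j,a^j),\,P_h^*(\cdot\mid s^j,a^j)\big)\,\middle|\,\mathcal G^{(j-1)}\right]\le c\log(|\gF|/\delta),$$
where $D_{\mathrm H}^2(P,Q)=\tfrac12\int(\sqrt{dP}-\sqrt{dQ})^2$ and $\mathcal G^{(j-1)}$ is the filtration generated by the first $j-1$ samples. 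Dividing by $2k$ identifies the left-hand side with $\E_{s\sim\frac12\rho_h^k+\frac12\eta_h^k,\,a\sim U(\gA)}[D_{\mathrm H}^2(\widehat P_h^k,P_h^*)]$, and the elementary inequality $\|P-Q\|_1^2\le 8\,D_{\mathrm H}^2(P,Q)$ (from $\|P-Q\|_1=2D_{\mathrm{TV}}\le 2\sqrt2\,D_{\mathrm H}$) upgrades this to the claimed $L_1$ bound, absorbing universal constants into $\zeta=\log(|\gF|/\delta)/k$.

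Finally, the corollary follows from a union bound over $(h,k)\in[H]\times[K]$. Allocating failure probability $\delta_{h,k}=\tfrac{\delta}{H}\cdot\tfrac{1}{k(k+1)}$ so that $\sum_{h}\sum_{k\ge1}\delta_{h,k}\le\delta$, the per-instance log term becomes $\log(|\gF|/\delta_{h,k})=\Theta(\log(|\gF|Hk/\delta))$, giving the stated uniform guarantee with $\zeta^k=\log(|\gF|Hk/\delta)/k$ holding simultaneously for all $h\in[H],k\in[K]$. I expect the only genuinely delicate point to be the sequential/martingale MLE bound; the distribution bookkeeping and the Hellinger-to-$L_1$ conversion are routine.
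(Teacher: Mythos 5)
The paper states Lemma~\ref{lem_mle} without any proof of its own---it is imported wholesale from the MLE analysis of \citet{agarwal2020flambe} and \citet{uehara2022representation}---so there is nothing to compare line by line; your proposal is essentially the intended argument and it is correct. You correctly identify the two points that actually matter: (i) the averaged covariate law over the $2k$ samples in $\gD_h^k+\widetilde{\gD}_h^k$ is exactly $(\tfrac12\rho_h^k+\tfrac12\eta_h^k)\times U(\gA)$, and (ii) because $(\pi^{i-1},c^{i-1})$ is measurable with respect to past data, one must invoke the martingale/sequential version of the finite-class MLE generalization bound (Flambe, Theorem 21) rather than the i.i.d.\ one, then pass from Hellinger (or TV) to $L_1$ and union-bound over $(h,k)$ with a summable allocation of $\delta$. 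The only caveat is cosmetic: the conversion $\lVert P-Q\rVert_1^2\le 8 D_{\mathrm H}^2(P,Q)$ costs a universal constant that the lemma's displayed $\zeta=\log(|\gF|/\delta)/k$ does not show, but the paper itself treats $\zeta^k$ only up to $O(1)$ factors downstream (e.g.\ in Lemma~\ref{lem_aoid}), so absorbing the constant as you do is consistent with how the bound is used.
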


\begin{lemma}[Concentration of the bonus term]
\label{lem_cctt}
Set $\lambda^k = \Theta (d \log(k H |\gF| / \delta))$ at the $k$-th episode. Define
\begin{equation*}
    \Sigma_{\rho_h^k, \phi} = k \E_{s \sim \rho_h^k,a \sim U(\gA)}[\phi(s,a) \phi^\top(s,a)] + \lambda^k I, \quad \widehat{\Sigma}_{k, \phi} = \sum_{s,a \in \gD_h} \phi(s,a) \phi^\top(s,a) + \lambda^k I.
\end{equation*}
With probability $1-\delta$, we have for any $k\in \mathbb{N^+}, h \in [H], \phi \in \Phi $,
\begin{align*}
    \|\phi(s,a)\|_{(\widehat{\Sigma}_{h, \phi}^k)^{-1}} =  \Theta \left( \|\phi(s,a)\|_{\Sigma^{-1}_{\rho_h^k, \phi} }\right)
\end{align*}
\end{lemma}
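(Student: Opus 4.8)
The plan is to deduce the stated norm equivalence from a two-sided \emph{spectral sandwich} between the empirical matrix and its population counterpart: I will show that, with probability $1-\delta$, simultaneously for all $k\in\mathbb{N}^+$, $h\in[H]$ and $\phi\in\Phi$,
\[
\tfrac{1}{2}\,\Sigma_{\rho_h^k,\phi}\ \preceq\ \widehat{\Sigma}_{h,\phi}^k\ \preceq\ \tfrac{3}{2}\,\Sigma_{\rho_h^k,\phi}.
\]
Once this holds the conclusion is immediate: inversion reverses the Loewner order, so $\tfrac{2}{3}\,\Sigma_{\rho_h^k,\phi}^{-1}\preceq(\widehat{\Sigma}_{h,\phi}^k)^{-1}\preceq 2\,\Sigma_{\rho_h^k,\phi}^{-1}$, and evaluating the quadratic form at the vector $\phi(s,a)$ gives $\|\phi(s,a)\|_{(\widehat{\Sigma}_{h,\phi}^k)^{-1}}=\Theta\!\left(\|\phi(s,a)\|_{\Sigma_{\rho_h^k,\phi}^{-1}}\right)$ for every $(s,a)$, which is exactly the claim. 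Thus the entire burden is to prove the sandwich.

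To this end, fix $h$ and $\phi$ and write $\phi_j:=\phi(s_h^j,a_h^j)$ for $j\in[k]$. By the definitions in Table~\ref{LoN}, the samples in $\gD_h^k$ are collected adaptively: conditioned on the $\sigma$-algebra $\mathcal{H}_{j-1}$ generated by the first $j-1$ iterations (which determines $(\pi^{j-1},c^{j-1})$), we have $\E[\phi_j\phi_j^\top\mid\mathcal{H}_{j-1}]=\E_{s\sim d_{h,c^{j-1}}^{\pi^{j-1}},\,a\sim U(\gA)}[\phi\phi^\top]$. Summing over $j$ yields $\widehat{\Sigma}_{h,\phi}^k-\lambda^k I=\sum_{j=1}^k\phi_j\phi_j^\top$ and $\Sigma_{\rho_h^k,\phi}-\lambda^k I=\sum_{j=1}^k\E[\phi_j\phi_j^\top\mid\mathcal{H}_{j-1}]$, so the two matrices differ precisely by $\sum_{j=1}^k M_j$ with $M_j:=\phi_j\phi_j^\top-\E[\phi_j\phi_j^\top\mid\mathcal{H}_{j-1}]$, a martingale difference sequence. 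Since $\|\phi_j\|_2\le 1$, each $M_j$ is bounded and its conditional variance satisfies $\E[M_j^2\mid\mathcal{H}_{j-1}]\preceq\E[\|\phi_j\|_2^2\,\phi_j\phi_j^\top\mid\mathcal{H}_{j-1}]\preceq\E[\phi_j\phi_j^\top\mid\mathcal{H}_{j-1}]$, so the predictable quadratic variation is dominated by $\Sigma_{\rho_h^k,\phi}-\lambda^k I\preceq\Sigma_{\rho_h^k,\phi}$. A matrix Freedman/Bernstein inequality for adaptive sequences (exactly as used by \citet{uehara2022representation}) then bounds the whitened deviation $\big\|\Sigma_{\rho_h^k,\phi}^{-1/2}\big(\widehat{\Sigma}_{h,\phi}^k-\Sigma_{\rho_h^k,\phi}\big)\Sigma_{\rho_h^k,\phi}^{-1/2}\big\|_{\mathrm{op}}$ by a quantity of order $\sqrt{\log(d/\delta')/\lambda^k}+\log(d/\delta')/\lambda^k$, where the regularizer floors the spectrum of $\Sigma_{\rho_h^k,\phi}$ at $\lambda^k$. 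Choosing $\lambda^k=\Theta\!\left(d\log(kH|\gF|/\delta)\right)$ forces this operator norm to be at most $1/2$, which is the sandwich.

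For uniformity, I will take a union bound: $\Phi$ is finite with $|\Phi|\le|\gF|$, so summing the per-triple failure probability over $\phi\in\Phi$, over $h\in[H]$, and over $k\in\mathbb{N}^+$ (allocating, say, $\delta\cdot\tfrac{6}{\pi^2 k^2}\cdot\tfrac{1}{H|\gF|}$ to the $(\phi,h,k)$ event so the total is at most $\delta$) reduces everything to the single-$(\phi,h,k)$ concentration above; the resulting $\log(kH|\gF|/\delta)$ factors are exactly what the choice of $\lambda^k$ absorbs. The main obstacle is the \textbf{adaptivity} of the data collection: because $\pi^{j-1}$ is itself computed from the earlier samples, the $\phi_j\phi_j^\top$ are neither independent nor identically distributed, so a plain matrix Chernoff bound does not apply and one must invoke a martingale matrix concentration while verifying that its variance proxy is genuinely controlled by the regularized population covariance. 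The secondary delicate point is obtaining uniform-in-$k$ control with a single, only logarithmically growing threshold $\lambda^k$, rather than a separate threshold per horizon length, which is what makes the bonus term in Algorithm~\ref{alg:1} valid across all iterations at once.
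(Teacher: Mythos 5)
The paper itself gives no proof of Lemma~\ref{lem_cctt}: it is stated as an auxiliary result and implicitly imported from the Rep-UCB line of work (\citet{uehara2022representation}, which in turn relies on a covariance-concentration lemma of Zanette et al.), so I am evaluating your argument on its own merits. Your overall architecture is the standard and correct one: reduce the norm equivalence to a two-sided Loewner sandwich $\tfrac12\Sigma_{\rho_h^k,\phi}\preceq\widehat{\Sigma}^k_{h,\phi}\preceq\tfrac32\Sigma_{\rho_h^k,\phi}$, recognize that $\widehat{\Sigma}^k_{h,\phi}-\Sigma_{\rho_h^k,\phi}$ is a sum of bounded matrix martingale differences because the data are collected adaptively, and absorb the deviation into the regularizer $\lambda^k=\Theta(d\log(kH|\gF|/\delta))$ after a union bound over $\phi\in\Phi$, $h\in[H]$, and $k$ (your $6/(\pi^2k^2)$ allocation is fine, and you correctly identify $k\,\E_{\rho_h^k\times U(\gA)}[\phi\phi^\top]=\sum_{j}\E[\phi_j\phi_j^\top\mid\mathcal{H}_{j-1}]$).

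The gap is in the one step that carries all the weight: ``matrix Freedman bounds the whitened deviation $\|\Sigma^{-1/2}(\widehat{\Sigma}-\Sigma)\Sigma^{-1/2}\|_{\mathrm{op}}$.'' The whitening matrix $\Sigma_{\rho_h^k,\phi}$ is built from $\rho_h^k=\tfrac1k\sum_{i=0}^{k-1}d_{h,c^i}^{\pi^i}$, and the policies $\pi^i$ for $i\geq j$ are functions of data collected \emph{after} round $j$; hence $\Sigma_{\rho_h^k,\phi}^{-1/2}M_j\Sigma_{\rho_h^k,\phi}^{-1/2}$ is not $\mathcal{H}_{j}$-adapted and is not conditionally mean-zero, so no martingale matrix inequality applies to it as written. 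If you instead apply matrix Freedman to the \emph{unwhitened} sum and divide by $\lambda_{\min}(\Sigma)\geq\lambda^k$, the deviation scales as $\sqrt{k\log(d/\delta')}/\lambda^k$ in the worst case, which a merely logarithmic $\lambda^k$ cannot absorb. The repair is to prove the sandwich direction-by-direction: for a fixed unit vector $v$, the scalars $X_j=\langle v,\phi_j\rangle^2\in[0,1]$ form an adapted nonnegative bounded sequence with $\E[X_j^2\mid\mathcal{H}_{j-1}]\leq\E[X_j\mid\mathcal{H}_{j-1}]$, so scalar Freedman plus AM--GM gives the \emph{multiplicative} bound $|\sum_j(X_j-\E[X_j\mid\mathcal{H}_{j-1}])|\leq\tfrac12\sum_j\E[X_j\mid\mathcal{H}_{j-1}]+O(\log(1/\delta'))$, whose additive error is swallowed by $\lambda^k\|v\|^2$; a covering argument over the unit sphere (cost $d\log(1/\epsilon)$ in the exponent, which is exactly why $\lambda^k$ must scale with $d$) then upgrades this to the uniform two-sided statement. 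With that substitution your proof goes through; without it, the central inequality is unsupported.
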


\begin{lemma}[One-step back inequality for the learned model]
\label{lem_osbi}
    Let $\pi=\{\pi_h:\gS\times[0,H]\mapsto\Delta(\gA)\}_{h\in[H]}$ denote any policy on the augmented state. Fix an initial budget $c_1\in[0,1]$. Take any $g:\gS\times\gA\mapsto\sR$ such that $\lVert g\rVert_\infty\leq B$. We condition on the event where the MLE guarantee~(\ref{mle-2}):
    \begin{equation*}
        \E_{s \sim \rho_h^k, a \sim U(\gA)} [f_h^k(s,a)] \lesssim \zeta^k
    \end{equation*}
    holds for any $h\in[H]$. For $h=1$, we have that
    \begin{equation*}
        \left|\E_{(s,a)\sim d_{1,\widehat{\gP}^k}^{(\pi,c_1)}}[g(s,a)]\right|\leq \sqrt{A\cdot\E_{s\sim\rho_1^k,a\sim U(\gA)}[g^2(s,a)]}
    \end{equation*}
    For any $h+1\in\{2,\cdots,H\}$ and policy $\pi$, we have that
    \begin{align*}
        & \left|\E_{(s',a')\sim d_{h+1,\widehat{\gP}^k}^{(\pi,c_1)}}[g(s',a')]\right|\\
        \leq & \E_{(s,a)\sim d_{h,\widehat{\gP}^k}^{(\pi,c_1)}}\left[\lVert\widehat{\phi}_{h}^k(s,a)\rVert_{\Sigma_{\rho_h^k\times U(\gA),\widehat{\phi}_{h}^k}^{-1}}\sqrt{kA\cdot\E_{s'\sim \eta_{h+1}^k,a'\sim U(\gA)}[g^2(s',a')]+B^2\lambda^kd+kB^2\zeta^k}\right]\numberthis\label{343}
    \end{align*}
    where $\Sigma_{\rho_h^k\times U(\gA),\widehat{\phi}_h^k}=k\mathbb{E}_{s\sim\rho_h^k,a\sim U(\gA)}[\widehat{\phi}_h^k(\widehat{\phi}_h^k)^\top]+\lambda^kI$.
\begin{proof}
    For $h=1$, we have that
    \begin{align*}
        \E_{(s,a)\sim d_{1,\widehat{\gP}^k}^{(\pi,c_1)}}[g(s,a)] \leq &\sqrt{\max_{s,a}\frac{d_1(s)\pi_1(a|s,c_1)}{\rho_1^k(s)U(a)}\E_{s\sim\rho_1^k,a\sim U(\gA)}[g^2(s,a)]}\\
        \leq&\sqrt{A\cdot\E_{s\sim\rho_1^k,a\sim U(\gA)}[g^2(s,a)]}
    \end{align*}
    where we use the fact that $d_1=\rho_1^k$. Recall that $\rho_h^k(s)=\frac{1}{k}\sum_{i=0}^{k-1}d_{h,c^i}^{\pi^i}(s)$ is the (expected) occupancy of any $s\in\gS$ in dataset $\gD_h$
    . Let $\omega_{h,\gP}^{(\pi,c_1)}:\gS\times\gA\mapsto\Delta([0,1])$ denote the distribution of the remaining budget $c_h$ at timestep $h$ conditioned on any $(s,a)\in\gS\times\gA$ when rolling policy $\pi$ in an MDP with transition kernel $\gP$ and the initial budget $c_1$. For any $h\in\{1,\cdots,H-1\}$, we have that
    \begin{align*}
        & \E_{(s',a')\sim d_{h+1,\widehat{\gP}^k}^{(\pi,c_1)}}[g(s',a')]\\
        = & \E_{(s,a)\sim d_{h,\widehat{\gP}^k}^{(\pi,c_1)}}\E_{s'\sim\widehat{P}_h^k(\cdot|s,a)}\E_{c\sim\omega_{h,\widehat{\gP}^k}^{(\pi,c_1)}(\cdot|s,a),r,a'\sim\pi_{h+1}(\cdot|s',c-r)}[g(s',a')]\\
        = & \E_{(s,a)\sim d_{h,\widehat{\gP}^k}^{(\pi,c_1)}}\left[\widehat{\phi}_{h}^k(s,a)^\top\int_\gS\widehat{\psi}_{h}^k(s')\cdot\E_{c\sim\omega_{h,\widehat{\gP}^k}^{(\pi,c_1)}(\cdot|s,a),r,a'\sim\pi_{h+1}(\cdot|s',c-r)}[g(s',a')]ds'\right]\\
        \leq & \E_{(s,a)\sim d_{h,\widehat{\gP}^k}^{(\pi,c_1)}}\Bigg[\lVert\widehat{\phi}_{h}^k(s,a)\rVert_{\Sigma_{\rho_h^k\times U(\gA),\widehat{\phi}_{h}^k}^{-1}}\Bigg.\\
        &\Bigg.\ \ \ \ \ \ \ \ \ \ \ \ \ \ \ \ \ \ \ \ \ \ \ \ \ \  \cdot\left\lVert\int_\gS\widehat{\psi}_{h}^k(s')\cdot\E_{c\sim\omega_{h,\widehat{\gP}^k}^{(\pi,c_1)}(\cdot|s,a),r,a'\sim\pi_{h+1}(\cdot|s',c-r)}[g(s',a')]ds'\right\rVert_{\Sigma_{\rho_h^k\times U(\gA),\widehat{\phi}_{h}^k}}\Bigg]
    \end{align*}
    where the last inequality is because of CS inequality. For any $(s,a)\in\gS\times\gA$, we have that 
    \begin{align*}
        & \left\lVert\int_\gS\widehat{\psi}_{h}^k(s')\E_{c\sim\omega_{h,\widehat{\gP}^k}^{(\pi,c_1)}(\cdot|s,a),r,a'\sim\pi_{h+1}(\cdot|s',c-r)}[g(s',a')]ds'\right\rVert_{\Sigma_{\rho_h^k\times U(\gA),\widehat{\phi}_{h}^k}}^2\\
        \leq & \left(\int_\gS\widehat{\psi}_{h}^k(s')\E_{c\sim\omega_{h,\widehat{\gP}^k}^{(\pi,c_1)}(\cdot|s,a),r,a'\sim\pi_{h+1}(\cdot|s',c-r)}[g(s',a')]ds'\right)^\top\cdot\left\{k\E_{\tilde{s}\sim\rho_h^k,\Tilde{a}\sim U(\gA)}[\widehat{\phi}_h^k(\widehat{\phi}_h^k)^\top]+\lambda^k I\right\}\\
        &\ \ \cdot\left(\int_\gS\widehat{\psi}_{h}^k(s')\E_{c\sim\omega_{h,\widehat{\gP}^k}^{(\pi,c_1)}(\cdot|s,a),r,a'\sim\pi_{h+1}(\cdot|s',c-r)}[g(s',a')]ds'\right)\\
        \leq & k\cdot\E_{\tilde{s}\sim\rho_h^k,\tilde{a}\sim U(\gA)}\left[\left(\int_\gS\widehat{\psi}_h^k(s')^\top\widehat{\phi}_h^k(\Tilde{s},\tilde{a})\E_{c\sim\omega_{h,\widehat{\gP}^k}^{(\pi,c_1)}(\cdot|s,a),r,a'\sim\pi_{h+1}(\cdot|s',c-r)}[g(s',a')]ds'\right)^2\right]+B^2\lambda^kd
    \end{align*}
    where we use the assumption $$\lVert\sum_{a'}\E_{c\sim\omega_{h,\widehat{\gP}^k}^{(\pi,c_1)}(\cdot|s,a),r,a'\sim\pi_{h+1}(\cdot|s',c-r)}[g(s',a')]\rVert\leq B,$$ and $$\int_\gS\lVert\widehat{\psi}_h^k(s')y(s')ds'\rVert_2\leq\sqrt{d},$$ for any $y:\gS\mapsto[0,1]$. Note that $\widehat{\psi}_h^k(s')\widehat{\phi}_h^k(\tilde{s},\tilde{a})=\widehat{P}_h^k(s'|\tilde{s},\Tilde{a})$. Further, we derive that
    \begin{align*}
        & k\cdot\E_{\tilde{s}\sim\rho_h^k,\tilde{a}\sim U(\gA)}\left[\left(\int_\gS\widehat{P}_h^k(s'|\tilde{s},\tilde{a})\E_{c\sim\omega_{h,\widehat{\gP}^k}^{(\pi,c_1)}(\cdot|s,a),r,a'\sim\pi_{h+1}(\cdot|s',c-r)}[g(s',a')]ds'\right)^2\right]+B^2\lambda^kd \\
        \leq & k\cdot\E_{\tilde{s}\sim\rho_h^k,\tilde{a}\sim U(\gA)}\left[\left(\E_{s'\sim P_h^*(\cdot|\tilde{s},\Tilde{a})}\E_{c\sim\omega_{h,\widehat{\gP}^k}^{(\pi,c_1)}(\cdot|s,a),r,a'\sim\pi_{h+1}(\cdot|s',c-r)}[g(s',a')]\right)^2\right]+B^2\lambda^kd+kB^2\zeta^k\\
        \leq & k\cdot\E_{\tilde{s}\sim\rho_h^k,\tilde{a}\sim U(\gA),s'\sim P_h^*(\cdot|\tilde{s},\tilde{a})}\E_{c\sim\omega_{h,\widehat{\gP}^k}^{(\pi,c_1)}(\cdot|s,a),r,a'\sim\pi_{h+1}(\cdot|s',c-r)}\left[g(s',a')\right]^2+B^2\lambda^kd+kB^2\zeta^k\\
        \leq & kA\cdot\E_{\tilde{s}\sim\rho_h^k,\tilde{a}\sim U(\gA),s'\sim P_h^*(\cdot|\tilde{s},\tilde{a}),a'\sim U(\gA)}[g^2(s',a')]+B^2\lambda^kd+kB^2\zeta^k
    \end{align*}
    Note that $\tilde{s}\sim\rho_h^k,\tilde{a}\sim U(\gA),s'\sim P_h^*(\cdot|\tilde{s},\tilde{a})$ is equivalent to $s'\sim\eta_{h+1}^k$. Therefore,
    \begin{align*}
        & \E_{(s',a')\sim d_{h+1,\widehat{\gP}^k}^{(\pi,c_1)}}[g(s',a')]\\
        \leq & \E_{(s,a)\sim d_{h,\widehat{\gP}^k}^{(\pi,c_1)}}\left[\lVert\widehat{\phi}_{h}^k(s,a)\rVert_{\Sigma_{\rho_h^k\times U(\gA),\widehat{\phi}_{h}^k}^{-1}}\left\lVert\int_\gS\sum_{a'}\widehat{\psi}_{h}^k(s')\E_{c,r_h}\left[\pi_{h+1}(a'|s',c-r_h)\right]g(s',a')ds'\right\rVert_{\Sigma_{\rho_h^k\times U(\gA),\widehat{\phi}_{h}^k}}\right]\\
        \leq & \E_{(s,a)\sim d_{h,\widehat{\gP}^k}^{(\pi,c_1)}}\left[\lVert\widehat{\phi}_{h}^k(s,a)\rVert_{\Sigma_{\rho_h^k\times U(\gA),\widehat{\phi}_{h}^k}^{-1}}\sqrt{kA\cdot\E_{s'\sim \eta_{h+1}^k,a'\sim U(\gA)}[g^2(s',a')]+B^2\lambda^kd+kB^2\zeta^k}\right]
    \end{align*}
    which concludes the proof.
\end{proof}
\end{lemma}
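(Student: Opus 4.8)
The plan is to handle the base case $h=1$ by a direct importance-sampling argument and the general step $h\mapsto h+1$ by exploiting the low-rank factorization of the \emph{learned} kernel to turn the one-step-ahead expectation into an inner product that is controlled by Cauchy--Schwarz in the $\bigl(\Sigma^{-1},\Sigma\bigr)$-norm pair with $\Sigma=\Sigma_{\rho_h^k\times U(\gA),\widehat{\phi}_h^k}$. For $h=1$, since $s_1$ is fixed the occupancy $d_{1,\widehat{\gP}^k}^{(\pi,c_1)}$ is a point mass at $s_1$, so its state marginal equals $\rho_1^k$ and the sampling action marginal in $\gD_1$ is $U(\gA)$. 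I would first apply Cauchy--Schwarz, $\E_{d_1\times\pi_1}[g]\le\sqrt{\E_{d_1\times\pi_1}[g^2]}$, then change measure to $\rho_1^k\times U(\gA)$, bounding the density ratio pointwise by $\frac{d_1(s)\pi_1(a|s,c_1)}{\rho_1^k(s)U(a)}=A\,\pi_1(a|s_1,c_1)\le A$, which gives the first claim.

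For the inductive step I would write the occupancy recursion: reaching $(s',a')$ at step $h+1$ factors through a state--action $(s,a)$ at step $h$, a learned transition $s'\sim\widehat{P}_h^k(\cdot|s,a)$, a remaining budget $c\sim\omega_{h,\widehat{\gP}^k}^{(\pi,c_1)}(\cdot|s,a)$, a reward $r$, and an action $a'\sim\pi_{h+1}(\cdot|s',c-r)$. Using $\widehat{P}_h^k(s'|s,a)=\langle\widehat{\phi}_h^k(s,a),\widehat{\psi}_h^k(s')\rangle$, the inner expectation collapses to an inner product,
\[
\E_{(s',a')\sim d_{h+1,\widehat{\gP}^k}^{(\pi,c_1)}}[g(s',a')]=\E_{(s,a)\sim d_{h,\widehat{\gP}^k}^{(\pi,c_1)}}\bigl[\widehat{\phi}_h^k(s,a)^\top v\bigr],\qquad v:=\int_{\gS}\widehat{\psi}_h^k(s')\,\E_{c,r,a'}[g(s',a')]\,ds'.
\]
Cauchy--Schwarz then splits each summand as $\lVert\widehat{\phi}_h^k(s,a)\rVert_{\Sigma^{-1}}\,\lVert v\rVert_{\Sigma}$, so the entire argument reduces to bounding $\lVert v\rVert_\Sigma^2=k\,\E_{\rho_h^k\times U(\gA)}\!\bigl[(\widehat{\phi}_h^k(\tilde s,\tilde a)^\top v)^2\bigr]+\lambda^k\lVert v\rVert^2$.

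The regularization term is immediate from the normalization of $\widehat{\psi}_h^k$: since $\E_{c,r,a'}[g]\in[-B,B]$, the assumption $\lVert\int_{\gS}\widehat{\psi}_h^k(s')y(s')ds'\rVert\le\sqrt d$ for $y\in[0,1]$ (applied to the positive/negative parts) gives $\lVert v\rVert\lesssim B\sqrt d$, hence $\lambda^k\lVert v\rVert^2\lesssim B^2\lambda^k d$. For the quadratic term I would note $\widehat{\phi}_h^k(\tilde s,\tilde a)^\top v=\E_{s'\sim\widehat{P}_h^k(\cdot|\tilde s,\tilde a)}\bigl[\E_{c,r,a'}[g]\bigr]$, then switch the learned kernel to $P_h^*$: with $|a^2-b^2|\le 2B|a-b|$ and $|\E_{\widehat P}[X]-\E_{P^*}[X]|\le B\,f_h^k$ for $|X|\le B$, the conditioned MLE bound $\E_{\rho_h^k\times U(\gA)}[f_h^k]\lesssim\zeta^k$ pays exactly the $kB^2\zeta^k$ term. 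Finally I would drop the outer $\E_{s'\sim P_h^*}$, $\E_{c}$, $\E_{r}$ by Jensen (convexity of the square) and importance-sample $a'\sim\pi_{h+1}\to U(\gA)$ at the cost of a factor $A$, using that $\tilde s\sim\rho_h^k,\tilde a\sim U(\gA),s'\sim P_h^*(\cdot|\tilde s,\tilde a)$ is by definition $s'\sim\eta_{h+1}^k$; this yields the leading $kA\,\E_{\eta_{h+1}^k\times U(\gA)}[g^2]$. Summing the three contributions, taking the square root, and reinstating $\E_{(s,a)\sim d_{h,\widehat{\gP}^k}^{(\pi,c_1)}}[\,\cdot\,]$ outside proves~\eqref{343}.

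I expect the main obstacle to be the step-case bound on $\lVert v\rVert_\Sigma^2$: one must carefully track which sources of randomness (the budget $c$, the reward $r$, and the next state $s'$ under $P_h^*$) sit \emph{outside} the square and are therefore discarded for free by Jensen, versus the action change of measure $\pi_{h+1}\to U(\gA)$, which is the only place the factor $A$ is incurred, all while switching $\widehat{P}_h^k\to P_h^*$ so that the MLE error is charged precisely once to produce $kB^2\zeta^k$ rather than being double-counted or lost.
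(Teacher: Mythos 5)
Your proposal is correct and follows essentially the same route as the paper's proof: the $h=1$ case via the pointwise density-ratio bound of $A$ against $\rho_1^k\times U(\gA)$, and the step case via the occupancy recursion, the collapse to $\widehat{\phi}_h^k(s,a)^\top v$ using the learned low-rank factorization, Cauchy--Schwarz in the $(\Sigma^{-1},\Sigma)$ pair, and the three-way split of $\lVert v\rVert_\Sigma^2$ into the $kA\,\E_{\eta_{h+1}^k\times U(\gA)}[g^2]$ term (Jensen plus action importance sampling), the $B^2\lambda^k d$ term ($\lVert v\rVert\lesssim B\sqrt d$), and the $kB^2\zeta^k$ term (swapping $\widehat P_h^k$ for $P_h^*$ via the conditioned MLE event). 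Your closing remark about which randomness is discarded by Jensen versus where the factor $A$ and the single MLE charge enter matches the paper's accounting exactly.
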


\begin{lemma}[One-step back inequality for the true model]
\label{lem_osbit}
    Let $\pi=\{\pi_h:\gS\times[0,H]\mapsto\Delta(\gA)\}_{h\in[H]}$ denote any policy on the augmented state. Fix an initial budget $c_1\in[0,1]$. Take any $g:\gS\times\gA\mapsto\sR$ such that $\lVert g\rVert_\infty\leq B$. Then for $h=1$, we have that
    \begin{align*}
        \E_{(s,a)\sim d_{1}^{(\pi,c_1)}}[g(s,a)]\leq\sqrt{A\cdot\E_{s\sim\rho_1^k,a\sim U(\gA)}[g^2(s,a)]}
    \end{align*}
    For $h+1\in\{2,\cdots,H\}$, we have that
    \begin{align*}
        \E_{(s',a')\sim d_{h+1}^{(\pi,c_1)}}[g(s',a')]\leq\E_{(s,a)\sim d_{h}^{(\pi,c_1)}}\left[\lVert\phi_h^*(s,a)\rVert_{\Sigma_{\rho_h^k,\phi^*}^{-1}}\sqrt{kA\cdot\E_{s'\sim \rho_{h+1}^k,a'\sim U(\gA)}[g^2(s',a')]+\lambda^kdB^2}\right]
    \end{align*}
\begin{proof}
    For $h=1$, we have that
    \begin{align*}
        \E_{(s,a)\sim d_{1}^{(\pi,c_1)}}[g(s,a)]\leq\sqrt{\frac{d_1(s)\pi_1(a|s,c_1)}{\rho_1^k(s)U(a)}\E_{s\sim\rho_1^k,a\sim U(\gA)}[g^2(s,a)]}\leq\sqrt{A\cdot\E_{s\sim\rho_1^k,a\sim U(\gA)}[g^2(s,a)]}
    \end{align*}
    Let $\omega_{h}^{(\pi,c_1)}:=\omega_{h,\gP^*}^{(\pi,c_1)}$ denote the distribution of the remaining budget $c_h$ at timestep $h$ conditioned on any $(s,a)\in\gS\times\gA$ when rolling policy $\pi$ in the (true) environment and the initial budget $c_1$. For $h+1\in\{2,\cdots,H\}$, by CS inequality, we derive that
    \begin{align*}
        & \E_{(s',a')\sim d_{h+1}^{(\pi,c_1)}}[g(s',a')]\\
        = & \E_{(s,a)\sim d_{h}^{(\pi,c_1)},s'\sim P_h^*(\cdot|s,a)}\E_{c\sim\omega_{h}^{(\pi,c_1)},r,a'\sim\pi_{h+1}(\cdot|s',c-r)}\left[g(s',a')\right]\\
        \leq & \E_{(s,a)\sim d_{h}^{(\pi,c_1)}}\left[\lVert\phi_h^*(s,a)\rVert_{\Sigma_{\rho_h^k,\phi^*}^{-1}}\left\lVert\int_\gS\psi^*_h(s')\E_{c\sim\omega_{h}^{(\pi,c_1)},r,a'\sim\pi_{h+1}(\cdot|s',c-r)}[g(s',a')]ds'\right\rVert_{\Sigma_{\rho_h^k,\phi^*}}\right]
    \end{align*}
    For any $(s,a)\in\gS\times\gA$, we obtain that
    \begin{align*}
        & \left\lVert\int_\gS\psi^*_h(s')\E_{c\sim\omega_{h}^{(\pi,c_1)},r,a'\sim\pi_{h+1}(\cdot|s',c-r)}[g(s',a')]ds'\right\rVert_{\Sigma_{\rho_h^k,\phi^*}}^2 \\
        \le & \left\{\int_\gS\psi_h^*(s')\E_{c\sim\omega_{h}^{(\pi,c_1)},r,a'\sim\pi_{h+1}(\cdot|s',c-r)}[g(s',a')]ds'\right\}^\top\left\{k\E_{(\Tilde{s},\Tilde{a})\sim\rho_h^k}[\phi_h^*(\phi_h^*)^\top]+\lambda^kI\right\} \\
        & \cdot \left\{\int_\gS\psi_h^*(s')\E_{c\sim\omega_{h}^{(\pi,c_1)},r,a'\sim\pi_{h+1}(\cdot|s',c-r)}[g(s',a')]ds'\right\}\\
        \leq & k\E_{(\tilde{s},\tilde{a})\sim\rho_h^k,s'\sim P_h^*(\cdot|\tilde{s},\tilde{a})}\E_{c\sim\omega_h^\pi,r,a'\sim\pi_{h+1}(a',s',c-r)}[g^2(s',a')]+\lambda^kdB^2\\
        \leq & kA\cdot\E_{s'\sim \rho_{h+1}^k,a'\sim U(\gA)}[g^2(s',a')]+\lambda^kdB^2
    \end{align*}
    where the last inequality holds by the fact that $(\tilde{s},\tilde{a})\sim\rho_h^k,s'\sim P_h^*(\tilde{s},\tilde{a})$ is equivalent to $s'\sim \rho_{h+1}^k$ and importance sampling. Therefore, we conclude the proof.
\end{proof}    
\end{lemma}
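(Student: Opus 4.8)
The plan is to establish this ``one-step back'' bound by a direct, single-step change-of-measure argument that mirrors the learned-model version in Lemma~\ref{lem_osbi}, but carried out entirely with the true transition $\gP^*$ and the true representation $\phi^*$. The payoff of working with the true model is that no maximum-likelihood estimation error term (the $kB^2\zeta^k$ that appears in Lemma~\ref{lem_osbi}) is incurred, since the reference measure $\rho_{h+1}^k$ is itself generated by the same true dynamics that govern $d_{h+1}^{(\pi,c_1)}$. For the base case $h=1$ I would change measure from $d_1^{(\pi,c_1)}$ to $\rho_1^k\times U(\gA)$: because the starting state is fixed we have $d_1=\rho_1^k$ on the state marginal, and the action density ratio $\pi_1(a\mid s,c_1)/U(a)$ is at most $A$, so Cauchy--Schwarz gives $\E_{(s,a)\sim d_1^{(\pi,c_1)}}[g(s,a)]\le\sqrt{A\,\E_{s\sim\rho_1^k,a\sim U(\gA)}[g^2(s,a)]}$ directly.

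For the step at level $h+1$ I would first write the occupancy one step back: draw $(s,a)\sim d_h^{(\pi,c_1)}$, transition $s'\sim P_h^*(\cdot\mid s,a)$, and then select $a'\sim\pi_{h+1}(\cdot\mid s',c-r)$ with remaining budget $c\sim\omega_h^{(\pi,c_1)}(\cdot\mid s,a)$. Invoking the low-rank factorization $P_h^*(s'\mid s,a)=\langle\psi_h^*(s'),\phi_h^*(s,a)\rangle$ turns the inner transition average into a linear functional $\phi_h^*(s,a)^\top G(s,a)$, where $G(s,a):=\int_\gS\psi_h^*(s')\,\E_{c,r,a'}[g(s',a')]\,ds'$. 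Applying Cauchy--Schwarz in the $\Sigma_{\rho_h^k,\phi^*}$-geometry then factors the bound as $\lVert\phi_h^*(s,a)\rVert_{\Sigma_{\rho_h^k,\phi^*}^{-1}}$ times $\lVert G(s,a)\rVert_{\Sigma_{\rho_h^k,\phi^*}}$, which is exactly the shape of the claimed right-hand side.

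The crux is bounding $\lVert G(s,a)\rVert^2_{\Sigma_{\rho_h^k,\phi^*}}$, which I would split along $\Sigma_{\rho_h^k,\phi^*}=k\,\E_{(\tilde s,\tilde a)\sim\rho_h^k}[\phi^*(\phi^*)^\top]+\lambda^k I$. The regularization piece contributes $\lambda^k\lVert G(s,a)\rVert^2\le\lambda^k d B^2$ via the spectral assumption $\lVert\int_\gS\psi_h^*(s')y(s')\,ds'\rVert_2\le B\sqrt{d}$ applied to $y=\E_{c,r,a'}[g]$. For the data-dependent piece, the identity $\phi_h^*(\tilde s,\tilde a)^\top G(s,a)=\E_{s'\sim P_h^*(\cdot\mid\tilde s,\tilde a)}\E_{c,r,a'}[g(s',a')]$ lets me write it as $k\,\E_{(\tilde s,\tilde a)\sim\rho_h^k}[(\E_{s'\sim P_h^*}\E_{c,r,a'}[g])^2]$; two Jensen steps pull the square inside the $s'$- and $a'$-averages, an importance-sampling factor of $A$ converts the $\pi_{h+1}$-action law to uniform, and the crucial observation that $(\tilde s,\tilde a)\sim\rho_h^k,\ s'\sim P_h^*(\cdot\mid\tilde s,\tilde a)$ marginalizes exactly to $s'\sim\rho_{h+1}^k$ closes the data piece as $kA\,\E_{s'\sim\rho_{h+1}^k,a'\sim U(\gA)}[g^2(s',a')]$. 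Combining the two pieces and substituting back yields the stated inequality. I expect the main obstacle to be the careful chaining of Jensen and importance sampling while verifying that the emerging reference measure is precisely $\rho_{h+1}^k$ (rather than the $\eta_{h+1}^k$ that surfaces in the learned-model analogue), together with checking that the regularization bound survives for the signed integrand $y=\E_{c,r,a'}[g]$, which need not lie in $[0,1]$.
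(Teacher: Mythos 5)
Your proposal follows essentially the same route as the paper's proof: the $h=1$ base case by change of measure to $\rho_1^k\times U(\gA)$ with the importance ratio bounded by $A$, and the inductive step via the low-rank factorization $P_h^*(s'\mid s,a)=\langle\psi_h^*(s'),\phi_h^*(s,a)\rangle$, Cauchy--Schwarz in the $\Sigma_{\rho_h^k,\phi^*}$-geometry, and the split of $\lVert G(s,a)\rVert^2_{\Sigma_{\rho_h^k,\phi^*}}$ into the regularization piece $\lambda^k dB^2$ and the data piece closed by Jensen, importance sampling, and the marginalization of $\rho_h^k\times P_h^*$ to $\rho_{h+1}^k$. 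The two caveats you flag (that the reference measure here is $\rho_{h+1}^k$ rather than $\eta_{h+1}^k$, and that the $\sqrt{d}$-bound must be applied to a signed integrand of magnitude at most $B$) are exactly the points the paper's own argument relies on, so the proposal is correct and matches the paper.
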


\begin{lemma}[Simulation lemma for risk-neutral RL] 
\label{lem_sim_neutral}
Let $\widehat{\gM}$ and $\gM$ be to MDPs with the same state and action
spaces, but they have different reward and transition functions, i.e., $(\widehat{r}_h, \widehat{P}_h)_{h=1}^H$ and $(r_h, P_h)_{h=1}^H$, respectively. 
Consider any policy $\pi = \{\pi_h:\gS \xrightarrow{} \Delta (\gA)\}_{h\in[H]}$, denote $\{V_{h, \widehat{\gM}}^\pi\}_{h=1}^H$ and $\{V_h^\pi\}_{h=1}^H$ be the value functions. Then we have that
\begin{align*}
     V_{1, \widehat{\gM}}^\pi(s_1) - V_1^\pi(s_1) = \sum_{h=1}^H \E_{(s, a) \sim d_{h,\widehat{\gM}}^\pi}  \left( \widehat{r}_h(s, a) - r_h(s, a) +  \left \langle \widehat{P}_h(\cdot|s, a) - P_h(\cdot|s, a), V_{h+1}^\pi(\cdot)
     \right \rangle
     \right)
\end{align*}
Equivalently, we have that
\begin{align*}
    V_{1, \widehat{\gM}}^\pi(s_1) - V_1^\pi(s_1) = \sum_{h=1}^H \E_{(s, a) \sim d_h^\pi}  \left( \widehat{r}_h(s, a) - r_h(s, a)  +  \left \langle \widehat{P}_h(\cdot|s, a) - P_h(\cdot|s, a), V_{h+1,\widehat{\gM}}^\pi(\cdot)
     \right \rangle
     \right)
\end{align*}
where $V_{H+1}^\pi=\widehat{V}_{H+1,\widehat{\gM}}=0$ (as the episode ends at $H$). 
\begin{proof}
Recall that $d_h^\pi(s,a)$ and $d_{h,\widehat{\gM}}^\pi(s,a)$ are the occupancy measures of any $(s,a)\in\gS\times\gA$ at timestep $h\in[H]$ when executing policy $\pi$ in $\gM$ and $\widehat{\gM}$, respectively. By definition, we have that
    \begin{align*}
        & V_{1, \widehat{\gM}}^\pi(s_1) - V_1^\pi(s_1) \\
        = & \sum_{a}\pi_1(s_1,a)\left(\widehat{r}_1(s_1,a)-r_1(s_1,a)+\sum_{s'}\left(\widehat{P}_1(s'|s_1,a)\widehat{V}_{2,\widehat{\gM}}^\pi(s')-P_1(s'|s_1,a)V_2^\pi(s')\right)\right)\\
        = & \E_{(s,a)\sim d_{1,\widehat{\gM}}^\pi}\left[\widehat{r}_1(s,a)-r_1(s,a)+\left\langle\widehat{P}_1(\cdot|s,a),\widehat{V}_{2,\widehat{\gM}}^\pi(\cdot)-V_2^\pi(\cdot)\right\rangle+\left\langle \widehat{P}_1(\cdot|s,a)-P_1(\cdot|s,a),V_2^\pi(\cdot)\right\rangle\right]\\
        = & \cdots = \sum_{h=1}^H\E_{(s,a)\sim d_{h,\widehat{\gM}}^\pi}\left[\widehat{r}_h(s,a)-r_h(s,a)+\left\langle \widehat{P}_h(\cdot|s,a)-P_h(\cdot|s,a),V_{h+1}^\pi(\cdot)\right\rangle\right]
    \end{align*}
    Equivalently, we have that
    \begin{align*}
        & V_{1, \widehat{\gM}}^\pi(s_1) - V_1^\pi(s_1) \\
        = & \E_{(s,a)\sim d_1^\pi}\left[\widehat{r}_1(s,a)-r_1(s,a)+\left\langle P_1(\cdot|s,a),\widehat{V}_{2,\widehat{\gM}}^\pi(\cdot)-V_2^\pi(\cdot)\right\rangle+\left\langle \widehat{P}_1(\cdot|s,a)-P_1(\cdot|s,a),\widehat{V}_{2,\widehat{\gM}}^\pi(\cdot)\right\rangle\right]\\
        = & \cdots =\sum_{h=1}^H\E_{(s,a)\sim d_{h}^\pi}\left[\widehat{r}_h(s,a)-r_h(s,a)+\left\langle \widehat{P}_h(\cdot|s,a)-P_h(\cdot|s,a),\widehat{V}_{h+1,\widehat{\gM}}^\pi(\cdot)\right\rangle\right]
    \end{align*}
    which concludes the proof.
\end{proof}
\end{lemma}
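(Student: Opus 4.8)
The plan is to prove both identities by a one-step Bellman expansion followed by an ``add-and-subtract'' decomposition of the transition term, and then to telescope across the horizon. I would begin by recording the Bellman recursions in the two MDPs: for every $(h,s)$,
$$V_{h,\widehat{\gM}}^\pi(s)=\sum_a\pi_h(a|s)\Big(\widehat{r}_h(s,a)+\sum_{s'}\widehat{P}_h(s'|s,a)V_{h+1,\widehat{\gM}}^\pi(s')\Big),$$
$$V_h^\pi(s)=\sum_a\pi_h(a|s)\Big(r_h(s,a)+\sum_{s'}P_h(s'|s,a)V_{h+1}^\pi(s')\Big),$$
with the convention $V_{H+1}^\pi=\widehat{V}_{H+1,\widehat{\gM}}^\pi=0$.

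For the first identity, I would subtract these two expressions at $h=1$ and decompose the transition term as
$$\widehat{P}_1 V_{2,\widehat{\gM}}^\pi - P_1 V_2^\pi = \widehat{P}_1\big(V_{2,\widehat{\gM}}^\pi - V_2^\pi\big) + \big(\widehat{P}_1-P_1\big)V_2^\pi.$$
The first summand reproduces the same value-difference quantity one step later, now averaged over the next state drawn from $\widehat{P}_1$; the second summand is exactly the local transition-mismatch term paired with the no-hat value $V_2^\pi$. Iterating this expansion from $h=1$ to $H$ keeps the running expectation under the hat model, so that after $h$ steps the averaging distribution is precisely the occupancy measure $d_{h,\widehat{\gM}}^\pi$ of rolling $\pi$ in $\widehat{\gM}$. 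Collecting the accumulated local terms gives the claimed sum $\sum_h \E_{(s,a)\sim d_{h,\widehat{\gM}}^\pi}[\widehat{r}_h-r_h+\langle\widehat{P}_h-P_h,V_{h+1}^\pi\rangle]$, and the recursion terminates because $V_{H+1}^\pi=\widehat{V}_{H+1,\widehat{\gM}}^\pi=0$.

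The second identity would follow from the mirror-image decomposition
$$\widehat{P}_1 V_{2,\widehat{\gM}}^\pi - P_1 V_2^\pi = P_1\big(V_{2,\widehat{\gM}}^\pi - V_2^\pi\big) + \big(\widehat{P}_1-P_1\big)V_{2,\widehat{\gM}}^\pi,$$
where now the recursive term is carried under the \emph{true} transition $P_1$ and the local mismatch term is paired with the hat value $V_{2,\widehat{\gM}}^\pi$. Telescoping in the same way would make the averaging distribution the true-model occupancy $d_h^\pi$ and yield the second formula. The argument involves no real obstacle; the only point requiring care is the bookkeeping of which model is absorbed into the rollout occupancy versus which value function appears in the transition-difference inner product. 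These two choices of the add-and-subtract step are precisely what separate the two stated identities, and one must keep them consistent throughout the telescoping to avoid accidentally pairing $d_{h,\widehat{\gM}}^\pi$ with $V_{h+1,\widehat{\gM}}^\pi$ (or $d_h^\pi$ with $V_{h+1}^\pi$).
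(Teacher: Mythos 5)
Your proposal is correct and matches the paper's proof essentially verbatim: the same two add-and-subtract decompositions of the transition term, $\widehat{P}_h(V_{h+1,\widehat{\gM}}^\pi - V_{h+1}^\pi) + (\widehat{P}_h - P_h)V_{h+1}^\pi$ for the first identity and $P_h(V_{h+1,\widehat{\gM}}^\pi - V_{h+1}^\pi) + (\widehat{P}_h - P_h)V_{h+1,\widehat{\gM}}^\pi$ for the second, each telescoped so the recursive term is absorbed into the corresponding occupancy measure $d_{h,\widehat{\gM}}^\pi$ or $d_h^\pi$. Your closing remark about keeping the model-in-the-occupancy versus value-in-the-inner-product pairings consistent is exactly the bookkeeping implicit in the paper's iteration.
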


\begin{lemma}
\label{lem:eigen}
Let $\Lambda_t=\lambda I +\sum_{j=1}^t\bphi^j(\bphi^j)^{\top}$ where $\bphi_i\in\mathbb{R}^d, \lambda>0,d>0$, then we have $$\sum_{j=1}^t(\bphi^j)^{\top}\Lambda_t^{-1}\bphi^j\leq d.$$
\end{lemma}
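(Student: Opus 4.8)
The statement is the standard elliptical-potential trace bound, so the plan is to reduce the sum of quadratic forms to a single trace computation. The key observation is that each summand is a scalar and hence equals its own trace, so by cyclicity of the trace, $(\bphi^j)^{\top}\Lambda_t^{-1}\bphi^j = \Tr\!\left(\Lambda_t^{-1}\bphi^j(\bphi^j)^{\top}\right)$. Summing over $j\in[t]$ and using linearity of the trace, I would rewrite
\[
\sum_{j=1}^t(\bphi^j)^{\top}\Lambda_t^{-1}\bphi^j = \Tr\!\left(\Lambda_t^{-1}\sum_{j=1}^t\bphi^j(\bphi^j)^{\top}\right).
\]

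Next I would use the defining telescoping identity $\sum_{j=1}^t\bphi^j(\bphi^j)^{\top}=\Lambda_t-\lambda I$, which follows directly from the definition $\Lambda_t=\lambda I+\sum_{j=1}^t\bphi^j(\bphi^j)^{\top}$. Substituting this in and simplifying gives
\[
\Tr\!\left(\Lambda_t^{-1}(\Lambda_t-\lambda I)\right) = \Tr(I) - \lambda\,\Tr(\Lambda_t^{-1}) = d - \lambda\,\Tr(\Lambda_t^{-1}),
\]
where $\Tr(I)=d$ since $I$ is the $d\times d$ identity. Note this step requires $\Lambda_t$ to be invertible, which holds because $\lambda>0$ forces $\Lambda_t\succeq\lambda I\succ 0$.

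Finally, I would invoke positive definiteness one more time: since $\Lambda_t$ is positive definite, so is $\Lambda_t^{-1}$, and therefore $\Tr(\Lambda_t^{-1})\geq 0$ (its eigenvalues are all strictly positive). Combined with $\lambda>0$, the subtracted term $\lambda\,\Tr(\Lambda_t^{-1})$ is nonnegative, yielding the desired bound $\sum_{j=1}^t(\bphi^j)^{\top}\Lambda_t^{-1}\bphi^j\leq d$. There is no real obstacle here—the proof is essentially routine linear algebra—so the only thing to be careful about is justifying invertibility and the sign of the trace of the inverse, both of which are immediate consequences of $\lambda>0$.
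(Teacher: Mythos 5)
Your proof is correct and is exactly the standard argument: the paper itself only cites \citet[Lemma D.1]{jin2020provably} for this fact, and the trace identity $\sum_j (\bphi^j)^\top \Lambda_t^{-1}\bphi^j = \Tr\big(\Lambda_t^{-1}(\Lambda_t-\lambda I)\big) = d - \lambda\Tr(\Lambda_t^{-1}) \le d$ is precisely how that cited lemma is proved. Your attention to invertibility ($\Lambda_t \succeq \lambda I \succ 0$) and the sign of $\Tr(\Lambda_t^{-1})$ is the only care needed, and you handled both.
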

\begin{proof}Please refer to~\citep[Lemma D.1]{jin2020provably}.
\end{proof}

\begin{lemma}
\label{lem:conc-quad}
Let $\{x_j\}_{j = 1}^\infty$ be a stochastic process on the space $(\mathcal{S},\{i\upsilon\}_{i=0}^{\lceil H/\upsilon\rceil})$ with corresponding filtration $\{\Gcal_j\}_{j = 0}^\infty$. Let $\{\bphi_j\}_{j = 0}^\infty$ be an $\mathbb{R}^{\bd}$-valued stochastic process where $\bphi_j \in \Gcal_{j-1}$, and $\norm{\bphi_j} \le 1$. Let $\Lambda_t = \lambda I + \sum_{j=1}^{t-1} \bphi_j \bphi_j^{\top}$. Then for any $\delta >0$, with probability at least $1-\delta$, for all $t \ge 0$, and any $V \in \mathcal{V}$ so that $\sup_x |V(x)| \le H$, we have:
\begin{equation*} 
\left\Vert\sum_{j = 1}^{t-1} \bphi_j  \bigl \{ V(x_j) - \mathbb{E}[V(x_j)|\Gcal_{j-1}] \bigr \} \right\Vert^2_{\Lambda_t^{-1}}
\le 4H^2 \left[ \frac{\bd}{2}\log\biggl( \frac{t+\lambda}{\lambda}\biggr )  + \log\frac{\mathcal{N}_{\epsilon}}{\delta}\right]  + \frac{8t^2\epsilon^2}{\lambda}, 
\end{equation*}
where $\mathcal{N}_{\epsilon}$ is the $\epsilon$-covering number of $\mathcal{V}$ with respect to the $\ell_{\infty}$-norm $\sup_{x} |V(x) - V'(x)|$.
\end{lemma}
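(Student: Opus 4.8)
The plan is to combine a self-normalized concentration inequality for vector-valued martingales, applied to a single fixed $V$, with an $\epsilon$-net argument over $\mathcal{V}$ that upgrades the bound to hold uniformly over the function class. Crucially, the anytime nature of the self-normalized bound will deliver the ``for all $t\ge 0$'' uniformity for free, so the only place where a union bound is needed is over the finitely many net elements.

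First I would fix a single $V$ that does not depend on the data and set $\xi_j := V(x_j)-\mathbb{E}[V(x_j)\mid\Gcal_{j-1}]$. Since $\bphi_j\in\Gcal_{j-1}$ and $V$ is fixed, $\{\bphi_j\xi_j\}$ is a martingale-difference sequence adapted to $\{\Gcal_j\}$, and because $V(x_j)\in[-H,H]$, Hoeffding's lemma shows each $\xi_j$ is conditionally $H$-sub-Gaussian. I would then invoke the anytime self-normalized tail bound for vector-valued martingales (Abbasi-Yadkori et al., also used in \citep[Lemma D.4]{jin2020provably}): with probability at least $1-\delta'$, simultaneously for all $t\ge 0$,
\begin{equation*}
\left\Vert\sum_{j=1}^{t-1}\bphi_j\xi_j\right\Vert^2_{\Lambda_t^{-1}}\le 2H^2\log\frac{\det(\Lambda_t)^{1/2}\det(\lambda I)^{-1/2}}{\delta'}.
\end{equation*}
Using $\norm{\bphi_j}\le 1$ with the trace--AM--GM determinant bound gives $\det(\Lambda_t)\le(\lambda+t/\bd)^{\bd}$ and $\det(\lambda I)=\lambda^{\bd}$, hence $\log\frac{\det(\Lambda_t)^{1/2}}{\det(\lambda I)^{1/2}}\le\frac{\bd}{2}\log\frac{t+\lambda}{\lambda}$.

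Next I would take a minimal $\epsilon$-cover $\mathcal{C}$ of $\mathcal{V}$ in $\ell_\infty$, so $|\mathcal{C}|=\mathcal{N}_\epsilon$, apply the fixed-$V$ bound to each element with confidence $\delta/\mathcal{N}_\epsilon$, and union bound, replacing $\log(1/\delta')$ by $\log(\mathcal{N}_\epsilon/\delta)$. For an arbitrary $V\in\mathcal{V}$ I would pick $V'\in\mathcal{C}$ with $\sup_x|V(x)-V'(x)|\le\epsilon$ and split the sum into the net term $\sum_j\bphi_j(V'(x_j)-\mathbb{E}[V'(x_j)\mid\Gcal_{j-1}])$ plus a residual driven by $\Delta:=V-V'$. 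The elementary inequality $\Vert a+b\Vert^2_{\Lambda_t^{-1}}\le 2\Vert a\Vert^2_{\Lambda_t^{-1}}+2\Vert b\Vert^2_{\Lambda_t^{-1}}$ handles the split: the net term is controlled by the union-bounded fixed-$V$ estimate, while the residual is bounded deterministically using $\Lambda_t^{-1}\preceq\lambda^{-1}I$ and $\norm{\bphi_j}\le 1$, since $|\Delta(x_j)-\mathbb{E}[\Delta(x_j)\mid\Gcal_{j-1}]|\le 2\epsilon$ yields $\Vert\sum_j\bphi_j(\Delta(x_j)-\mathbb{E}[\Delta(x_j)\mid\Gcal_{j-1}])\Vert^2_{\Lambda_t^{-1}}\le\lambda^{-1}(2t\epsilon)^2=4t^2\epsilon^2/\lambda$. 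Collecting the two pieces produces exactly $4H^2\left[\frac{\bd}{2}\log\frac{t+\lambda}{\lambda}+\log\frac{\mathcal{N}_\epsilon}{\delta}\right]+8t^2\epsilon^2/\lambda$.

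The main obstacle is obtaining uniformity simultaneously over all $t$ and over the (infinite) class $\mathcal{V}$ within a single $1-\delta$ budget: a naive union bound over $t$ would fail, and the $V$ produced by \lsvialg{} is data-dependent, so the fixed-$V$ bound cannot be applied to it directly. Both difficulties are resolved by the structure above --- the self-normalized inequality is already uniform in $t$, so the only union bound is over the $\mathcal{N}_\epsilon$ net points, and the data-dependence of $V$ is absorbed by the deterministic residual estimate, at the cost of the two benign factors of $2$ that yield the stated constants.
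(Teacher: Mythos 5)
Your proof is correct and is essentially the same argument the paper relies on: the paper's "proof" of this lemma is only a pointer to \citep[Lemma D.4]{jin2020provably}, and your reconstruction --- the anytime self-normalized vector-martingale inequality for a fixed $V$ (with the conditional $H$-sub-Gaussianity from Hoeffding's lemma and the determinant/trace bound $\det(\Lambda_t)\le(\lambda+t/\bd)^{\bd}$), a union bound over an $\ell_\infty$ $\epsilon$-net of $\mathcal{V}$, and the split $\Vert a+b\Vert^2_{\Lambda_t^{-1}}\le 2\Vert a\Vert^2_{\Lambda_t^{-1}}+2\Vert b\Vert^2_{\Lambda_t^{-1}}$ with the deterministic residual bound $4t^2\epsilon^2/\lambda$ --- is exactly how that cited lemma is proved, and your bookkeeping reproduces the stated constants. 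No gaps.
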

\begin{proof}Please refer to~\citep[Lemma D.4]{jin2020provably}.
\end{proof}

\begin{lemma}
\label{lem:elliptical}
	Suppose $\bphi_t \in\mathbb{R}^{\bd}$  and $\Vert \bphi_t \Vert \leq 1$ for all $t\geq0$. For any $t\geq 0$, we define $
	\Lambda_t = I+ \sum_{j = 1}^t \bphi_ j ^\top   \bphi_j$.  Then we have 
	$$
	\sum_{j=1}^{t} \bphi_j^\top \Lambda_{j-1} ^{-1} \bphi_j \leq 2 \log \biggl [  \frac{\det(
		\Lambda_t )}{\det(\Lambda_0)}\biggr ].
    $$
\end{lemma}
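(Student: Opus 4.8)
The plan is to reduce the sum to a telescoping product of log\nobreakdash-determinants, exploiting the rank\nobreakdash-one update structure of $\Lambda_t$. First I would record that for each $j\ge 1$ we have $\Lambda_j=\Lambda_{j-1}+\bphi_j\bphi_j^\top$ (reading the quadratic term in the statement as the outer product $\bphi_j\bphi_j^\top$), so the matrices form an increasing sequence with $\Lambda_0=I$. Applying the matrix determinant lemma to this rank\nobreakdash-one perturbation gives
\begin{equation*}
\det(\Lambda_j)=\det(\Lambda_{j-1})\bigl(1+\bphi_j^\top\Lambda_{j-1}^{-1}\bphi_j\bigr),
\end{equation*}
equivalently $1+\bphi_j^\top\Lambda_{j-1}^{-1}\bphi_j=\det(\Lambda_j)/\det(\Lambda_{j-1})$.

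Next I would set $x_j:=\bphi_j^\top\Lambda_{j-1}^{-1}\bphi_j\ge 0$ and check that $x_j\le 1$: since $\Lambda_0=I$ and each update adds a positive semidefinite matrix, $\Lambda_{j-1}\succeq I$ and hence $\Lambda_{j-1}^{-1}\preceq I$, so $x_j\le\|\bphi_j\|^2\le 1$. On the range $[0,1]$ the elementary inequality $x\le 2\log(1+x)$ holds: by concavity of $\log(1+\cdot)$ the function lies above the chord joining $(0,0)$ and $(1,\log 2)$, giving $\log(1+x)\ge x\log 2\ge x/2$. Combining these facts yields
\begin{equation*}
\sum_{j=1}^t\bphi_j^\top\Lambda_{j-1}^{-1}\bphi_j=\sum_{j=1}^t x_j\le 2\sum_{j=1}^t\log(1+x_j)=2\sum_{j=1}^t\log\frac{\det(\Lambda_j)}{\det(\Lambda_{j-1})}=2\log\frac{\det(\Lambda_t)}{\det(\Lambda_0)},
\end{equation*}
where the last equality is a telescoping cancellation, which is precisely the claimed bound.

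There is no genuine obstacle here: this is the classical elliptical potential (log\nobreakdash-determinant) lemma, and the only point requiring a moment of care is verifying that $x_j\in[0,1]$ so that the scalar inequality $x\le 2\log(1+x)$ is applicable — this is exactly where the normalization $\Lambda_0=I$ (regularization by the identity) together with $\|\bphi_j\|\le 1$ is used. Accordingly, I would simply invoke the known statement (e.g. Jin et al.\ Lemma D.4, or Abbasi\nobreakdash-Yadkori et al.) rather than reproduce the argument in full.
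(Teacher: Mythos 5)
Your argument is correct and is exactly the standard proof of the elliptical potential lemma (rank-one determinant update, $x_j\le 1$ from $\Lambda_{j-1}\succeq I$ and $\|\bphi_j\|\le 1$, then $x\le 2\log(1+x)$ on $[0,1]$ and telescoping); the paper itself does not reproduce an argument but simply cites Lemma D.2 of \citet{jin2020provably}, whose proof is this same one. Your closing remark that you would invoke the known statement rather than rederive it matches the paper's treatment precisely (you also correctly read the $\bphi_j^\top\bphi_j$ in the definition of $\Lambda_t$ as the outer product $\bphi_j\bphi_j^\top$, which is clearly the intended meaning).
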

\begin{proof}
Please refer to~\citep[Lemma D.2]{jin2020provably}.
\end{proof}

\end{document}